\documentclass{article}

\usepackage{microtype}
\usepackage{graphicx}
\usepackage{subcaption}
\usepackage{multirow}
\usepackage{booktabs} % for professional tables

\usepackage{hyperref}

\usepackage[ruled, vlined, noend]{algorithm2e}

\makeatletter
\let\listofalgorithms\@undefined
\makeatother

\usepackage[accepted]{style/icml2026}

\usepackage{amsmath}
\usepackage{amssymb}
\usepackage{mathtools}
\usepackage{amsthm}
\usepackage{mathrsfs}

\usepackage{enumitem}

\usepackage[most]{tcolorbox}

\usepackage[capitalize,noabbrev]{cleveref}

\theoremstyle{plain}
\newtheorem{theorem}{Theorem}[section]

\newtheorem{lemma}[theorem]{Lemma}

\theoremstyle{definition}
\newtheorem{definition}[theorem]{Definition}
\newtheorem{assumption}[theorem]{Assumption}
\theoremstyle{remark}

\usepackage[textsize=tiny]{todonotes}

\icmltitlerunning{Towards Efficient and Expressive Offline RL via Flow-Anchored Noise-conditioned Q-Learning}

\begin{document}

% --- User Defined Colors & Style ---
\definecolor{softred}{HTML}{FA8072}
\definecolor{softblue}{HTML}{6495ED}
\definecolor{softgray}{HTML}{A0A0A0}
\definecolor{softgreen}{HTML}{90EE90}
\definecolor{forestgreen}{HTML}{228B22}
\definecolor{deepforest}{HTML}{114611}
\hypersetup{
    citecolor=forestgreen,
    linkcolor=forestgreen,
    urlcolor=forestgreen
}

\twocolumn[
  \icmltitle{Towards Efficient and Expressive Offline RL via\\
  Flow-Anchored Noise-conditioned Q-Learning}

  % It is OKAY to include author information, even for blind submissions: the
  % style file will automatically remove it for you unless you've provided
  % the [accepted] option to the icml2026 package.

  % List of affiliations: The first argument should be a (short) identifier you
  % will use later to specify author affiliations Academic affiliations
  % should list Department, University, City, Region, Country Industry
  % affiliations should list Company, City, Region, Country

  % You can specify symbols, otherwise they are numbered in order. Ideally, you
  % should not use this facility. Affiliations will be numbered in order of
  % appearance and this is the preferred way.
  \icmlsetsymbol{equal}{*}

  \begin{icmlauthorlist}
    % \icmlauthor{Sungyoung Lee}{equal,yyy}
    % \icmlauthor{Firstname2 Lastname2}{equal,yyy,comp}
    \icmlauthor{Sungyoung Lee}{ut}
    \icmlauthor{Dohyeong Kim}{ind}
    \icmlauthor{Eshan Balachandar}{ut}
    \icmlauthor{Zelal Su Mustafaoglu}{ut}
    \icmlauthor{Keshav Pingali}{ut}
  \end{icmlauthorlist}

  % \icmlaffiliation{yyy}{Department of XXX, University of YYY, Location, Country}
  \icmlaffiliation{ut}{The University of Texas at Austin, Austin, TX, USA}
  \icmlaffiliation{ind}{Independent Researcher, Seoul, South Korea}

  \icmlcorrespondingauthor{Sungyoung Lee}{sylee@utexas.edu}
  \icmlcorrespondingauthor{Keshav K. Pingali}{pingali@cs.utexas.edu}
 
  % You may provide any keywords that you find helpful for describing your
  % paper; these are used to populate the "keywords" metadata in the PDF but
  % will not be shown in the document
  \icmlkeywords{Offline Reinforcement Learning, Generative Policies, Distributional Reinforcement Learning}

  \vskip 0.3in
]

% this must go after the closing bracket ] following \twocolumn[ ...

% This command actually creates the footnote in the first column listing the
% affiliations and the copyright notice. The command takes one argument, which
% is text to display at the start of the footnote. The \icmlEqualContribution
% command is standard text for equal contribution. Remove it (just {}) if you
% do not need this facility.

% Use ONE of the following lines. DO NOT remove the command.
% If you have no special notice, KEEP empty braces:
\printAffiliationsAndNotice{}  % no special notice (required even if empty)
% Or, if applicable, use the standard equal contribution text:
% \printAffiliationsAndNotice{\icmlEqualContribution}
\begin{abstract}
We propose Flow-Anchored Noise-conditioned Q-Learning (FAN), a highly efficient and high-performing offline reinforcement learning (RL) algorithm. Recent work has shown that expressive flow policies and distributional critics improve offline RL performance, but at a high computational cost. Specifically, flow policies require iterative sampling to produce a single action, and distributional critics require computation over multiple samples (e.g., quantiles) to estimate value. To address these inefficiencies while maintaining high performance, we introduce FAN. Our method employs a behavior regularization technique that uses a single flow policy iteration and requires a single Gaussian noise sample for distributional critics. Our theoretical analysis of convergence and performance bounds demonstrates that these simplifications not only improve efficiency but also lead to superior task performance. Experiments on robotic manipulation and locomotion tasks demonstrate that FAN achieves state-of-the-art performance while significantly reducing both training and inference runtimes. We release our code at \href{https://github.com/brianlsy98/FAN}{https://github.com/brianlsy98/FAN}.
\end{abstract}
\section{Introduction} \label{sec:intro}

Offline Reinforcement Learning (RL)~\cite{lange2012batch, levine2020offline} aims to learn a policy using only a fixed dataset of pre-collected interactions. This allows for the safe and efficient reuse of large historical data; however, the lack of online feedback prevents the agent from correcting errors, making it prone to value overestimation for actions outside the dataset state-action (behavior) distribution~\cite{fujimoto2019off, kumar2019stabilizing}.
Therefore, a core challenge in offline RL lies in maximizing returns while constraining the learned policy to the behavior policy that generated the data.
For effective constraints, recent work has adopted expressive algorithms for learning the policy and the value.

\begin{figure}[t]
    \centering
    \vskip -0.1in
    \includegraphics[width=\linewidth]{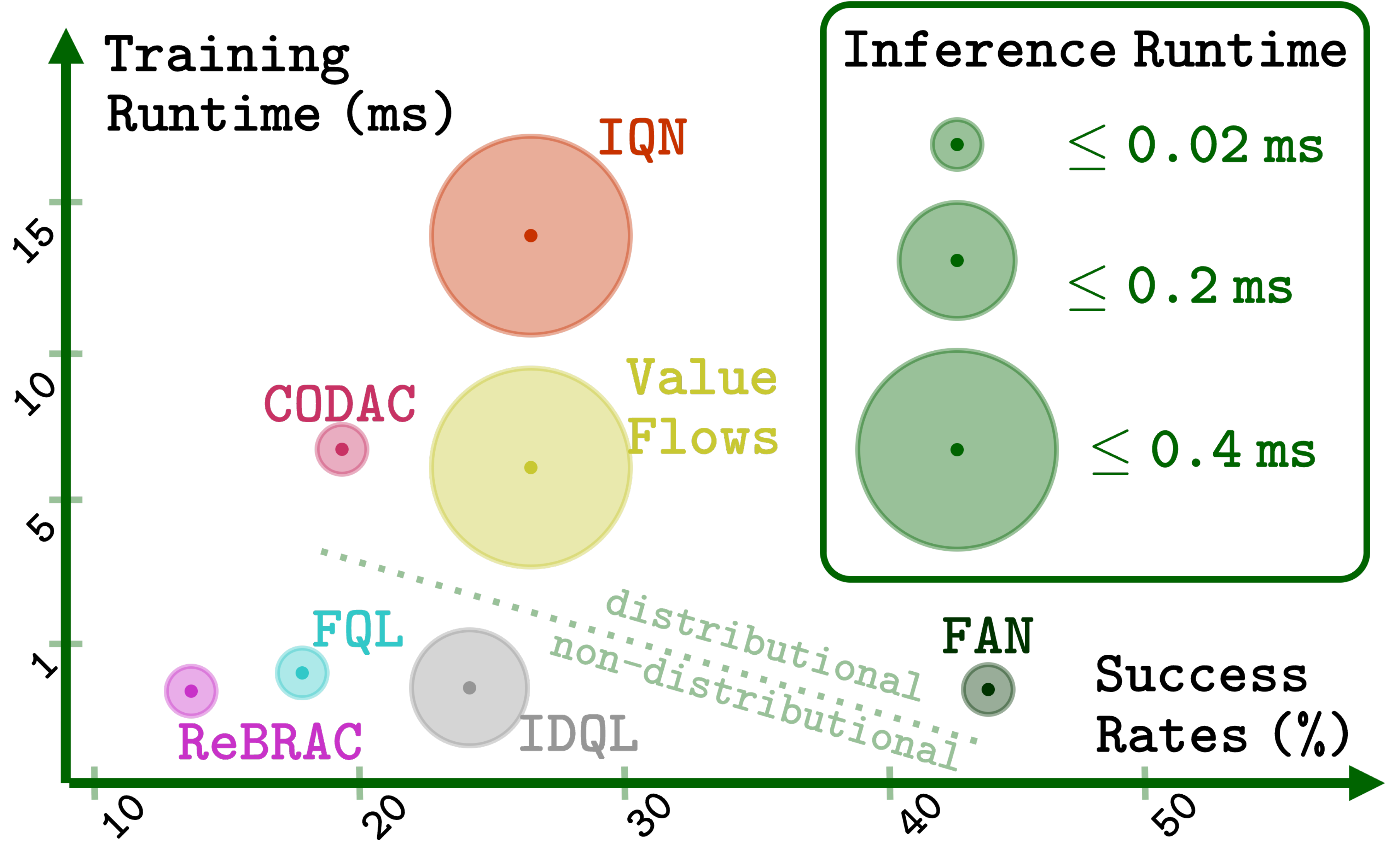}
    \caption{\textbf{Training Runtime per Batch vs. Average Success Rates} on five OGBench \texttt{puzzle-4x4-singleplay-v0} tasks. FAN performs the best with the highest computational efficiency.}
    \label{fig:perfvsruntime}
    \vskip -0.2in
\end{figure}

First, flow matching has been widely used for policy training~\cite{espinosa2025shortcut, wang2025one}. Unlike Gaussian-based approaches that are limited to unimodal distributions, flow policies can learn complex and multimodal dataset behaviors~\cite{park2025fql, tiofack2025guided}. This enables more expressive constraints for the policy, allowing it to outperform the offline dataset behavior~\cite{espinosa2025expressive, park2025scalable}.

Second, there have been approaches using distributional critics~\cite{ma2021codac, dong2025vf}, which learn the distribution of returns. These critics capture information that cannot be fully represented by expected returns, \textit{e.g.}, return uncertainty. This distributional expressivity is often achieved by modeling multiple statistics of the distribution via quantiles~\cite{dabney2018iqn, dabney2018qrdqn}, which represent the cumulative probability thresholds of the return distribution.

\begin{figure*}[t]
    \centering
    \vskip -0.05in
    \includegraphics[width=\textwidth]{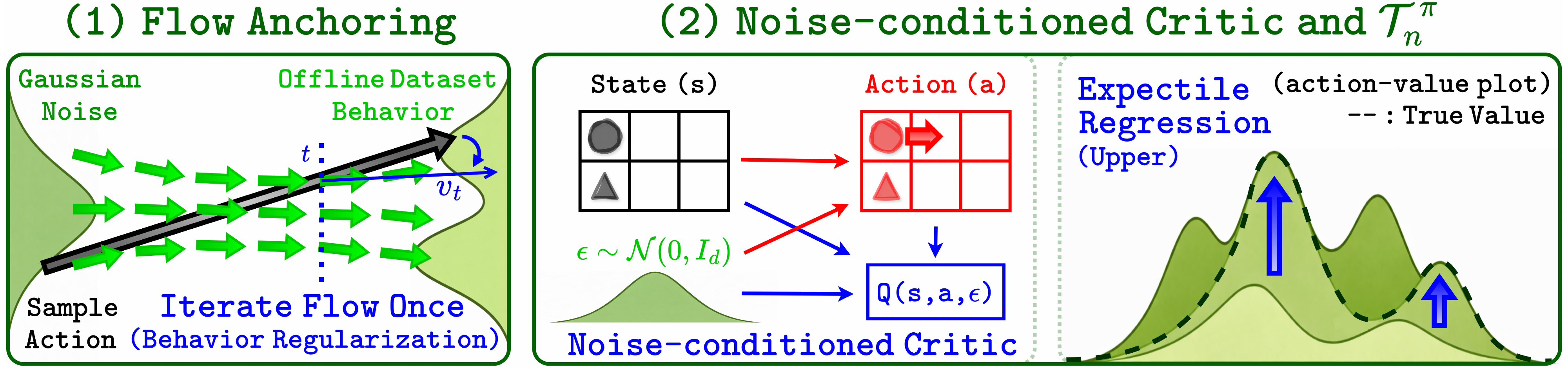}
    \caption{\textbf{Overview of FAN.} (\textit{Left}) Behavior regularization utilizes only a single flow policy iteration and is applied to both actor and critic updates. (\textit{Middle}) The distributional critic is conditioned on the same noise used for policy sampling. (\textit{Right}) The critic update incorporates an upper expectile regression to capture maximum possible distributional returns.}
    \label{fig:mainproblem}
    \vskip -0.15in
\end{figure*}

In this work, we focus on the \textit{computational efficiency} of these expressive training mechanisms. As seen in Figure~\ref{fig:perfvsruntime}, methods employing behavior flow policies and distributional critics are computationally expensive.
First, flow policies require multiple forward iterations to produce a single action, which increases the computational cost proportionally to the number of flow steps.
Second, distributional critics necessitate processing multiple samples ({\em e.g.}, quantiles), scaling the cost linearly with the number of samples.
This motivates the main question explored in our study:

    \textit{How can we leverage flow policies and distributional critics to achieve state-of-the-art offline RL performance,
    while simultaneously improving computational efficiency?}

Specifically, we investigate whether (1) behavior flow policies can remain effective with a single flow iteration, and (2) distributional critics can be trained using a single sample.

To this end, we propose \textbf{Flow-Anchored Noise-conditioned Q-Learning (FAN)}. First, FAN utilizes flow policies but restricts them to a single iteration for behavior regularization, a mechanism we term \textit{Flow Anchoring}. Second, FAN employs a \textit{Noise-conditioned Critic}, which captures distributional return information while being trainable using a single Gaussian noise sample. This critic is defined by the proposed operator \textit{$\mathcal{T}_n^\pi$} in Eq.\eqref{eq:noisebellman}, which tightly couples the policy and value functions through shared noise inputs. Experiments on D4RL and OGBench demonstrate that FAN achieves best or near-best performance while reducing training runtime by at least $5\times$ compared to prior distributional approaches. Furthermore, its inference speed is among the fastest, competitive even with non-distributional methods.

\textbf{Contributions.} We make three key contributions:
\begin{enumerate}[noitemsep, topsep=0pt]
    \item We propose \textbf{\textcolor{forestgreen}{Flow Anchoring}} to efficiently and expressively regularize the policy to the dataset behavior.
    \item We propose a \textbf{\textcolor{forestgreen}{noise-conditioned value function defined by the operator $\mathcal{T}^\pi_n$}} to efficiently capture expressive distributional return information.
    \item Our proposed algorithm, FAN, achieves high computational efficiency in both training and inference while simultaneously improving offline RL performance.
\end{enumerate}
\section{Related Work} \label{sec:related}

\textbf{Offline RL.}
In offline RL, the policy is trained to maximize the expected sum of rewards without further environment interactions. Given a fixed dataset, the primary challenge is to avoid distribution shift caused by value overestimation on out-of-distribution (OOD) actions. Prior work adopts behavior regularization~\cite{wu2019behavior, peng2019advantage, fujimoto2021minimalist, tarasov2023rebrac}, conservatism~\cite{kumar2020conservative}, in-sample learning~\cite{kostrikov2021iql, garg2023extreme, xu2023offline}, and more~\cite{chen2022offline, nikulin2023anti, sikchi2023dual, lee2025temporal} to constrain OOD actions to the dataset action support. In this work, FAN applies behavior regularization to constrain the policy to be similar to the dataset behavior.

\textbf{Diffusion and Flow Policies in Offline RL.} Recent work in offline RL has increasingly leveraged diffusion and flow policies to address the limitations of unimodal Gaussian policies. By solving the underlying differential equations, these policies provide highly expressive modeling of the offline behavior distribution. Prior work trains diffusion and flow policies using objectives weighted by action values~\cite{ding2024diffusion, zhang2025energy}, samples optimal actions through rejection sampling~\cite{hansen2023idql, he2024aligniql, mao2024diffusion}, or utilizes them for behavior regularization~\cite{chen2023score, chen2024diffusiontr, gao2025behavior, park2025fql, lee2025multi}, and more~\cite{venkatraman2023reasoning, chen2024aligning, chung2026offline}. FAN uses flow policies for behavior regularization, but eliminates the computational bottleneck of iterative sampling.

\textbf{Distributional Offline RL.} Distributional RL~\cite{engel2005reinforcement, morimura2010nonparametric, bellemare2017distributional} aims to learn the entire distribution of future returns, rather than just the expected return as in non-distributional approaches. With expressive distributional critics, these methods demonstrate strong theoretical guarantees~\cite{wang2023benefits, wang2024more} and empirical performance~\cite{dabney2018iqn, dabney2018qrdqn, farebrother2024stop}, especially in risk-sensitive settings~\cite{kim2023trust, ma2025dsac}. Prior work in distributional offline RL includes quantile-based~\cite{urpi2021oraac, ma2021codac}, uncertainty-based~\cite{agarwal2020rem, wu2021uncertainty}, and generative modeling-based~\cite{dong2025vf} approaches. However, these methods typically incur significant computational overheads, requiring processes such as updates on multiple quantile samples, ensemble evaluations for variance estimation, or iterative sampling for generative modeling. In contrast, FAN addresses these inefficiencies using a noise-conditioned critic.

\section{Preliminaries} \label{sec:prelim}

\textbf{Problem Setting.} We consider a Markov Decision Process (MDP) defined as $\mathcal{M} = (\mathcal{S}, \mathcal{A}, r, \mu, P, \gamma)$, where $\mathcal{S}$ is the state space, $\mathcal{A}$ is the $d$-dimensional action space, and $r : \mathcal{S} \times \mathcal{A} \rightarrow \mathbb{R}$ is the reward function. The notation $\Delta(\mathcal{X})$ denotes the set of probability distributions over a space $\mathcal{X}$. $\mu \in \Delta(\mathcal{S})$ is the initial state distribution, and $P : \mathcal{S} \times \mathcal{A} \rightarrow \Delta(\mathcal{S})$ is the transition dynamics kernel.  and $\gamma \in (0, 1)$ is the discount factor.
The goal is to learn a policy $\pi : \mathcal{S} \rightarrow \Delta(\mathcal{A})$ that maximizes the cumulative discounted return.

The standard action-value function $Q^\pi(s, a)$ in prior work is defined to estimate the expected future return:
\begin{equation}
    Q^\pi(s, a) = \mathbb{E}_{\tau \sim P^\pi(\cdot|s_0=s, a_0=a)} \left[ \sum_{t=0}^{\infty} \gamma^t r(s_t, a_t) \right],
\end{equation}
where the expectation is taken over trajectories $\tau = (s_0, a_0, r_0, s_1, \dots)$ generated by the dynamics $P$ and policy $\pi$. Offline RL aims to find the optimal policy $\pi^*$ that maximizes the expected return $\mathbb{E}_{s_0 \sim \mu, a_0 \sim \pi(\cdot|s_0)} [Q^\pi(s_0, a_0)]$ using only a fixed dataset $\mathcal{D} = \{\tau^{(i)}\}$ of trajectories.

In this work, instead of using the standard $Q^\pi(s, a)$, we capture the distributional information of the return with our proposed noise-conditioned critic $Q^{\pi}(s,a,\epsilon)$, where $\epsilon\sim\mathcal{N}(0,I_d)$.

\textbf{Behavior-Regularized Actor-Critic (BRAC).}
BRAC~\cite{wu2019behavior, tarasov2023rebrac, park2025fql} is a generalized offline RL framework that achieves state-of-the-art performance by enforcing a constraint between the learned policy $\pi_\omega$ and the dataset behavior policy $\pi_\beta$. Specifically, BRAC incorporates a regularization term \textcolor{softred}{$R(\pi_\omega(\cdot|s), \pi_\beta(\cdot|s))$} (e.g., KL divergence or Wasserstein distance between distributions) into the actor-critic updates, resulting in the following coupled objectives:
\begin{equation}
\label{eq:brac_updates}
\begin{aligned}
&\mathcal{L}_Q(\phi) = \mathbb{E}\left[ \left( Q_\phi(s, a) - (r + \gamma {q}_{\hat{\phi}}^{\pi_\omega,\pi_\beta} (\cdot|s')) \right)^2 \right]=
\\ &\mathbb{E}\left[ \left( Q_\phi(s, a) - (r + \gamma ( {Q}_{\hat{\phi}}(s', a_\omega') - \textcolor{softred}{\alpha_2 R(a_\omega', a_\beta')})) \right)^2 \right], \\
&\mathcal{L}_\pi(\omega) = \mathbb{E}\left[- Q_\phi(s, a_\omega) + \textcolor{softred}{\alpha_1 R(a_\omega, a_\beta)}\right],
\end{aligned}
\end{equation}
where the expectation is taken over $(s, a, r, s') \sim \mathcal{D}$, $a_\omega\sim\pi_\omega(\cdot|s)$, $a_\beta\sim\pi_\beta(\cdot|s)$, $a_\omega' \sim \pi_\omega(\cdot|s')$, and $a_\beta' \sim \pi_\beta(\cdot|s')$. Here, $\textcolor{softred}{\alpha_1},\textcolor{softred}{\alpha_2} > 0$ determine the regularization strength, and different choices of \textcolor{softred}{$R$} recover different algorithms such as ReBRAC~\cite{tarasov2023rebrac}, FQL~\cite{park2025fql}, or the proposed FAN algorithm.

\textbf{Flow Matching.}
Flow matching~\cite{lipman2022flow, liu2022flow, albergo2022building} is a class of generative modeling that learns the underlying velocity field between a prior distribution and a target distribution. Formally, given a target distribution $p(x) \in \Delta(\mathbb{R}^d)$, a time-dependent velocity field $v(t, x) : [0, 1] \times \mathbb{R}^d \to \mathbb{R}^d$ defines a flow trajectory $\psi(t, x) : [0, 1] \times \mathbb{R}^d \to \mathbb{R}^d$, which serves as the unique solution to the following ordinary differential equation (ODE)~\cite{lee2003smooth}:
\begin{equation}
\label{eq:fm_ode}
    \frac{d}{dt} \psi(t, x) = v(t, \psi(t, x))
\end{equation}
By satisfying the continuity equation, the velocity field $v$ generates a probability density path $p_t(x)$ that continuously maps the prior noise distribution $p_0(x)$ to the target data distribution $p_1(x)$.
Prior work~\cite{lipman2022flow} has shown that minimizing the following conditional flow matching (CFM) loss based on the Optimal Transport (OT) path is sufficient for training the underlying vector field.
\begin{equation}
\label{eq:cfm_loss}
    \mathcal{L}_{\text{CFM}}(\theta) = \mathbb{E}_{\substack{
    x_1 \sim p(x),\\
    x_0 \sim \mathcal{N}(0, I),\\
    t \sim \text{Unif}([0,1]),\\
    x_t = (1 - t)x_0 + tx_1}} \left[ \| v_\theta(t, x_t) - (x_1 - x_0) \|^2_2 \right]
\end{equation}
The learned velocity $v_\theta$ transforms the Gaussian $\mathcal{N}(0, I)$ to the target distribution $p(x)$ through the flow (Eq.\eqref{eq:fm_ode}). We use Eq.\eqref{eq:cfm_loss} to train our flow policy $v_\theta$, which maps the normal distribution to the offline dataset action distribution.

\textbf{Behavior Flow Policy.}
In offline RL, the behavior flow policy models the behavior distribution of the offline dataset and is trained with the following objective similar to Eq.\eqref{eq:cfm_loss}:
\begin{equation}
\label{eq:prior_bcflow}
    \mathcal{L}_{\text{FlowBC}}(\theta) = \mathbb{E}_{\substack{(s,a) \sim \mathcal{D},\\
    \epsilon \sim \mathcal{N}(0, I_d),\\
    t \sim \text{Unif}([0,1]),\\
    a_t = (1 - t)\epsilon + ta}} \left[ \| v_\theta(s,t, a_t) - (a - \epsilon) \|^2_2 \right]
\end{equation}
Sampling actions from the behavior flow policy $v_\theta$ recovers the dataset behavior, but requires solving Eq.\eqref{eq:fm_ode} using ODE solvers (e.g., Euler method). To sample actions with higher returns than $v_\theta$, prior work has applied rejection sampling weighted by future return estimates~\cite{park2025scalable, park2025horizon, dong2025vf}, or trained a separate one-step policy $\pi_\omega$ with behavior regularization~\cite{park2025fql}. Rejection sampling requires multiple $v_\theta$ iterations for both training and inference, whereas behavior regularization enables one-step action inference with $\pi_\omega$ trained using the objective:
\begin{equation}
\label{eq:prior_br}
    \mathcal{L}_P(\omega) = \mathbb{E}_{\substack{s \sim \mathcal{D},\\
    \epsilon \sim \mathcal{N}(0, I_d),\\
    a_\omega = \pi_\omega(s,\epsilon)}} \left[ -Q^{\pi_\omega}(s, a_\omega) + \alpha\,||a_\omega-a_\theta||^2 \right],
\end{equation}
where $a_\theta$ is the terminal state of the ODE defined by $v_\theta$ starting from $\epsilon$, $Q^{\pi_\omega}$ is the expected return under the policy $\pi_\omega$, and $\alpha$ is the coefficient for behavior regularization. However, training still requires $v_\theta$ iterations to generate $a_\theta$, which motivates our proposed method, Flow Anchoring.

\textbf{Distributional RL.}
Instead of expectations, distributional RL focuses on modeling the entire distribution of future returns. Given a policy $\pi$, the discounted return random variable is defined as $Z^\pi = \sum_{t=0}^{\infty} \gamma^t r(S_t, A_t)$, with values in the range $[z_{\min}, z_{\max}] \triangleq [\frac{r_{\min}}{1-\gamma}, \frac{r_{\max}}{1-\gamma}]$. Here, $S_t$ and $A_t$ are the state and action random variables at timestep $t$, where their values are determined by trajectories following $\pi$. The conditional return random variable is defined as $Z^\pi(s, a) = r(s, a) + \sum_{h=1}^{\infty} \gamma^h r(S_h, A_h)$, and the expected return value estimate satisfies $Q^\pi(s, a) = \mathbb{E}[Z^\pi(s, a)]$. The distributional Bellman operator $\mathcal{T}^\pi$ is defined as:
\begin{equation}
\label{eq:distbellman}
    \mathcal{T}^\pi Z(s, a) \overset{d}{=} r(s, a) + \gamma Z(S', A'),
\end{equation}
where $S'$ and $A'$ are random variables following the joint density $P(s'|s, a)\pi(a'|s')$, and $\overset{d}{=}$ denotes equality in distribution. Prior work~\cite{bellemare2017distributional} has shown that $\mathcal{T}^\pi$ is a $\gamma$-contraction under the $p$-Wasserstein distance, and therefore, repeatedly applying $\mathcal{T}^\pi$ converges to a unique fixed point~\cite{banach1922operations}.
Our proposed distributional Bellman operator $\mathcal{T}_n^\pi$ (Eq.\eqref{eq:noisebellman}) also satisfies the conditions of Banach's fixed-point theorem.

\textbf{Expectile Loss.}
Expectile regression~\cite{newey1987asymmetric} generalizes standard mean squared error (MSE)
loss to an asymmetric form. For a prediction $x$ and a target $\hat{x}$, the expectile loss
is defined using the coefficient $\kappa \in (0, 1)$:
\begin{equation}
\label{eq:expectile}
\mathcal{L}^\kappa_2(\hat{x}-x)
=
|\kappa - \mathbf{1}((\hat{x}-x) < 0)|\,(\hat{x}-x)^2.
\end{equation}
Here, the expectile is the minimizer of Eq.\eqref{eq:expectile}, and with fixed $\kappa$, it
becomes the $\kappa$-th expectile of the target random variable $\hat{x}$.
In distributional RL, approaches such as those by \citet{rowland2019statistics} and \citet{jullien2023distributional}
model expectiles of the return random variable with Eq.\eqref{eq:expectile}.
Moreover, non-distributional offline RL with in-sample learning~\cite{kostrikov2021iql, xu2023offline, kim2026compositional}
exploits Eq.\eqref{eq:expectile} to approximate the optimal value
$V^*(s) \approx \max_a Q(s, a)$ using the loss
$L_V(\psi) = \mathbb{E}_{(s, a) \sim \mathcal{D}}
[\mathcal{L}^{\kappa}_2(Q_{\hat{\theta}}(s, a)-V_\psi(s))]$ with $\kappa\approx1$.
Similarly, we use Eq.\eqref{eq:expectile} to estimate the $\operatorname{ess\,sup}$ in
Eq.\eqref{eq:noisebellman}.

\section{Flow-Anchored Noise-conditioned Q-Learning (FAN)} \label{sec:method}
We now introduce FAN, a behavior-regularized actor-critic method using flow policies and distributional critics.
FAN has two key components: (1) the operator $\mathcal{T}_n^\pi$ for critic training, and (2) Flow Anchoring for behavior regularization.

\textbf{Main Focus.}
Our primary objective is to maximize both performance and efficiency. However, high performance usually incurs higher computational costs. Among various mechanisms, we prioritize the use of \textit{expressive} models to achieve high performance. Specifically, we design the policies to be supported by flow matching and the values to capture return distributions. We aim to maximize computational efficiency within this expressive framework.

\textbf{Notations and Function Definitions.}
Fix $s\in \mathcal{S}$ and $a\in \mathcal{A}$.
Let $\textcolor{softgray}{\epsilon_p},\textcolor{softgray}{\epsilon_v}\sim\mathcal{N}(0,I_d)$ and
$\textcolor{softgray}{t,\kappa}\sim\mathrm{Unif}([0,1])$, where we mark random variables in \textcolor{softgray}{gray}.
A stochastic policy is a measurable map $\pi:\mathcal{S}\times\mathbb{R}^d\to\mathcal{A}$, and the sampled action is
$\textcolor{softgray}{a_\pi}:=\pi(s,\textcolor{softgray}{\epsilon_p})$.
For behavior regularization, we define a behavior flow policy as a measurable map
$v:\mathcal{S}\times[0,1]\times\mathcal{A}\to\mathcal{A}$, where $\textcolor{softgray}{v_\beta}:=v(s,\textcolor{softgray}{t},\textcolor{softgray}{a_t})$ models the velocity field
associated with the offline behavior action distribution using $\textcolor{softgray}{a_t} := (1-\textcolor{softgray}{t})\,\textcolor{softgray}{\epsilon_p} + \textcolor{softgray}{t}\,a$.
Let $\mathcal{Q}$ be the space of bounded, measurable functions $\mathcal{S}\times\mathcal{A}\times\mathbb{R}^d\to\mathbb{R}$, and fix $Q\in\mathcal{Q}$.
Then $\textcolor{softgray}{Q_n}:=Q(s,a,\textcolor{softgray}{\epsilon_v})$ is a random variable,
and we define $Q^\pi\in\mathcal{Q}$ as the unique fixed point of $\mathcal{T}^\pi_n$ (Eq.~\eqref{eq:noisebellman}) by Theorem~\ref{thm:contraction}.
Finally, we define the $\kappa$-th expectile of $\textcolor{softgray}{Q_n}$ as
$\textcolor{softgray}{Z_\kappa^Q}:=\arg\min_{q\in\mathbb{R}}~\mathbb{E}_{\textcolor{softgray}{\epsilon_v}\sim\mathcal{N}(0,I_d)}
\big[\mathcal{L}_2^{\textcolor{softgray}{\kappa}}(Q(s,a,\textcolor{softgray}{\epsilon_v})-q)\big]$.

\textbf{Value Networks.}
We train two function approximators: $Q_\phi(s,a,\epsilon)$ to model $Q^\pi(s,a,\epsilon)$, and $Z_{\psi}(s,a)$ for the upper expectile of $Q_\phi(s,a)$, which is $Z_{\kappa\approx1}^{Q_\phi}$.
By Theorem~\ref{thm:expectile_convergence}, $\lim_{\kappa\rightarrow1-} Z_\psi(s,a) = \operatorname{ess\,sup}_{\epsilon\sim\mathcal{N}(0,I_d)} Q_\phi(s,a,\epsilon)$.

\textbf{Policy Networks.}
We use two policy neural networks: $\pi_\omega(s,\epsilon)$ for modeling $\pi$, and $v_\theta(s,t,a_t)$ for modeling $v$.

\subsection{Actor-Critic Training}

\textbf{Motivation.}
One of the major computational bottlenecks in distributional critic training is that it requires analyzing multiple samples (e.g., quantiles). \textit{However, should we always rely on multiple samples to use the distributional information of future returns?} As one solution, we propose to use noise vectors instead of quantiles, setting the distributional critic training remain valid even with a single sample. Specifically, with $\textcolor{softgray}{\epsilon'}\sim\mathcal{N}(0,I_d)$, we define the following distributional operator on $Q(s,a,\textcolor{softgray}{\epsilon'})$:
\begin{equation}
\label{eq:noisebellman}
\begin{split}
    \mathcal{T}^\pi_{n}\ Q(s,a,\textcolor{softgray}{\epsilon'}):\overset{d}{=} r + \gamma\ \underset{\epsilon\sim\mathcal{N}(0,I_d)}{\operatorname{ess\,sup}}\ Q(s',\pi(s',\textcolor{softgray}{\epsilon'}),\epsilon).
\end{split}
\end{equation}
For simplicity, we only consider deterministic transitions and rewards, meaning that the reward $r$ and the next state $s'$ are fixed given $(s,a)$. The convergence of the operator is guaranteed by Theorem~\ref{thm:contraction}, and therefore, iteratively applying $\mathcal{T}_n^\pi$ to any $Q\in\mathcal{Q}$ converges to $Q^\pi$. The motivation for using the $\operatorname{ess\,sup}$ is to preserve the greedy, max-based action selection principle underlying classical Q-learning~\cite{watkins1992q}, while extending it to noise-conditioned return distributions. We refer to Appendix~\ref{appendix:Tn} for detailed explanations and theoretical benefits of $\mathcal{T}^\pi_{n}$.

\textbf{(1) Noise-conditioned Critic Update.}
Direct Monte Carlo sampling for estimating the essential supremum in Eq.(\ref{eq:noisebellman}) requires multiple noise samples. Moreover, a max operation on these samples can also increase value overestimation. Therefore, we propose the following Temporal Difference (TD) learning objective for the noise-conditioned critic $Q_\phi$:
\begin{equation}
 \label{eq:q_update}
 \mathcal{L}_Q(\phi) = \mathbb{E}\left[ (Q_\phi(s,a,\epsilon') - (r + \gamma\ q_\psi^{\pi_\omega,v_\theta}(s',\epsilon'))^2\right],
\end{equation}
where the expectation is taken over $(s,a,r,s')\sim\mathcal{D}$ and $\epsilon'\sim\mathcal{N}(0,I_d)$. $q^{\pi_\omega,v_\theta}(s',\epsilon'):=Z_\psi(s', \pi_\omega(s',\epsilon'))-\textcolor{softred}{\alpha_2\hat{z}}$ is the behavior regularized critic value defined in Eq.\eqref{eq:critic_fa}.

\textbf{(2) Upper Expectile Regression.}
We train $Z_{\psi}(s,a)$ to model the $\operatorname{ess\,sup}$ in $\mathcal{T}_n^\pi$ (Eq.\eqref{eq:noisebellman}), using only the state-action data pairs in the offline dataset:
\begin{equation}
\label{eq:upper_expectile}
    \mathcal{L}_{Z}(\psi) = {\mathbb{E}}_{\substack{(s,a)\sim\mathcal{D},\\ \epsilon\sim\mathcal{N}(0,I_d)}}\left[ L^\kappa_2(Q_{\hat\phi}(s, a, \epsilon)-Z_{\psi}(s, a)) \right].
\end{equation}
To make $Z_\psi$ model the upper expectile, we fix $\kappa=0.9$ for all experiments, which differs from prior distributional approaches that train for all possible $\kappa\sim\text{Unif}([0,1])$.

\textbf{(3) Value Maximization.}
The one-step policy $\pi_\omega$ is trained to maximize the estimated future return by minimizing:
\begin{equation}
\label{eq:value_maximize}
    \mathcal{L}_{P}(\omega) = \mathbb{E}_{\substack{s\sim\mathcal{D},\\ \epsilon,\epsilon'\sim\mathcal{N}(0,I_d),\\
    a_\omega=\pi_\omega(s,\epsilon)}} \left[ -Q_\phi(s, a_\omega,\epsilon') - Z_\psi(s, a_\omega) \right].
\end{equation}
With Eq.\eqref{eq:value_maximize}, the actor seeks the highest possible return using both $Q_\phi$ and $Z_\psi$.

%% Pseudo Code
% --- Custom Styling ---
\SetKwProg{Fn}{Function}{:}{}
\SetKwFunction{PolicyUpdate}{PolicyUpdate}
\SetKwFunction{ValueUpdate}{ValueUpdate}

\begin{algorithm}[t]
\caption{FAN}
% Input Section
\KwIn{
    Dataset $\mathcal{D}$,
    one-step policy $\pi_\omega$,
    behavior flow policy $v_\theta$,
    noise-conditioned critic $Q_\phi$,
    critic upper expectile estimator $Z_{\psi}$,
    $\kappa=0.9$,
    $\tau=0.995$,
    behavior regularization coefficients $\textcolor{softred}{\alpha_1},\textcolor{softred}{\alpha_2}$.
}

\BlankLine

% --- Main Loop ---
\While{not converged}{
    Sample batch $B = \{(s, a, r, s')\} \sim \mathcal{D}$\\
    \ValueUpdate{$B$},\ 
    \PolicyUpdate{$B$}\\
    $\hat{\phi} \leftarrow \tau \phi + (1 - \tau) \hat{\phi}$\;
}
\Return One-step policy $\pi_\omega$\;
\BlankLine

% --- Function 1: Value Update ---
\Fn{\ValueUpdate{$B$}}{
    $\epsilon',\epsilon_1,\epsilon_2,\epsilon \sim \mathcal{N}(0, I_d),\quad
    \ \ t\sim\text{Unif([0,1])}$\;

    $a'_\omega \leftarrow \pi_\omega(s', \epsilon'),\quad
    a'_{t,\omega}\leftarrow(1-t)\epsilon'+ta'_\omega$
    
    $z\leftarrow Z_\psi(s', a'_\omega),\quad
    \hat{z}\leftarrow \|(a'_\omega - \epsilon') - v_\theta(s',t, a'_{t,\omega})\|^2_2$

    \begin{tcolorbox}[
      enhanced,
      frame hidden,
      colback=green!10!white,
      borderline west={8pt}{0pt}{white!10!white},
      borderline east={15pt}{0pt}{white!10!white},
      arc=0mm
    ]
    
    \text{\textcolor{deepforest}{$\triangleright\ $TD update with Critic Flow Anchoring}}\\
    $L_Q(\textcolor{softblue}{\phi})\ \leftarrow\ \mathbb{E}[(Q_{\textcolor{softblue}{\phi}}(s, a, \epsilon') - (r + \gamma \,(z-\textcolor{softred}{\alpha_2\,\hat{z}})))^2]$\;

    \text{\textcolor{deepforest}{$\triangleright\ $ Upper Expectile Regression}}\\
    $L_Z(\textcolor{softblue}{\psi})\ \leftarrow\ \mathbb{E}[L^{\kappa}_2(Q_{\hat\phi}(s, a, \epsilon)-Z_{\textcolor{softblue}{\psi}}(s, a))]$\;
    \end{tcolorbox}

    Update $\textcolor{softblue}{\phi}, \textcolor{softblue}{\psi}$ to minimize $L_Q + L_Z$\;
}

\BlankLine

% --- Function 2: Policy Update ---
\Fn{\PolicyUpdate{$B$}}{

    $\epsilon_1,\epsilon_2,\epsilon_3 \sim \mathcal{N}(0, I_d), \quad t_1,t_2 \sim \text{Unif}([0, 1])$\;

    $a_t \leftarrow (1 - t_1)\epsilon_1 + t_1 a$\;
    
    $a_{\textcolor{softblue}{\omega}} \leftarrow \pi_{\textcolor{softblue}{\omega}}(s, \epsilon_2),\quad a_{t,\textcolor{softblue}{\omega}} \leftarrow (1-t_2)\epsilon_2 + t_2 a\textcolor{softblue}{_\omega}$
    
    \begin{tcolorbox}[
      enhanced,
      frame hidden,
      colback=green!10!white,
      borderline west={8pt}{0pt}{white!10!white},
      borderline east={15pt}{0pt}{white!10!white},
      arc=0mm
    ]
    \text{\textcolor{deepforest}{$\triangleright\ $ Behavior Flow Matching}}\\
    $L_F(\textcolor{softblue}{\theta}) \leftarrow \mathbb{E}[\|v_{\textcolor{softblue}{\theta}}(s,t_1, a_t) - (a - \epsilon_1)\|_2^2]$\;

    \text{\textcolor{deepforest}{$\triangleright\ $ Actor Flow Anchoring}}\\
    $L_B(\textcolor{softblue}{\omega}) \leftarrow \mathbb{E}[\|(a_{\textcolor{softblue}{\omega}} - \epsilon_2) - v_\theta(s,t_2,a_{t,\textcolor{softblue}{\omega}})\|_2^2]$\;
    
    \text{\textcolor{deepforest}{$\triangleright\ $ Value Maximization}}\\
    $L_P(\textcolor{softblue}{\omega}) \leftarrow \mathbb{E}[-Q_\phi(s, a_{\textcolor{softblue}{\omega}},\epsilon_3) - Z_{\psi}(s, a_{\textcolor{softblue}{\omega}})]$\;
    \end{tcolorbox}
    
    Update $\textcolor{softblue}{\theta}, \textcolor{softblue}{\omega}$ to minimize $L_F + \textcolor{softred}{\alpha_1 L_B} + L_P$\;

}

\end{algorithm}

\subsection{Behavior Regularization}

\textbf{Motivation.} Prior work on flow policy behavior regularization requires dataset actions sampled through ODE solving. \textit{However, is exact behavior sampling necessary for regularization?} Instead of using exact action samples, we propose "Flow Anchoring", which regularizes both the policy and value networks without ODE solutions. Since we regularize both the actor and the critic, FAN falls into the category of behavior-regularized actor-critic~\cite{wu2019behavior}.

\textbf{(1) Behavior Flow Policy.}
As in prior work~\cite{park2025fql, dong2025vf}, we clone the dataset behavior using flow matching. Specifically, the behavior policy models the vector field mapping the Gaussian distribution to the state-conditional action distribution of the offline dataset:
\begin{equation}
    \mathcal{L}_{F}(\theta) = \mathbb{E}_{\substack{(s,a)\sim\mathcal{D},\\ 
    t\sim\text{Unif([0,1])},\\
    \epsilon\sim\mathcal{N}(0,I_d),\\
    a_t = (1 - t)\epsilon + ta}} \left[ \| v_\theta(s,t, a_t) - (a - \epsilon) \|_2^2 \right].
\end{equation}

\textbf{(2) Actor Flow Anchoring.}
Behavior regularization with Eq.\eqref{eq:prior_br} increases training computation due to iterative flow sampling. In contrast, we propose Eq.\eqref{eq:flowbc} which regularizes both the actor and critic separately, without flow iteration. This provides an efficient and effective action regularization with the underlying flow of the offline dataset actions:
\begin{equation}
\label{eq:flowbc}
    \mathcal{L}_{B}(\omega) = \mathbb{E} \left[\| (\pi_\omega(s, \epsilon) - \epsilon) - v_\theta(s,t, a_{t,\omega})\|_2^2 \right].
\end{equation}
The expectation is taken over $t\sim\text{Unif([0,1])}$, $\epsilon\sim\mathcal{N}(0,I_d)$, and $s\sim\mathcal{D}$, with $a_{t,\omega} = (1 - t)\epsilon + t\pi_\omega(s,\epsilon)$.

\textbf{(3) Critic Flow Anchoring.}
We also incorporate behavior regularization into Eq.\eqref{eq:q_update} with coefficient $\alpha_2$:
\begin{equation}
\begin{split}
  \label{eq:critic_fa}
  & q_\psi^{\pi_\omega,v_\theta}(s',\epsilon'):=Z_\psi(s', \pi_\omega(s',\epsilon'))\\
  & - \alpha_2\, \mathbb{E}_{t\sim\text{Unif([0,1])}}\big[\|(\pi_\omega(s',\epsilon') - \epsilon') - v_\theta(s',t, a'_{t,\omega})\|^2_2\big],
\end{split}
\end{equation}
where $a'_{t,\omega} = (1 - t)\epsilon' + t\pi_\omega(s',\epsilon')$.

\subsection{Theoretical Guarantees}
\label{sec:theory}
Although the motivations are from computational efficiency, we show that the details in FAN are actually solid in theory.

\textbf{(1) Convergence of the operator $\mathcal{T}^\pi_{n}$ (Eq.\eqref{eq:noisebellman}).}
As the standard distributional operator $\mathcal{T}^\pi$ (Eq.\eqref{eq:distbellman}) guarantees convergence in the $p$-Wasserstein metric ($W_p$)~\cite{bellemare2017distributional}, we establish convergence in $\infty$-Wasserstein metric ($W_\infty$). Specifically, we prove that $\mathcal{T}^\pi_{n}$ is a $\gamma$-contraction in the supremum metric $d_\infty$ (Definition~\ref{def:metric}), a condition that strictly implies distributional convergence under $W_\infty$.

\begin{tcolorbox}[
  enhanced,
  frame hidden,
  colback=green!10!white,
  borderline={2pt}{0pt}{forestgreen!50!white},
  arc=2mm
]
\begin{theorem}[\textbf{Convergence of $\mathcal{T}^\pi_{n}$}]
\label{thm:contraction}
The proposed operator $\mathcal{T}^\pi_{n}$ is a $\gamma$-contraction
on $(\mathcal{Q}, d_\infty)$ (Definition~\ref{def:metric}), and therefore, iterating $\mathcal{T}^\pi_{n}$ from any $Q\in\mathcal{Q}$
converges to a unique fixed point $Q^\pi$.
\end{theorem}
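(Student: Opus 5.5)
The plan is to prove that $\mathcal{T}^\pi_n$ maps $\mathcal{Q}$ into itself and satisfies $d_\infty(\mathcal{T}^\pi_n Q_1,\mathcal{T}^\pi_n Q_2)\le\gamma\, d_\infty(Q_1,Q_2)$. The fixed-point conclusion then follows from Banach's theorem. I take $d_\infty$ (Definition~\ref{def:metric}) to be the supremum metric
\[
d_\infty(Q_1,Q_2)=\sup_{s,a,\epsilon}|Q_1(s,a,\epsilon)-Q_2(s,a,\epsilon)|,
\]
possibly with the supremum over $\epsilon$ replaced by an essential supremum. If the essential supremum version is the one intended, I will quotient by functions agreeing $\mathcal{N}(0,I_d)$-a.e.\ in $\epsilon$, and I will note that the argument below is unchanged.

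\textbf{Step 1: $\mathcal{T}^\pi_n$ is well defined.} Under deterministic transitions, $\mathcal{T}^\pi_n Q(s,a,\epsilon')=r(s,a)+\gamma\, g_Q(s',\pi(s',\epsilon'))$, where $g_Q(s,a):=\operatorname{ess\,sup}_{\epsilon}Q(s,a,\epsilon)$.
\begin{itemize}
\item \emph{Boundedness.} We have $|g_Q|\le\|Q\|_\infty$, and $r$ is bounded, so the image is bounded.
\item \emph{Measurability.} This is the one technical point. I would write the essential supremum as $\lim_{p\to\infty}\big(\mathbb{E}_\epsilon[\,|Q(s,a,\epsilon)+C|^p]\big)^{1/p}-C$ with $C=\|Q\|_\infty$. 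Each $L^p$ norm is measurable in $(s,a)$ by Tonelli, so $g_Q$ is measurable as a pointwise limit.
\item \emph{Composition.} Composing $g_Q$ with the measurable maps $(s,a)\mapsto s'$ and $(s',\epsilon')\mapsto\pi(s',\epsilon')$ keeps everything measurable.
\end{itemize}
Hence $\mathcal{T}^\pi_n Q\in\mathcal{Q}$.

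\textbf{Step 2: the contraction.} The key inequality is that the essential supremum is $1$-Lipschitz in sup norm:
\[
|g_{Q_1}(s,a)-g_{Q_2}(s,a)|\le \operatorname{ess\,sup}_\epsilon|Q_1(s,a,\epsilon)-Q_2(s,a,\epsilon)|\le d_\infty(Q_1,Q_2).
\]
To prove it, note that $Q_1\le Q_2+\delta$ a.e.\ in $\epsilon$, where $\delta$ is the right-hand side. This gives $g_{Q_1}\le g_{Q_2}+\delta$, and the symmetric argument gives the reverse bound. The rewards cancel in the difference, so
\[
|\mathcal{T}^\pi_nQ_1-\mathcal{T}^\pi_nQ_2|(s,a,\epsilon')=\gamma\,|g_{Q_1}-g_{Q_2}|(s',\pi(s',\epsilon'))\le\gamma\, d_\infty(Q_1,Q_2)
\]
uniformly in $(s,a,\epsilon')$. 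Taking the supremum gives the $\gamma$-contraction.

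\textbf{Step 3: conclusion.} $(\mathcal{Q},d_\infty)$ is complete, being the bounded measurable functions under the sup norm, a closed subspace of $\ell^\infty$. Banach's fixed-point theorem then gives a unique fixed point $Q^\pi$ and convergence of the iterates $(\mathcal{T}^\pi_n)^kQ\to Q^\pi$ at rate $\gamma^k$.

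If the distributional equality $\overset{d}{=}$ in Eq.~\eqref{eq:noisebellman} is meant only in law, I will note that the pointwise (coupled) definition above is a version that realizes it. The sup-metric contraction then implies $W_\infty$ contraction of the induced return laws, because the coupling via a shared $\epsilon'$ is admissible.

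The main obstacle I expect is Step 1's measurability of the essential supremum in $(s,a)$. A careless pointwise supremum over uncountably many $\epsilon$ need not be measurable, which is why I route through the $L^p$-limit characterization. Everything else is routine.
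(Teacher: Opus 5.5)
Your proposal is correct and follows the paper's proof. Both cancel the rewards, use that the essential supremum is $1$-Lipschitz ($|\operatorname{ess\,sup} f-\operatorname{ess\,sup} g|\le \operatorname{ess\,sup}|f-g|$), bound by the supremum over the whole domain, and apply Banach's theorem on the complete space $(\mathcal{Q},d_\infty)$. The paper's appendix keeps an expectation over $s'\sim P(\cdot\mid s,a)$ with a Jensen step, which follows from integrating your pointwise bound, and you additionally check measurability of $(s,a)\mapsto\operatorname{ess\,sup}_\epsilon Q(s,a,\epsilon)$ via the $L^p$-limit, which the paper leaves implicit.
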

\end{tcolorbox}
\begin{proof}
Please refer to the proof in Appendix~\ref{appendix:contraction}. Therefore, for any $s\in\mathcal{S}$, $a\in\mathcal{A}$, $\epsilon'\in\mathbb{R}^d$, and $Q\in\mathcal{Q}$, $Q(s,a,\epsilon')$ converges to $Q^\pi(s,a,\epsilon')$ if we iterate $\mathcal{T}^\pi_n$ over this value.
\end{proof}

\textbf{(2) Upper Expectile and the Essential Supremum.}
We now show why our TD learning objective (Eq.\eqref{eq:q_update}) recovers the return distribution converged through $\mathcal{T}^\pi_{n}$ (Eq.\eqref{eq:noisebellman}).

\begin{tcolorbox}[
  enhanced,
  frame hidden,
  colback=green!10!white,
  borderline={2pt}{0pt}{forestgreen!50!white},
  arc=2mm
]
\begin{theorem}[\textbf{Upper Expectile Converges to the Essential Supremum}]
\label{thm:expectile_convergence}
Let $s\in\mathcal{S}$, $a\in\mathcal{A}$, $\epsilon\sim\mathcal{N}(0,I_d)$, and $Q\in\mathcal{Q}$. For any $\kappa \in [\tfrac12, 1)$, $Z_{\kappa}:=\arg\min_{q\in\mathbb{R}}~\mathbb{E}_{{\epsilon}}\big[\mathcal{L}_2^{{\kappa}}({Q(s,a,\epsilon)}-q)\big]$ is bounded by:
\begin{equation}
\label{eq:expectile_sandwich}
Z_{1/2} ~\le~ Z_{\kappa} ~\le~ \lim_{\kappa \to 1^-} Z_{\kappa} = \operatorname{ess\,sup}_{\epsilon} Q(s,a,\epsilon).
\end{equation}
\end{theorem}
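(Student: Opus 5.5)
Fix $(s,a)$ and write $X:=Q(s,a,\epsilon)$ with $\epsilon\sim\mathcal{N}(0,I_d)$. Since $Q\in\mathcal{Q}$ is bounded, $X$ is a bounded random variable, so $M:=\operatorname{ess\,sup}_\epsilon Q(s,a,\epsilon)$ is finite and $X\in L^2$. The plan is to use the first-order characterization of expectiles. The map $q\mapsto \mathbb{E}[\mathcal{L}_2^\kappa(X-q)]$ is strictly convex and continuously differentiable, and it is coercive. Its unique minimizer $Z_\kappa$ is therefore the unique root of
\begin{equation*}
F(q,\kappa):=\kappa\,\mathbb{E}[(X-q)_+] - (1-\kappa)\,\mathbb{E}[(q-X)_+]=0 .
\end{equation*}
At $\kappa=\tfrac12$ this equation reduces to $\mathbb{E}[X]=q$, so $Z_{1/2}=\mathbb{E}[X]$.

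\textbf{Monotonicity.} Because $F$ is strictly decreasing in $q$ wherever the loss is non-degenerate, and nondecreasing in $\kappa$, the implicit root $Z_\kappa$ is nondecreasing in $\kappa$. Concretely, take $\kappa_1<\kappa_2$ and evaluate at $q=Z_{\kappa_1}$. We get
\begin{equation*}
F(Z_{\kappa_1},\kappa_2)=F(Z_{\kappa_1},\kappa_2)-F(Z_{\kappa_1},\kappa_1)=(\kappa_2-\kappa_1)\bigl(\mathbb{E}[(X-Z_{\kappa_1})_+]+\mathbb{E}[(Z_{\kappa_1}-X)_+]\bigr)\ge 0 .
\end{equation*}
Since $F(\cdot,\kappa_2)$ is nonincreasing, its root satisfies $Z_{\kappa_2}\ge Z_{\kappa_1}$. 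This gives $Z_{1/2}\le Z_\kappa$, and it also shows that $\lim_{\kappa\to1^-}Z_\kappa$ exists as a monotone limit.

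\textbf{Upper bound.} We show $Z_\kappa\le M$ for every $\kappa<1$. Suppose instead $q>M$. Then $(X-q)_+=0$ almost surely and $\mathbb{E}[(q-X)_+]>0$, so $F(q,\kappa)<0$. Hence the root satisfies $Z_\kappa\le M$, and the limit $L:=\lim_{\kappa\to1^-}Z_\kappa$ satisfies $L\le M$.

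\textbf{Identifying the limit (the main step).} It remains to show $L\ge M$. Suppose $L<M$. Fix $\delta>0$ with $L+\delta<M$. By the definition of ess sup, $p:=\mathbb{P}(X>L+\delta)>0$. For every $\kappa$ we have $Z_\kappa\le L$, so
\begin{equation*}
\mathbb{E}[(X-Z_\kappa)_+]\ \ge\ \delta p .
\end{equation*}
Since $X$ is bounded and $Z_\kappa\ge Z_{1/2}$, the other term stays bounded:
\begin{equation*}
\mathbb{E}[(Z_\kappa-X)_+]\le C:=|M|+\|X\|_\infty .
\end{equation*}
The root equation then gives $\kappa\delta p\le (1-\kappa)C$. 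This fails as $\kappa\to1^-$, which is a contradiction, so $L=M$. Combined with monotonicity, we obtain the sandwich in Eq.~\eqref{eq:expectile_sandwich}.

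The delicate points are measurability, which holds because $Q$ is measurable so $X$ is a random variable, and uniqueness of the minimizer. If $X$ is almost surely constant, the root equation still forces $Z_\kappa=X=M$ for all $\kappa$, so the claim is trivial in that case. I would state this degenerate case separately.
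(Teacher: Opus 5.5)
Your proof is correct and follows the paper's route: the first-order condition $\kappa\,\mathbb{E}[(X-q)_+]=(1-\kappa)\,\mathbb{E}[(q-X)_+]$, monotonicity in $\kappa$, the range bound $Z_\kappa\le M$, and a contradiction in which $\mathbb{E}[(X-Z_\kappa)_+]$ is forced to $0$ yet stays above a positive constant if the limit were below $M$. Your monotonicity step differences $F$ in $\kappa$ instead of comparing the ratios $A/B$, which is slightly more robust because it never divides by $\mathbb{E}[(q-X)_+]$. One small correction: the bound $\mathbb{E}[(Z_\kappa-X)_+]\le C$ follows from $Z_\kappa\le M$, not from $Z_\kappa\ge Z_{1/2}$.
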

\end{tcolorbox}

\begin{proof}
Please refer to the proof stated in Appendix~\ref{appendix:expectile}. 
This implies that the upper expectile $Z_\psi$ trained through Eq.\eqref{eq:upper_expectile} with $\kappa\approx1$ converges to $\operatorname{ess\,sup}\,Q_\phi$.
\end{proof}

\textbf{(3) Validity of Behavior Regularization.}
We show that minimizing $\mathcal L_B$ (Eq.\eqref{eq:flowbc}) controls the deviation between distributions induced by the one-step policy $\pi_\omega$ and the behavior policy $v_\theta$ modeling the offline dataset behavior.

\begin{tcolorbox}[
  enhanced,
  frame hidden,
  colback=green!10!white,
  borderline={2pt}{0pt}{forestgreen!50!white},
  arc=2mm
]
\begin{theorem}[\textbf{Flow Anchoring is a Valid Behavior Regularization}]
\label{thm:br_validity}
Let $\mu_\omega(\cdot|s)$ and $\mu_\theta(\cdot|s)$ be the probability distributions induced by the policy $\pi_\omega$ and the behavior flow $v_\theta$ respectively (Definition~\ref{def:behavior_flow}). If $v_\theta$ satisfies Lipschitzness (Assumption~\ref{ass:lipschitz_br}), the following holds for all $s\in\mathcal{S}$:
\begin{equation}
\label{eq:w2_bound_dataset_main}
\mathbb E_{s\sim\mathcal D}\Big[W_2^2\!\left(\mu_\omega(\cdot|s),\,\mu_\theta(\cdot|s)\right)\Big]
\;\le\;
e^{2L}\,\mathcal L_B(\omega),
\end{equation}
where $W_2$ is the Wasserstein-2 distance and $L$ is the Lipschitz constant.
\end{theorem}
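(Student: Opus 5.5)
The plan is to couple the two distributions through the shared Gaussian noise and then control the resulting coupling cost with a Grönwall estimate along the flow ODE. Fix $s$. Let $\psi_\theta(t,\epsilon)$ be the flow of $v_\theta(s,t,\cdot)$ started at $\epsilon$, so that $\mu_\theta(\cdot|s)$ is the pushforward of $\mathcal N(0,I_d)$ under $\epsilon\mapsto\psi_\theta(1,\epsilon)$; this is how I read Definition~\ref{def:behavior_flow}. Likewise $\mu_\omega(\cdot|s)$ is the pushforward under $\epsilon\mapsto\pi_\omega(s,\epsilon)$. The map $\epsilon\mapsto(\pi_\omega(s,\epsilon),\psi_\theta(1,\epsilon))$ then gives a coupling, so
\[
W_2^2(\mu_\omega,\mu_\theta)\le \mathbb E_\epsilon\|\pi_\omega(s,\epsilon)-\psi_\theta(1,\epsilon)\|^2 .
\]
It therefore suffices to show, for each fixed $\epsilon$, that
\[
\|\pi_\omega(s,\epsilon)-\psi_\theta(1,\epsilon)\|^2\le e^{2L}\,\mathbb E_t\|(\pi_\omega(s,\epsilon)-\epsilon)-v_\theta(s,t,a_{t,\omega})\|^2 ,
\]
and then take expectations over $\epsilon$ and $s\sim\mathcal D$.

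For fixed $\epsilon$, write $x(t)=a_{t,\omega}=(1-t)\epsilon+t\pi_\omega(s,\epsilon)$ for the straight-line interpolation and $y(t)=\psi_\theta(t,\epsilon)$ for the flow trajectory. Both start at $\epsilon$. The interpolation has constant velocity, $\dot x=\pi_\omega-\epsilon$, while $\dot y=v_\theta(s,t,y)$. Set $\delta(t)=\pi_\omega-\epsilon-v_\theta(s,t,x(t))$, which is exactly the integrand of $\mathcal L_B$. Then, using Assumption~\ref{ass:lipschitz_br} (Lipschitzness of $v_\theta$ in the action argument with constant $L$),
\[
\tfrac{d}{dt}\|x-y\|\le \|v_\theta(x)-v_\theta(y)\|+\|\delta(t)\|\le L\|x-y\|+\|\delta(t)\| .
\]
Grönwall's inequality with $\|x(0)-y(0)\|=0$ gives
\[
\|x(1)-y(1)\|\le\int_0^1 e^{L(1-t)}\|\delta(t)\|\,dt\le e^{L}\int_0^1\|\delta(t)\|\,dt .
\]
Squaring and applying Cauchy–Schwarz (or Jensen) on $[0,1]$ yields
\[
\|x(1)-y(1)\|^2\le e^{2L}\int_0^1\|\delta(t)\|^2\,dt=e^{2L}\,\mathbb E_{t\sim\mathrm{Unif}}\|\delta(t)\|^2 .
\]
Since $x(1)=\pi_\omega(s,\epsilon)$, this is the per-$\epsilon$ claim.

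The remaining step is to average over $\epsilon$ and $s\sim\mathcal D$, which produces $\mathcal L_B(\omega)$ on the right-hand side. The sentence ``for all $s$'' in the statement should be read as the pointwise bound that holds before averaging over $s$.

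The main technical care is in two places. The first is justifying the differential inequality for $\|x-y\|$ where $x=y$; I will avoid this by applying Grönwall in integral form to $x(t)-y(t)=\int_0^t(\dot x-\dot y)$ directly. The second is ensuring that the flow ODE has a unique solution, so that $\mu_\theta$ is well defined; this follows from the Lipschitz assumption via Picard–Lindelöf. Measurability of the coupling follows from the measurability of $\pi_\omega$ and of the flow map.
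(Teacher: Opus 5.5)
Your proposal is correct and follows essentially the same route as the paper. Both arguments couple $\mu_\omega$ and $\mu_\theta$ through the shared Gaussian noise and compare the straight-line path $(1-t)z+t\pi_\omega(s,z)$ with the flow trajectory using the Lipschitz bound and Gr\"onwall's inequality. Both then apply Cauchy--Schwarz on $[0,1]$ and average over $z$ and $s\sim\mathcal D$. Running Gr\"onwall in integral form on $\|x(t)-y(t)\|$, as you plan, handles the points where the trajectories coincide more cleanly than the paper's norm-derivative lemma, which treats the case $e(t)=0$ only informally.
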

\end{tcolorbox}

\begin{proof}
We provide the complete derivation in Appendix~\ref{appendix:br}. The equality holds when $\mu_\omega(\cdot|s)=\mu_\theta(\cdot|s)$ and all flow trajectories of the vector field $v_\theta$ are straight. We note that our behavior model $v_\theta$ is parameterized by standard neural networks which are Lipschitz, also with Lipschitz-continuous activation functions (e.g., GeLU). Since the composition of Lipschitz functions is Lipschitz, Assumption~\ref{ass:lipschitz_br} is always satisfied. Consequently, minimizing $\mathcal{L}_B$ (Eq.~\eqref{eq:flowbc}) directly minimizes the upper bound on the Wasserstein-2 distance between the distributions induced by the training policy $\pi_\omega$ and the behavior flow policy $v_\theta$.
\end{proof}
\section{Experiments} \label{sec:exp}

In this section, we demonstrate that FAN effectively translates theoretical insights into practice. The goal is to observe whether FAN achieves state-of-the-art performance on offline RL benchmarks, while offering high computational efficiency in both training and inference.

\textbf{Baselines.} We benchmark FAN against highly efficient non-distributional algorithms, as well as high-performing distributional methods. Therefore, we select \textbf{ReBRAC}~\cite{tarasov2023rebrac}, \textbf{IDQL}~\cite{hansen2023idql}, and \textbf{FQL}~\cite{park2025fql} as non-distributional baselines, and \textbf{IQN}~\cite{dabney2018iqn}, \textbf{CODAC}~\cite{ma2021codac}, and \textbf{Value Flows}~\cite{dong2025vf} as distributional baselines. With the non-distributional approaches, we mainly focus on comparing the final performance, whereas with the distributional approaches, we mainly compare computational efficiency. Please refer to Appendix~\ref{appendix:full_experiments} for more baseline details.

\begin{table*}[t]
  \caption{\textbf{Offline Results} including normalized returns (D4RL) and success rates (OGBench singletask). The results are bolded if they are within the 95\% range of the best final performance in each task. We used 8 seeds for training D4RL and OGBench state-based tasks, and 4 seeds for OGBench pixel-based tasks. The full results are at Table~\ref{tab:full_result}, with hyperparameters stated in Tables~\ref{tab:fan_hyperparams} and \ref{tab:hyperparams_baselines}.}
  \vskip -0.1in
  \label{tab:main_result}
  \begin{center}
    \begin{small}
    \begin{sc}
      \resizebox{\textwidth}{!}{%
        \begin{tabular}{llccccccc} 
          \toprule
          & & \multicolumn{3}{c}{Non-Distributional} & \multicolumn{4}{c}{Distributional} \\
          \cmidrule(lr){3-5} \cmidrule(lr){6-9}
          Benchmark & Task Types & ReBRAC & IDQL & FQL & IQN & CODAC & VF & FAN \\
          \midrule
          \multirow{2}{*}{D4RL}
           & antmaze (4 tasks) & 73 & 75 & \textbf{79}$\pm$8 & 46$\pm$4 & 46$\pm$3 & 17$\pm$4 & \textbf{76}$\pm$4 \\
           & adroit (12 tasks) & \textbf{59} & 52$\pm$4 & 52$\pm$3 & 50$\pm$3 & 52$\pm$1 & 50$\pm$2 & 53$\pm$4 \\
          \midrule
          
          % OGBench (State-based) - Spans 6 rows
          \multirow{7}{*}{OGBench} 
           & antsoccer-arena-navigate (5 tasks) & 16$\pm$1 & 33$\pm$6 & \textbf{60}$\pm$2 & 24$\pm$7 & 33$\pm$14 & 27$\pm$7 & \textbf{60}$\pm$8 \\
           & puzzle-3x3-play (5 tasks) & 22$\pm$2 & 19$\pm$1 & 30$\pm$4 & 15$\pm$1 & 20$\pm$5 & 87$\pm$13 & \textbf{100}$\pm$1\\
           & puzzle-4x4-play (5 tasks) & 14$\pm$3 & 25$\pm$8 & 17$\pm$5 & 27$\pm$4 & 20$\pm$18 & 27$\pm$4 & \textbf{42}$\pm$10\\
           & cube-double-play (5 tasks) & 15$\pm$6 & 14$\pm$5 & 29$\pm$6 & 42$\pm$8 & 61$\pm$6 & \textbf{69}$\pm$4 & 46$\pm$11 \\
           & scene-play (5 tasks) & 45$\pm$5 & 30$\pm$4 & 56$\pm$2 & 40$\pm$1 & 55$\pm$1 & \textbf{59}$\pm$4 & \textbf{58}$\pm$1 \\
           & visual locomotion (2 tasks) & 28$\pm$11 & 44$\pm$4 & 17$\pm$2 & 32$\pm$4 & \textbf{49}$\pm$2 & 44$\pm$4 & \textbf{49}$\pm$4 \\
           & visual manipulation (2 tasks) & 16$\pm$4 & 8$\pm$11 & 28$\pm$5 & 6$\pm$3 & 2$\pm$1 & 30$\pm$4 & \textbf{33}$\pm$16 \\
          \bottomrule
        \end{tabular}
      } 
    \end{sc}
    \end{small}
  \end{center}
  \vskip -0.1in
\end{table*}

\begin{figure*}[t]
    \centering
    \vskip -0.05in
    \includegraphics[width=\linewidth]{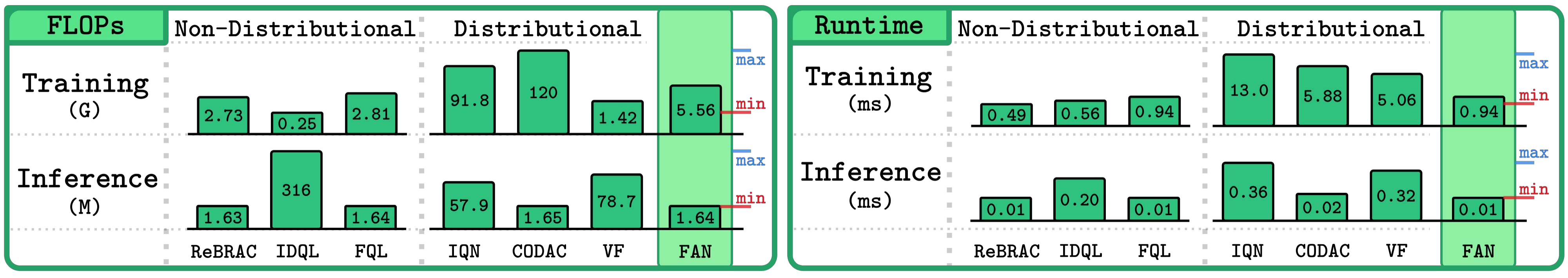}
    \caption{\textbf{The Number of FLOPs and the Wall-clock Compute Time} per function call for \texttt{cube-double-play}.}
    \vskip -0.1in
    \label{fig:flops}
\end{figure*}

\subsection{Offline RL Task Performance}

Now, we report how the policy trained with FAN performs on offline RL benchmarks.

\textbf{Benchmarks.}
We present results on the standard offline RL benchmarks for robotics locomotion and manipulation. Specifically, we evaluate on \textbf{4} \texttt{antmaze} and \textbf{12} \texttt{adroit} tasks from D4RL~\cite{fu2020d4rl}, and also \textbf{25} state-based and \textbf{4} pixel-based tasks from OGBench~\cite{park2024ogbench}.

\textbf{Settings.}
For OGBench tasks, following the official evaluation scheme~\cite{park2024ogbench}, we train for 1M gradient steps for state-based tasks and 500K steps for pixel-based tasks, and report the average success rates across the last three evaluation epochs (i.e., at 100K intervals). For D4RL tasks, we train for 500k gradient steps and report the performance at the last epoch following \citet{tarasov2023corl}. For the baselines, we source the best results reported in prior work~\cite{park2025fql, dong2025vf} where tasks overlap, or tune baselines with training budgets similar to FAN if no prior results exist. Appendix~\ref{appendix:full_experiments} has more experimental details.

\textbf{Results.}
Table~\ref{tab:main_result} presents results on the performance. FAN achieves state-of-the-art performance in 7 out of 9 task environments, where we define state-of-the-art as achieving at least 95\% of the best task performance. Specifically, FAN outperforms non-distributional approaches in most OGBench tasks, especially for the tasks dealing with complex manipulation (e.g., \texttt{puzzle}, \texttt{cube}). Also, FAN surpasses distributional approaches on average while maintaining higher computational efficiency.

\subsection{Computational Efficiency}
We evaluate computational efficiency using both the number of floating-point operations (FLOPs) and wall-clock runtime. To quantify computational costs for both training and inference, we measure these metrics for a single training update and a single action-sampling call.
All measurements are performed on a single NVIDIA RTX~6000 GPU using JAX/XLA with a batch size of $256$.
For the baselines, we standardized by using $16$ quantiles, $16$ action candidates for rejection sampling, and $10$ flow steps if needed.

\textbf{Floating Point Operations (FLOPs).}
We measure FLOPs using XLA static cost analysis~\cite{xla} in JAX~\cite{jax2018github}.
Concretely, we JIT-compile each measured function into an XLA executable and report the compiler-estimated FLOPs for one execution of the compiled graph.
For training, we measure FLOPs of the actor-critic updates, which include the forward/backward pass, optimizer updates, and target-network updates.
For inference, we measure FLOPs of a single action sampling.

\textbf{Wall-Clock Compute Time.}
We measure wall-clock runtime for both training and inference, excluding compilation overhead before measurements. We report the mean runtime per call calculated over $50$ runs.

\textbf{Results.}
Figure~\ref{fig:flops} summarizes the two metrics. For training, CODAC and IQN incur substantially higher costs because they use multiple samples to learn distributional value functions. Value Flows, which is based on flow integration and Jacobian–vector products, exhibits low training FLOPs due to efficient code compilation but actually requires high runtime. Compared to these methods, FAN results in approximately a $5\text{--}14\times$ faster runtime during training, demonstrating the highest computational efficiency among the distributional approaches. Moreover, for inference, FAN shows the best computational efficiency over all baseline methods, in terms of both the number of FLOPs and runtime.

\begin{figure*}[t]
    \centering
    \includegraphics[width=\linewidth]{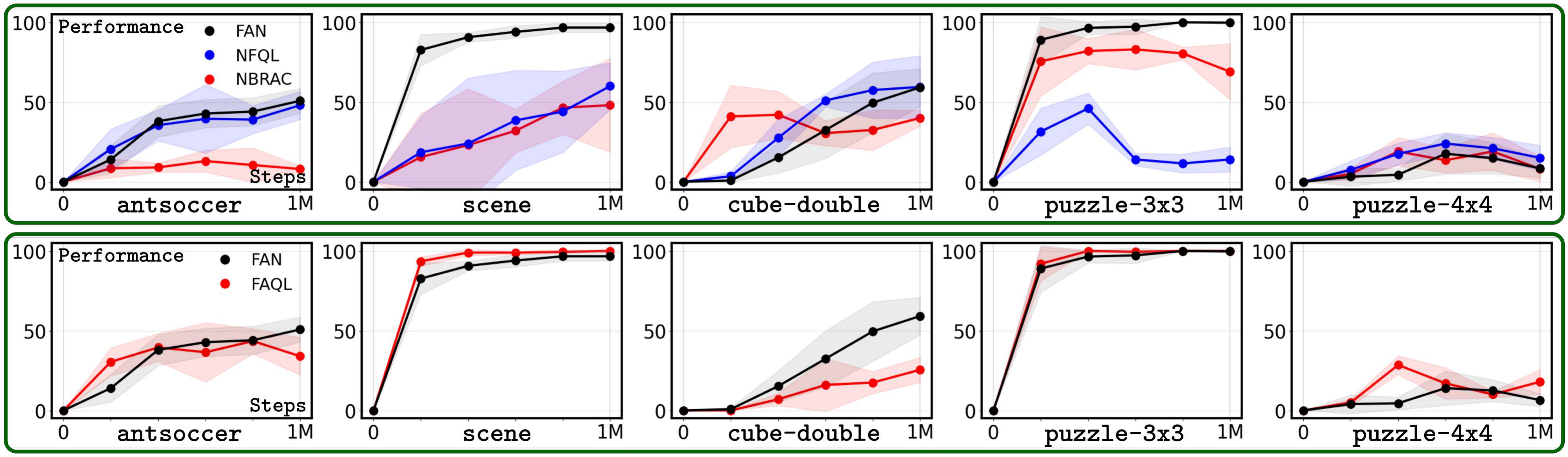}
    \caption{\textbf{Ablation Studies on Flow Anchoring and $\mathcal{T}_n^\pi$.} \textit{(Up)} NBRAC vs. NFQL vs. FAN to verify the effect of Flow Anchoring. \textit{(Down)} FAQL vs. FAN to verify the effect of $\mathcal{T}_n^\pi$. The black line (FAN) performs the best on average, compared to all other combinations.}
    \vskip -0.05in
    \label{fig:ablation12}
\end{figure*}

\begin{table*}[t]
  \caption{\textbf{Offline-to-Online Results} including normalized returns (D4RL) and success rates (OGBench \texttt{singletask-v0} defaults). The results are collected over 8 seeds and the numbers are bolded if they are above or equal to 95\% of the best performance.}
  \vskip -0.1in
  \label{tab:off_to_on_result}
  \begin{center}
    \begin{small}
    \begin{sc}
      \resizebox{\textwidth}{!}{%
        \begin{tabular}{llcccccc} 
          \toprule
           &  & \multicolumn{3}{c}{Non-Distributional} & \multicolumn{3}{c}{Distributional} \\
          \cmidrule(lr){3-5} \cmidrule(lr){6-8}
          Benchmark & Task & ReBRAC & IDQL & FQL & IQN & Value Flows & FAN \\
          \midrule
          
          \multirow{4}{*}{OGBench}
            & antsoccer-medium-navigate & 0$\pm$0 $\rightarrow$ 0$\pm$0 & 26$\pm$15 $\rightarrow$ 39$\pm$10 & 28$\pm$8 $\rightarrow$ \textbf{86}$\pm$5 & 28$\pm$8 $\rightarrow$ 34$\pm$4 & 22$\pm$3 $\rightarrow$ 0$\pm$0 & 52$\pm$8 $\rightarrow$ 68$\pm$9\\
            & scene-play & 55$\pm$10 $\rightarrow$ \textbf{100}$\pm$0 & 0$\pm$1 $\rightarrow$ 60$\pm$39 & 82$\pm$11 $\rightarrow$ \textbf{100}$\pm$1 & 0$\pm$0 $\rightarrow$ 0$\pm$0 & 92$\pm$23 $\rightarrow$ \textbf{100}$\pm$0 & 96$\pm$2 $\rightarrow$ \textbf{100}$\pm$0 \\
            & cube-double-play & 6$\pm$5 $\rightarrow$ 28$\pm$28 & 12$\pm$3 $\rightarrow$ 41$\pm$2 & 40$\pm$11 $\rightarrow$ 92$\pm$3 & 29$\pm$4 $\rightarrow$ 42$\pm$7 & 65$\pm$7 $\rightarrow$ 79$\pm$6 & 59$\pm$13 $\rightarrow$ \textbf{98}$\pm$2 \\
            & puzzle-3x3-play & 90$\pm$5 $\rightarrow$ \textbf{100}$\pm$0 & 6$\pm$7 $\rightarrow$ 0$\pm$0 & 75$\pm$11 $\rightarrow$ 73$\pm$38 & 58$\pm$42 $\rightarrow$ 84$\pm$7 & 2$\pm$3 $\rightarrow$ 0$\pm$0 & 99$\pm$1 $\rightarrow$ \textbf{100}$\pm$0\\
            & puzzle-4x4-play & 8$\pm$4 $\rightarrow$ 14$\pm$35 & 23$\pm$2 $\rightarrow$ 19$\pm$12 & 8$\pm$3 $\rightarrow$ 38$\pm$52 & 22$\pm$2 $\rightarrow$ 6$\pm$1 & 14$\pm$3 $\rightarrow$ 51$\pm$12 & 17$\pm$7 $\rightarrow$ \textbf{100}$\pm$1\\
          \bottomrule
        \end{tabular}
      } 
    \end{sc}
    \end{small}
  \end{center}
  \vskip -0.15in
\end{table*}

\subsection{Ablation Studies}
We now analyze the role of each component in FAN. First, we compare Flow Anchoring with prior behavior regularization techniques, and second, we compare $\mathcal{T}_n^\pi$ to the standard expected action value functions. Moreover, we investigate how FAN performs in the offline-to-online setting. Further ablation studies include experiments on the effect of $\kappa$, the effect of using both $Z_\psi$ and $Q_\phi$, and the effect of using multiple noise samples for the value estimation.
The results were collected from training on the five default tasks of OGBench (\texttt{antsoccer}, \texttt{scene}, \texttt{cube-double}, \texttt{puzzle-3x3}, and \texttt{puzzle-4x4}), following the hyperparameters stated in Tables~\ref{tab:hyperparams_ablation1} and \ref{tab:hyperparams_offline_to_online}.

\textbf{Why Flow Anchoring?}
Besides its computational efficiency, we investigate how our behavior regularization affects final performance. For this, we fix training the noise-conditioned critic with $\mathcal{T}^\pi_n$ and compare three different behavior regularization techniques: \textbf{NBRAC} using actor-critic standard behavior cloning (BC) from ReBRAC~\cite{tarasov2023rebrac}, \textbf{NFQL} using actor flow BC from FQL~\cite{park2025fql}, and \textbf{FAN} using actor-critic Flow Anchoring.
The upper part of Figure~\ref{fig:ablation12} shows that Flow Anchoring leads to better performance (or performance within 95\% of the best) in 4 out of 5 tasks. Therefore, we conclude that Flow Anchoring is the behavior regularization technique that best suits $\mathcal{T}_n^\pi$, in terms of both task performance and efficiency.

\textbf{Why $\mathcal{T}_n^\pi$?}
We also investigate how training for $\mathcal{T}^\pi_n$ performs with Flow Anchoring. For this, we compare FAN with FAQL, which is a variant of FQL using the standard non-distributional Bellman operator but with Flow Anchoring.
The lower part of Figure~\ref{fig:ablation12} shows that $\mathcal{T}_n^\pi$ leads to better performance (or performance within 95\% of the best) on 4 out of 5 tasks. Hence, we conclude that $\mathcal{T}_n^\pi$ helps improve performance when used with Flow Anchoring.

\textbf{Offline-to-Online.}
We further evaluate how FAN performs during online fine-tuning. For this, we conduct an additional 1M steps of training with environment interactions after the initial 1M step offline training. Specifically, we lower the $\alpha_1,\alpha_2$ values in the online phase, relaxing constraints to allow for broad exploration.
According to Table~\ref{tab:off_to_on_result}, FAN achieves state-of-the-art performance on 4 out of 5 tasks, and therefore, we conclude that FAN also performs well in offline-to-online settings.

\textbf{Sensitivity to $\kappa$.}
% \label{appendix:ablation5}
We analyze how varying $\kappa$ affects the final performance of the policy. We present evaluations on OGBench \texttt{antsoccer-arena-navigate-task1} ($\alpha_1=10, \alpha_2=0.1$) and \texttt{puzzle-4x4-play-task1} ($\alpha_1=100, \alpha_2=3$), with $\kappa \in \{0.5, 0.7, 0.9, 0.99\}$.

\begin{figure}[h]
    \centering
    \includegraphics[width=\linewidth]{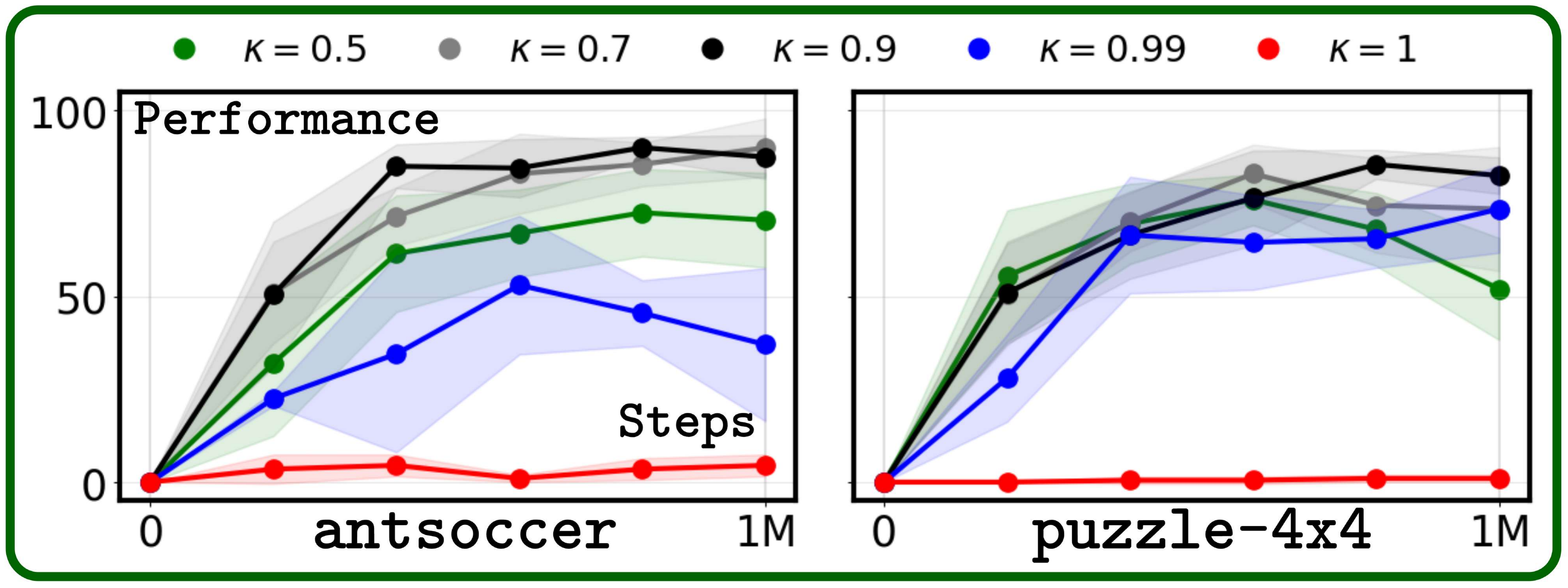}
    \caption{\textbf{Ablation Study on Sensitivity to $\kappa$.} The black line ($\kappa=0.9$) empirically achieves the best average performance.}
    \label{fig:ablation5}
\end{figure}

In Figure~\ref{fig:ablation5}, setting $\kappa=0.9$ yields the best performance on these two tasks. Performance improves as $\kappa$ increases from 0.5 to 0.9. However, setting $\kappa$ too close to 1 (e.g., $\kappa=0.99$ or $1$) leads to performance degradation. Therefore, we fix $\kappa=0.9$ for the main experiments, reducing the hyperparameter search space to only $\alpha_1$ and $\alpha_2$.

\begin{figure*}[h]
    \centering
    \includegraphics[width=\linewidth]{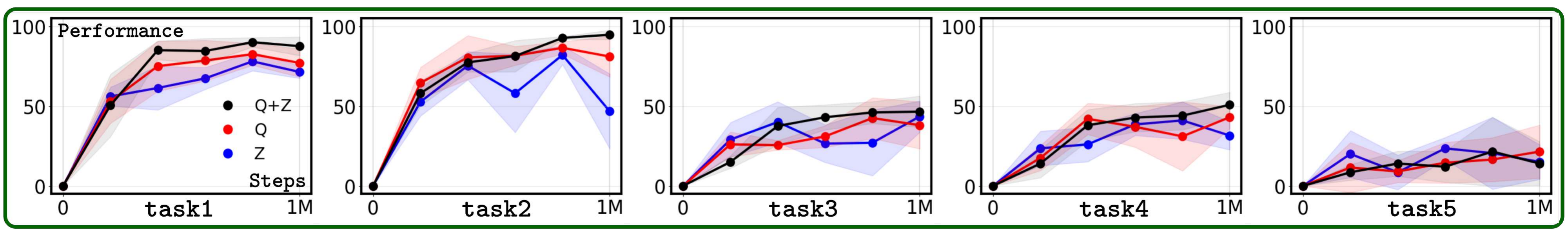}
    \caption{\textbf{Ablation Study on Value Maximization in Policy Training.} The black line (maximizing both $Z_\psi$ and $Q_\phi$) empirically achieves the best average performance compared to maximizing either component individually.}
    \label{fig:ablation3}
\end{figure*}

\begin{figure*}[h]
    \centering
    \includegraphics[width=\linewidth]{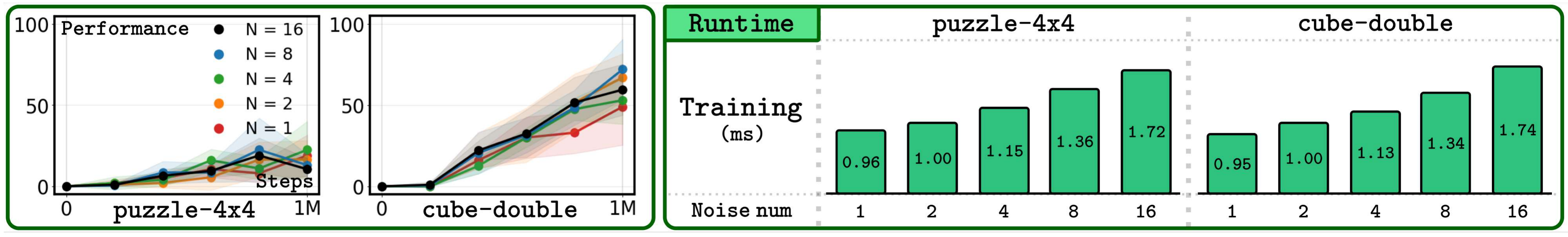}
    \caption{\textbf{Ablation Study on Increased Number of Noise Samples for Value Training.} \textit{(Left)} Performance curves with varying numbers of noise samples. \textit{(Right)} Runtime comparison with varying numbers of noise samples.}
    \label{fig:ablation4}
\end{figure*}

\textbf{Why Maximize both $Z_\psi$ and $Q_\phi$?}
% \label{appendix:ablation3}
For policy training, we evaluated how performance varies across three configurations: (1) maximizing both $Z_\psi$ and $Q_\phi$, (2) maximizing only $Q_\phi$, and (3) maximizing only $Z_\psi$. We conducted evaluations on five tasks within the OGBench \texttt{antsoccer-arena-navigate} environment, setting $\alpha_1=10$ and $\alpha_2=0.1$.
Figure~\ref{fig:ablation3} demonstrates that maximizing both $Z_\psi$ and $Q_\phi$ yields superior performance, justifying our design choice in Eq.\eqref{eq:value_maximize}.

\textbf{More Noise Samples for Training $Q_\phi$?}
% \label{appendix:ablation4}
Although we utilize a single noise sample per $Q_\phi$ update, we investigate how increasing the number of noise samples (analogous to using multiple quantiles) affects policy performance. To this end, we conducted evaluations on the default tasks of the OGBench \texttt{puzzle-4x4-play} ($\alpha_1=100,\alpha_2=3$) and \texttt{cube-double-play} ($\alpha_1=100,\alpha_2=0$) environments.

Figure~\ref{fig:ablation4} shows that performance does not significantly improve even if we increase the number of noise samples for training the value function. While we observe that the training runtime increases sub-linearly, the added computational cost does not yield proportional performance gains. Consequently, we adopt a single noise sample for training $Q_\phi$.
\section{Conclusion}
\label{sec:conclu}

In this work, we aimed to achieve state-of-the-art offline RL performance while maximizing computational efficiency. Recognizing that expressive function approximators are crucial for high performance, we investigated how to efficiently employ generative modeling and distributional return information. Our proposed method, FAN, addresses this challenge by leveraging Flow Anchoring and the operator $\mathcal{T}_n^\pi$, both of which are theoretically grounded. Empirical results demonstrate that FAN achieves superior performance and efficiency, while ablation studies validate the individual contributions of our design choices. Finally, we highlighted FAN's strong capabilities in offline-to-online adaptation.

We believe FAN opens several avenues for future work. First, the concept of Flow Anchoring holds promise for online RL settings with flow policies. Since Flow Anchoring does not directly sample dataset actions, it is effectively complemented by environment interaction, as observed in our offline-to-online experiments. Therefore, applying it to off-policy online RL could yield benefits. Second, beyond efficiency, future research could focus on leveraging $\mathcal{T}_n^\pi$ to maximize task performance. For example, extending its application to model-based RL, risk-sensitive tasks, or goal-conditioned settings represents a promising direction.
\section*{Acknowledgements}

This research was supported in part by NSF grant SHF-2505085, and by the W.A."Tex" Moncrief Chair of Computing at the University of Texas at Austin. 

\section*{Impact Statement}

This paper presents work whose goal is to advance the field of Machine Learning, specifically by improving the computational efficiency of offline reinforcement learning. By reducing the floating point operations (FLOPs) and runtime required for training and inference, our method contributes to energy-efficient AI and facilitates deploying capable policies on resource-constrained robotic hardware. While widespread deployment of autonomous agents carries inherent societal implications—ranging from safety challenges to economic impacts—these risks are intrinsic to reinforcement learning as a whole. Because our contribution focuses strictly on algorithmic efficiency rather than enabling disruptive capabilities, we do not foresee negative consequences unique to this method that require emphasis beyond standard safety considerations.

\bibliography{ref/reference}

@article{banach1922operations,
  title = {Sur les op{\'e}rations dans les ensembles abstraits et leur application aux {\'e}quations int{\'e}grales},
  author = {Banach, Stefan},
  journal = {Fundamenta Mathematicae},
  volume = {3},
  number = {1},
  pages = {133--181},
  year = {1922},
  doi = {10.4064/fm-3-1-133-181}
}

@article{newey1987asymmetric,
  title = {Asymmetric Least Squares Estimation and Testing},
  author = {Newey, Whitney K. and Powell, James L.},
  journal = {Econometrica},
  volume = {55},
  number = {4},
  pages = {819--847},
  year = {1987},
  doi = {10.2307/1911031}
}

@article{watkins1992q,
  title = {{Q}-learning},
  author = {Watkins, Christopher J. C. H. and Dayan, Peter},
  journal = {Machine Learning},
  volume = {8},
  number = {3--4},
  pages = {279--292},
  year = {1992},
  publisher = {Springer},
  doi = {10.1007/BF00992698}
}

@book{puterman1994mdp,
  title = {{Markov} Decision Processes: Discrete Stochastic Dynamic Programming},
  author = {Puterman, Martin L.},
  year = {1994},
  publisher = {Wiley},
  address = {New York}
}

@book{bertsekas1996ndp,
  title = {Neuro-Dynamic Programming},
  author = {Bertsekas, Dimitri P. and Tsitsiklis, John N.},
  year = {1996},
  publisher = {Athena Scientific},
  address = {Belmont, MA}
}

@book{lee2003smooth,
  title = {Introduction to Smooth Manifolds},
  author = {Lee, John M.},
  series = {Graduate Texts in Mathematics},
  volume = {218},
  year = {2003},
  publisher = {Springer},
  address = {New York}
}

@inproceedings{engel2005reinforcement,
  title = {Reinforcement Learning with {Gaussian} Processes},
  author = {Engel, Yaakov and Mannor, Shie and Meir, Ron},
  booktitle = {Proceedings of the 22nd International Conference on Machine Learning},
  pages = {201--208},
  year = {2005},
  publisher = {ACM},
  doi = {10.1145/1102351.1102377}
}

@article{nemirovski2009robust,
  title = {Robust Stochastic Approximation Approach to Stochastic Programming},
  author = {Nemirovski, Arkadi and Juditsky, Anatoli and Lan, Guanghui and Shapiro, Alexander},
  journal = {SIAM Journal on Optimization},
  volume = {19},
  number = {4},
  pages = {1574--1609},
  year = {2009},
  doi = {10.1137/070704277}
}

@inproceedings{morimura2010nonparametric,
  title = {Nonparametric Return Distribution Approximation for Reinforcement Learning},
  author = {Morimura, Tetsuro and Sugiyama, Masashi and Kashima, Hisashi and Hachiya, Hirotaka and Tanaka, Toshiyuki},
  booktitle = {Proceedings of the 27th International Conference on Machine Learning},
  pages = {799--806},
  year = {2010}
}

@inproceedings{moulines2011non,
  title = {Non-Asymptotic Analysis of Stochastic Approximation Algorithms for Machine Learning},
  author = {Moulines, Eric and Bach, Francis R.},
  booktitle = {Advances in Neural Information Processing Systems},
  volume = {24},
  year = {2011}
}

@incollection{lange2012batch,
  title = {Batch Reinforcement Learning},
  author = {Lange, Sascha and Gabel, Thomas and Riedmiller, Martin},
  booktitle = {Reinforcement Learning: State-of-the-Art},
  pages = {45--73},
  year = {2012},
  publisher = {Springer},
  doi = {10.1007/978-3-642-27645-3_2}
}

@inproceedings{kingma2014adam,
  title = {{Adam}: A Method for Stochastic Optimization},
  author = {Kingma, Diederik P. and Ba, Jimmy},
  booktitle = {Proceedings of the 3rd International Conference on Learning Representations},
  year = {2015},
  url = {https://openreview.net/forum?id=8gmWwjFyLj}
}

@article{hendrycks2016gelu,
  title = {{Gaussian} Error Linear Units ({GELU}s)},
  author = {Hendrycks, Dan and Gimpel, Kevin},
  journal = {arXiv preprint arXiv:1606.08415},
  year = {2016},
  eprint = {1606.08415},
  archivePrefix = {arXiv},
  primaryClass = {cs.LG},
  url = {https://arxiv.org/abs/1606.08415}
}

@inproceedings{bellemare2017distributional,
  title = {A Distributional Perspective on Reinforcement Learning},
  author = {Bellemare, Marc G. and Dabney, Will and Munos, R{\'e}mi},
  booktitle = {Proceedings of the 34th International Conference on Machine Learning},
  pages = {449--458},
  year = {2017},
  volume = {70},
  series = {Proceedings of Machine Learning Research},
  publisher = {PMLR}
}

@misc{xla,
  author = {{OpenXLA Team}},
  title = {{XLA}: Optimizing Compiler for Machine Learning},
  year = {2017},
  howpublished = {\url{https://github.com/openxla/xla}},
  note = {Accessed: 2026-01-07}
}

@misc{jax2018github,
  author = {Bradbury, James and Frostig, Roy and Hawkins, Peter and Johnson, Matthew James and Leary, Chris and Maclaurin, Dougal and Necula, George and Paszke, Adam and VanderPlas, Jake and Wanderman-Milne, Skye and Zhang, Qiao},
  title = {{JAX}: Composable Transformations of {Python}+{NumPy} Programs},
  year = {2018},
  howpublished = {\url{https://github.com/jax-ml/jax}}
}

@book{suttonbarto2018rl,
  title = {Reinforcement Learning: An Introduction},
  author = {Sutton, Richard S. and Barto, Andrew G.},
  edition = {2},
  year = {2018},
  publisher = {MIT Press},
  address = {Cambridge, MA}
}

@inproceedings{bhandari2018finite,
  title = {A Finite Time Analysis of Temporal Difference Learning with Linear Function Approximation},
  author = {Bhandari, Jalaj and Russo, Daniel and Singal, Raghav},
  booktitle = {Proceedings of the 31st Conference on Learning Theory},
  pages = {1691--1692},
  year = {2018},
  volume = {75},
  series = {Proceedings of Machine Learning Research},
  publisher = {PMLR}
}

@inproceedings{espeholt2018impala,
  title = {{IMPALA}: Scalable Distributed Deep-{RL} with Importance Weighted Actor-Learner Architectures},
  author = {Espeholt, Lasse and Soyer, Hubert and Munos, R{\'e}mi and Simonyan, Karen and Mnih, Volodymyr and Ward, Tom and Doron, Yotam and Firoiu, Vlad and Harley, Tim and Dunning, Iain and Legg, Shane and Kavukcuoglu, Koray},
  booktitle = {Proceedings of the 35th International Conference on Machine Learning},
  pages = {1407--1416},
  year = {2018},
  volume = {80},
  series = {Proceedings of Machine Learning Research},
  publisher = {PMLR}
}

@inproceedings{dabney2018qrdqn,
  title = {Distributional Reinforcement Learning with Quantile Regression},
  author = {Dabney, Will and Rowland, Mark and Bellemare, Marc G. and Munos, R{\'e}mi},
  booktitle = {Proceedings of the AAAI Conference on Artificial Intelligence},
  volume = {32},
  year = {2018}
}

@inproceedings{dabney2018iqn,
  title = {Implicit Quantile Networks for Distributional Reinforcement Learning},
  author = {Dabney, Will and Ostrovski, Georg and Silver, David and Munos, R{\'e}mi},
  booktitle = {Proceedings of the 35th International Conference on Machine Learning},
  pages = {1096--1105},
  year = {2018},
  volume = {80},
  series = {Proceedings of Machine Learning Research},
  publisher = {PMLR}
}

@inproceedings{fujimoto2019off,
  title = {Off-Policy Deep Reinforcement Learning without Exploration},
  author = {Fujimoto, Scott and Meger, David and Precup, Doina},
  booktitle = {Proceedings of the 36th International Conference on Machine Learning},
  pages = {2052--2062},
  year = {2019},
  volume = {97},
  series = {Proceedings of Machine Learning Research},
  publisher = {PMLR}
}

@inproceedings{kumar2019stabilizing,
  title = {Stabilizing Off-Policy {Q}-Learning via Bootstrapping Error Accumulation Reduction},
  author = {Kumar, Aviral and Fu, Justin and Soh, Matthew and Tucker, George and Levine, Sergey},
  booktitle = {Advances in Neural Information Processing Systems},
  volume = {32},
  year = {2019}
}

@inproceedings{srikant2019finite,
  title = {Finite-Time Error Bounds for Linear Stochastic Approximation and {TD} Learning},
  author = {Srikant, Rayadurgam and Ying, Lei},
  booktitle = {Proceedings of the 32nd Conference on Learning Theory},
  pages = {2803--2830},
  year = {2019},
  volume = {99},
  series = {Proceedings of Machine Learning Research},
  publisher = {PMLR}
}

@article{wu2019behavior,
  title = {Behavior Regularized Offline Reinforcement Learning},
  author = {Wu, Yifan and Tucker, George and Nachum, Ofir},
  journal = {arXiv preprint arXiv:1911.11361},
  year = {2019},
  eprint = {1911.11361},
  archivePrefix = {arXiv},
  primaryClass = {cs.LG},
  url = {https://arxiv.org/abs/1911.11361}
}

@article{peng2019advantage,
  title = {Advantage-Weighted Regression: Simple and Scalable Off-Policy Reinforcement Learning},
  author = {Peng, Xue Bin and Kumar, Aviral and Zhang, Grace and Levine, Sergey},
  journal = {arXiv preprint arXiv:1910.00177},
  year = {2019},
  eprint = {1910.00177},
  archivePrefix = {arXiv},
  primaryClass = {cs.LG},
  url = {https://arxiv.org/abs/1910.00177}
}

@inproceedings{rowland2019statistics,
  title = {Statistics and Samples in Distributional Reinforcement Learning},
  author = {Rowland, Mark and Dadashi, Robert and Kumar, Saurabh and Munos, R{\'e}mi and Bellemare, Marc G. and Dabney, Will},
  booktitle = {Proceedings of the 36th International Conference on Machine Learning},
  pages = {5528--5536},
  year = {2019},
  volume = {97},
  series = {Proceedings of Machine Learning Research},
  publisher = {PMLR}
}

@article{fu2020d4rl,
  title = {{D4RL}: Datasets for Deep Data-Driven Reinforcement Learning},
  author = {Fu, Justin and Kumar, Aviral and Nachum, Ofir and Tucker, George and Levine, Sergey},
  journal = {arXiv preprint arXiv:2004.07219},
  year = {2020},
  eprint = {2004.07219},
  archivePrefix = {arXiv},
  primaryClass = {cs.LG},
  url = {https://arxiv.org/abs/2004.07219}
}

@article{levine2020offline,
  title = {Offline Reinforcement Learning: Tutorial, Review, and Perspectives on Open Problems},
  author = {Levine, Sergey and Kumar, Aviral and Tucker, George and Fu, Justin},
  journal = {arXiv preprint arXiv:2005.01643},
  year = {2020},
  eprint = {2005.01643},
  archivePrefix = {arXiv},
  primaryClass = {cs.LG},
  url = {https://arxiv.org/abs/2005.01643}
}

@inproceedings{kumar2020conservative,
  title = {Conservative {Q}-Learning for Offline Reinforcement Learning},
  author = {Kumar, Aviral and Zhou, Aurick and Tucker, George and Levine, Sergey},
  booktitle = {Advances in Neural Information Processing Systems},
  volume = {33},
  pages = {1179--1191},
  year = {2020}
}

@inproceedings{agarwal2020rem,
  title = {An Optimistic Perspective on Offline Reinforcement Learning},
  author = {Agarwal, Rishabh and Schuurmans, Dale and Norouzi, Mohammad},
  booktitle = {Proceedings of the 37th International Conference on Machine Learning},
  pages = {104--114},
  year = {2020},
  volume = {119},
  series = {Proceedings of Machine Learning Research},
  publisher = {PMLR}
}

@inproceedings{wu2021uncertainty,
  title = {Uncertainty Weighted Actor-Critic for Offline Reinforcement Learning},
  author = {Wu, Yue and Zhai, Shuangfei and Srivastava, Nitish and Susskind, Joshua and Zhang, Jian and Salakhutdinov, Ruslan and Goh, Hanlin},
  booktitle = {Proceedings of the 38th International Conference on Machine Learning},
  pages = {11319--11328},
  year = {2021},
  volume = {139},
  series = {Proceedings of Machine Learning Research},
  publisher = {PMLR}
}

@inproceedings{urpi2021oraac,
  title = {Risk-Averse Offline Reinforcement Learning},
  author = {Urp{\'{i}}, N{\'u}ria Armengol and Curi, Sebastian and Krause, Andreas},
  booktitle = {Proceedings of the 9th International Conference on Learning Representations},
  year = {2021},
  url = {https://openreview.net/forum?id=TBIzh9b5eaz}
}

@inproceedings{fujimoto2021minimalist,
  title = {A Minimalist Approach to Offline Reinforcement Learning},
  author = {Fujimoto, Scott and Gu, Shixiang Shane},
  booktitle = {Advances in Neural Information Processing Systems},
  volume = {34},
  pages = {20132--20145},
  year = {2021}
}

@inproceedings{kostrikov2021iql,
  title = {Offline Reinforcement Learning with Implicit {Q}-Learning},
  author = {Kostrikov, Ilya and Nair, Ashvin and Levine, Sergey},
  booktitle = {Proceedings of the 10th International Conference on Learning Representations},
  year = {2022},
  url = {https://openreview.net/forum?id=68n2s9ZJWF8}
}

@inproceedings{ma2021codac,
  title = {Conservative Offline Distributional Reinforcement Learning},
  author = {Ma, Yecheng Jason and Jayaraman, Dinesh and Bastani, Osbert},
  booktitle = {Advances in Neural Information Processing Systems},
  volume = {34},
  pages = {19235--19247},
  year = {2021}
}

@inproceedings{chen2022offline,
  title = {Offline Reinforcement Learning via High-Fidelity Generative Behavior Modeling},
  author = {Chen, Huayu and Lu, Cheng and Ying, Chengyang and Su, Hang and Zhu, Jun},
  booktitle = {Proceedings of the 11th International Conference on Learning Representations},
  year = {2023},
  url = {https://openreview.net/forum?id=42zs3qa2kpy}
}

@inproceedings{lipman2022flow,
  title = {Flow Matching for Generative Modeling},
  author = {Lipman, Yaron and Chen, Ricky T. Q. and Ben-Hamu, Heli and Nickel, Maximilian and Le, Matt},
  booktitle = {Proceedings of the 11th International Conference on Learning Representations},
  year = {2023},
  url = {https://openreview.net/forum?id=PqvMRDCJT9t}
}

@inproceedings{liu2022flow,
  title = {Flow Straight and Fast: Learning to Generate and Transfer Data with Rectified Flow},
  author = {Liu, Xingchao and Gong, Chengyue and Liu, Qiang},
  booktitle = {Proceedings of the 11th International Conference on Learning Representations},
  year = {2023},
  url = {https://openreview.net/forum?id=XVjTT1nw5z}
}

@inproceedings{albergo2022building,
  title = {Building Normalizing Flows with Stochastic Interpolants},
  author = {Albergo, Michael S. and {Vanden-Eijnden}, Eric},
  booktitle = {Proceedings of the 11th International Conference on Learning Representations},
  year = {2023},
  url = {https://openreview.net/forum?id=li7qeBbCR1t}
}

@inproceedings{tarasov2023corl,
  title = {{CORL}: Research-Oriented Deep Offline Reinforcement Learning Library},
  author = {Tarasov, Denis and Nikulin, Alexander and Akimov, Dmitry and Kurenkov, Vladislav and Kolesnikov, Sergey},
  booktitle = {Advances in Neural Information Processing Systems},
  volume = {36},
  pages = {30997--31020},
  year = {2023}
}

@inproceedings{nikulin2023anti,
  title = {Anti-Exploration by Random Network Distillation},
  author = {Nikulin, Alexander and Kurenkov, Vladislav and Tarasov, Denis and Kolesnikov, Sergey},
  booktitle = {Proceedings of the 40th International Conference on Machine Learning},
  pages = {26228--26244},
  year = {2023},
  volume = {202},
  series = {Proceedings of Machine Learning Research},
  publisher = {PMLR}
}

@inproceedings{sikchi2023dual,
  title = {Dual {RL}: Unification and New Methods for Reinforcement and Imitation Learning},
  author = {Sikchi, Harshit and Zheng, Qinqing and Zhang, Amy and Niekum, Scott},
  booktitle = {Proceedings of the 12th International Conference on Learning Representations},
  year = {2024},
  url = {https://openreview.net/forum?id=xt9Bu66rqv}
}

@inproceedings{garg2023extreme,
  title = {Extreme {Q}-Learning: {MaxEnt} {RL} without Entropy},
  author = {Garg, Divyansh and Hejna, Joey and Geist, Matthieu and Ermon, Stefano},
  booktitle = {Proceedings of the 11th International Conference on Learning Representations},
  year = {2023},
  url = {https://openreview.net/forum?id=SJ0Lde3tRL}
}

@inproceedings{xu2023offline,
  title = {Offline {RL} with No {OOD} Actions: In-Sample Learning via Implicit Value Regularization},
  author = {Xu, Haoran and Jiang, Li and Li, Jianxiong and Yang, Zhuoran and Wang, Zhaoran and Chan, Victor W. K. and Zhan, Xianyuan},
  booktitle = {Proceedings of the 11th International Conference on Learning Representations},
  year = {2023},
  url = {https://openreview.net/forum?id=ueYYgo2pSSU}
}

@inproceedings{tarasov2023rebrac,
  title = {Revisiting the Minimalist Approach to Offline Reinforcement Learning},
  author = {Tarasov, Denis and Kurenkov, Vladislav and Nikulin, Alexander and Kolesnikov, Sergey},
  booktitle = {Advances in Neural Information Processing Systems},
  volume = {36},
  pages = {11592--11620},
  year = {2023}
}

@article{hansen2023idql,
  title = {{IDQL}: Implicit {Q}-Learning as an Actor-Critic Method with Diffusion Policies},
  author = {Hansen-Estruch, Philippe and Kostrikov, Ilya and Janner, Michael and Kuba, Jakub Grudzien and Levine, Sergey},
  journal = {arXiv preprint arXiv:2304.10573},
  year = {2023},
  eprint = {2304.10573},
  archivePrefix = {arXiv},
  primaryClass = {cs.LG},
  url = {https://arxiv.org/abs/2304.10573}
}

@inproceedings{wang2023benefits,
  title = {The Benefits of Being Distributional: Small-Loss Bounds for Reinforcement Learning},
  author = {Wang, Kaiwen and Zhou, Kevin and Wu, Runzhe and Kallus, Nathan and Sun, Wen},
  booktitle = {Advances in Neural Information Processing Systems},
  volume = {36},
  pages = {2275--2312},
  year = {2023}
}

@inproceedings{kim2023trust,
  title = {Trust Region-Based Safe Distributional Reinforcement Learning for Multiple Constraints},
  author = {Kim, Dohyeong and Lee, Kyungjae and Oh, Songhwai},
  booktitle = {Advances in Neural Information Processing Systems},
  volume = {36},
  pages = {19908--19939},
  year = {2023}
}

@inproceedings{venkatraman2023reasoning,
  title = {Reasoning with Latent Diffusion in Offline Reinforcement Learning},
  author = {Venkatraman, Siddarth and Khaitan, Shivesh and Akella, Ravi Tej and Dolan, John and Schneider, Jeff and Berseth, Glen},
  booktitle = {Proceedings of the 12th International Conference on Learning Representations},
  year = {2024},
  url = {https://openreview.net/forum?id=tGQirjzddO}
}

@inproceedings{chen2023score,
  title = {Score Regularized Policy Optimization through Diffusion Behavior},
  author = {Chen, Huayu and Lu, Cheng and Wang, Zhengyi and Su, Hang and Zhu, Jun},
  booktitle = {Proceedings of the 12th International Conference on Learning Representations},
  year = {2024},
  url = {https://openreview.net/forum?id=xCRr9DrolJ}
}

@inproceedings{jullien2023distributional,
  title = {Distributional Reinforcement Learning with Dual Expectile-Quantile Regression},
  author = {Jullien, Sami and Deffayet, Romain and Renders, Jean-Michel and Groth, Paul and Rijke, Maarten de},
  booktitle = {Proceedings of the Forty-First Conference on Uncertainty in Artificial Intelligence},
  pages = {1909--1923},
  year = {2025},
  volume = {286},
  series = {Proceedings of Machine Learning Research},
  publisher = {PMLR},
  url = {https://proceedings.mlr.press/v286/jullien25a.html}
}

@inproceedings{park2024ogbench,
  title = {{OGBench}: Benchmarking Offline Goal-Conditioned {RL}},
  author = {Park, Seohong and Frans, Kevin and Eysenbach, Benjamin and Levine, Sergey},
  booktitle = {Proceedings of the 13th International Conference on Learning Representations},
  year = {2025},
  url = {https://openreview.net/forum?id=M992mjgKzI}
}

@article{he2024aligniql,
  title = {{AlignIQL}: Policy Alignment in Implicit {Q}-Learning through Constrained Optimization},
  author = {He, Longxiang and Shen, Li and Tan, Junbo and Wang, Xueqian},
  journal = {arXiv preprint arXiv:2405.18187},
  year = {2024},
  eprint = {2405.18187},
  archivePrefix = {arXiv},
  primaryClass = {cs.LG},
  url = {https://arxiv.org/abs/2405.18187}
}

@inproceedings{farebrother2024stop,
  title = {Stop Regressing: Training Value Functions via Classification for Scalable Deep {RL}},
  author = {Farebrother, Jesse and Orbay, Jordi and Vuong, Quan and Ali Taiga, Adrien and Chebotar, Yevgen and Xiao, Ted and Irpan, Alex and Levine, Sergey and Castro, Pablo Samuel and Faust, Aleksandra and Kumar, Aviral and Agarwal, Rishabh},
  booktitle = {Proceedings of the 41st International Conference on Machine Learning},
  pages = {13049--13071},
  year = {2024},
  volume = {235},
  series = {Proceedings of Machine Learning Research},
  publisher = {PMLR},
  url = {https://proceedings.mlr.press/v235/farebrother24a.html}
}

@inproceedings{wang2024more,
  title = {More Benefits of Being Distributional: Second-Order Bounds for Reinforcement Learning},
  author = {Wang, Kaiwen and Oertell, Owen and Agarwal, Alekh and Kallus, Nathan and Sun, Wen},
  booktitle = {Proceedings of the 41st International Conference on Machine Learning},
  pages = {51192--51213},
  year = {2024},
  volume = {235},
  series = {Proceedings of Machine Learning Research},
  publisher = {PMLR},
  url = {https://proceedings.mlr.press/v235/wang24ba.html}
}

@inproceedings{ding2024diffusion,
  title = {Diffusion-Based Reinforcement Learning via {Q}-Weighted Variational Policy Optimization},
  author = {Ding, Shutong and Hu, Ke and Zhang, Zhenhao and Ren, Kan and Zhang, Weinan and Yu, Jingyi and Wang, Jingya and Shi, Ye},
  booktitle = {Advances in Neural Information Processing Systems},
  volume = {37},
  pages = {53945--53968},
  year = {2024}
}

@inproceedings{mao2024diffusion,
  title = {{Diffusion-DICE}: In-Sample Diffusion Guidance for Offline Reinforcement Learning},
  author = {Mao, Liyuan and Xu, Haoran and Zhan, Xianyuan and Zhang, Weinan and Zhang, Amy},
  booktitle = {Advances in Neural Information Processing Systems},
  volume = {37},
  pages = {98806--98834},
  year = {2024}
}

@inproceedings{chen2024diffusiontr,
  title = {Diffusion Policies Creating a Trust Region for Offline Reinforcement Learning},
  author = {Chen, Tianyu and Wang, Zhendong and Zhou, Mingyuan},
  booktitle = {Advances in Neural Information Processing Systems},
  volume = {37},
  pages = {50098--50125},
  year = {2024}
}

@inproceedings{chen2024aligning,
  title = {Aligning Diffusion Behaviors with {Q}-Functions for Efficient Continuous Control},
  author = {Chen, Huayu and Zheng, Kaiwen and Su, Hang and Zhu, Jun},
  booktitle = {Advances in Neural Information Processing Systems},
  volume = {37},
  pages = {119949--119975},
  year = {2024}
}

@article{ma2025dsac,
  title = {{DSAC}: Distributional Soft Actor-Critic for Risk-Sensitive Reinforcement Learning},
  author = {Ma, Xiaoteng and Chen, Junyao and Xia, Li and Yang, Jun and Zhao, Qianchuan and Zhou, Zhengyuan},
  journal = {Journal of Artificial Intelligence Research},
  volume = {83},
  year = {2025},
  doi = {10.1613/jair.1.17526}
}

@inproceedings{lee2025temporal,
  title = {Temporal Distance-Aware Transition Augmentation for Offline Model-Based Reinforcement Learning},
  author = {Lee, Dongsu and Kwon, Minhae},
  booktitle = {Proceedings of the 42nd International Conference on Machine Learning},
  pages = {33227--33242},
  year = {2025},
  volume = {267},
  series = {Proceedings of Machine Learning Research},
  publisher = {PMLR},
  url = {https://proceedings.mlr.press/v267/lee25p.html}
}

@inproceedings{gao2025behavior,
  title = {Behavior-Regularized Diffusion Policy Optimization for Offline Reinforcement Learning},
  author = {Gao, Chen-Xiao and Wu, Chenyang and Cao, Mingjun and Xiao, Chenjun and Yu, Yang and Zhang, Zongzhang},
  booktitle = {Proceedings of the 42nd International Conference on Machine Learning},
  pages = {18630--18657},
  year = {2025},
  volume = {267},
  series = {Proceedings of Machine Learning Research},
  publisher = {PMLR},
  url = {https://proceedings.mlr.press/v267/gao25q.html}
}

@inproceedings{park2025fql,
  title = {Flow {Q}-Learning},
  author = {Park, Seohong and Li, Qiyang and Levine, Sergey},
  booktitle = {Proceedings of the 42nd International Conference on Machine Learning},
  pages = {48104--48127},
  year = {2025},
  volume = {267},
  series = {Proceedings of Machine Learning Research},
  publisher = {PMLR},
  url = {https://proceedings.mlr.press/v267/park25f.html}
}

@inproceedings{park2025horizon,
  title = {Horizon Reduction Makes {RL} Scalable},
  author = {Park, Seohong and Frans, Kevin and Mann, Deepinder and Eysenbach, Benjamin and Kumar, Aviral and Levine, Sergey},
  booktitle = {Advances in Neural Information Processing Systems},
  year = {2025},
  url = {https://openreview.net/forum?id=hguaupzLCU}
}

@inproceedings{espinosa2025shortcut,
  title = {Scaling Offline {RL} via Efficient and Expressive Shortcut Models},
  author = {Espinosa-Dice, Nicolas and Zhang, Yiyi and Chen, Yiding and Guo, Bradley and Oertell, Owen and Swamy, Gokul and Brantley, Kiant{\'e} and Sun, Wen},
  booktitle = {Advances in Neural Information Processing Systems},
  year = {2025},
  url = {https://openreview.net/forum?id=phFNCK4WxR}
}

@inproceedings{wang2025one,
  title = {One-Step Generative Policies with {Q}-Learning: A Reformulation of {MeanFlow}},
  author = {Wang, Zeyuan and Li, Da and Chen, Yulin and Shi, Ye and Bai, Liang and Yu, Tianyuan and Fu, Yanwei},
  booktitle = {Proceedings of the AAAI Conference on Artificial Intelligence},
  volume = {40},
  pages = {26751--26759},
  year = {2026},
  doi = {10.1609/aaai.v40i31.39885}
}

@inproceedings{zhang2025energy,
  title = {Energy-Weighted Flow Matching for Offline Reinforcement Learning},
  author = {Zhang, Shiyuan and Zhang, Weitong and Gu, Quanquan},
  booktitle = {Proceedings of the 13th International Conference on Learning Representations},
  year = {2025},
  url = {https://openreview.net/forum?id=HA0oLUvuGI}
}

@inproceedings{dong2025vf,
  title = {Value Flows},
  author = {Dong, Perry and Zheng, Chongyi and Finn, Chelsea and Sadigh, Dorsa and Eysenbach, Benjamin},
  booktitle = {Proceedings of the 14th International Conference on Learning Representations},
  year = {2026},
  url = {https://openreview.net/forum?id=2VyNYUVF2k}
}

@article{espinosa2025expressive,
  title = {Expressive Value Learning for Scalable Offline Reinforcement Learning},
  author = {Espinosa-Dice, Nicolas and Brantley, Kiant{\'e} and Sun, Wen},
  journal = {arXiv preprint arXiv:2510.08218},
  year = {2025},
  eprint = {2510.08218},
  archivePrefix = {arXiv},
  primaryClass = {cs.LG},
  url = {https://arxiv.org/abs/2510.08218}
}

@inproceedings{park2025scalable,
  title = {Scalable Offline Model-Based {RL} with Action Chunks},
  author = {Park, Kwanyoung and Park, Seohong and Lee, Youngwoon and Levine, Sergey},
  booktitle = {Proceedings of the 14th International Conference on Learning Representations},
  year = {2026},
  url = {https://openreview.net/forum?id=WXGb9unEHo}
}

@inproceedings{tiofack2025guided,
  title = {Guided Flow Policy: Learning from High-Value Actions in Offline Reinforcement Learning},
  author = {Nguimatsia Tiofack, Franki and Le Hellard, Th{\'e}otime and Schramm, Fabian and Perrin-Gilbert, Nicolas and Carpentier, Justin},
  booktitle = {Proceedings of the 14th International Conference on Learning Representations},
  year = {2026},
  url = {https://openreview.net/forum?id=EBjy1rmpv0}
}

@inproceedings{lee2025multi,
  title = {Multi-Agent Coordination via Flow Matching},
  author = {Lee, Dongsu and Lee, Daehee and Zhang, Amy},
  booktitle = {Proceedings of the 14th International Conference on Learning Representations},
  year = {2026},
  url = {https://openreview.net/forum?id=2L6MffR0ut}
}

@article{chung2026offline,
  title={Offline Reinforcement Learning with Universal Horizon Models},
  author={Chung, Hojun and Lee, Junseo and Oh, Songhwai},
  journal={arXiv preprint arXiv:2605.15603},
  year={2026}
}

@article{kim2026compositional,
  title={Compositional Transduction with Latent Analogies for Offline Goal-Conditioned Reinforcement Learning},
  author={Kim, Junseok and Kim, Dohyeong and Hong, Mineui and Oh, Songhwai},
  journal={arXiv preprint arXiv:2605.20609},
  year={2026}
}
\bibliographystyle{style/icml2026}

\newpage
\appendix
\onecolumn

\section*{Appendix}

\section{Details on the operator $\mathcal{T}^\pi_n$}
\label{appendix:Tn}

Recall that the operator for the noise-conditioned critic is defined as
\begin{equation}
\label{eq:Tn_def}
\mathcal{T}^\pi_{n} Q(s,a,\epsilon')
\ \overset{d}{=}\ 
r + \gamma\, \operatorname{ess\,sup}_{\epsilon\sim\mathcal{N}(0,I_d)}\,
Q\!\bigl(s', \pi(s',\epsilon'), \epsilon\bigr),
\quad \epsilon'\sim\mathcal{N}(0,I_d).
\end{equation}
This section introduces the measure-theoretic objects used by the operator
$\mathcal{T}^\pi_n$ and highlights three theoretical motivations.

\subsection{Measure-Theoretic Notation for $\mathcal{T}_n^\pi$}

\paragraph{$\sigma$-algebras.}
A \emph{$\sigma$-algebra} on a set $\Omega$ is a collection $\mathcal{F} \subseteq 2^\Omega$
satisfying:
\begin{enumerate}
    \item $\Omega \in \mathcal{F}$,
    \item if $A \in \mathcal{F}$, then its complement $A^c := \Omega \setminus A$ also belongs to $\mathcal{F}$,
    \item if $\{A_i\}_{i=1}^\infty \subseteq \mathcal{F}$, then the countable union
    $\bigcup_{i=1}^\infty A_i$ belongs to $\mathcal{F}$.
\end{enumerate}
Elements of a $\sigma$-algebra are called \emph{measurable sets} or \emph{events}.
These closure properties ensure that probabilistic statements remain well defined under
standard set operations and under limiting constructions arising from countable unions and
intersections.
Given a set $\Omega$, the smallest $\sigma$-algebra containing a collection
$\mathcal{C} \subseteq 2^\Omega$ is the \emph{$\sigma$-algebra generated by $\mathcal{C}$}.

\paragraph{Topological spaces, Borel sets, and Borel measures.}
Let $\mathcal{X}$ be a topological space (e.g., $\mathbb{R}$ or $\mathbb{R}^d$ with the usual Euclidean topology).
The \emph{Borel $\sigma$-algebra} on $\mathcal{X}$, denoted $\mathcal{B}(\mathcal{X})$, is the smallest
$\sigma$-algebra containing all open subsets of $\mathcal{X}$; its elements are called \emph{Borel sets}.
A \emph{Borel probability measure} on $\mathcal{X}$ is a function
$\mu:\mathcal{B}(\mathcal{X})\to[0,1]$ satisfying:
(i) $\mu(\mathcal{X})=1$,
(ii) $\mu(A)\ge 0$ for all $A\in\mathcal{B}(\mathcal{X})$, and
(iii) for any pairwise disjoint collection $\{A_i\}_{i=1}^\infty \subseteq \mathcal{B}(\mathcal{X})$, $\mu\Bigl(\bigcup_{i=1}^\infty A_i\Bigr)=\sum_{i=1}^\infty \mu(A_i).$
We denote by $\mathscr{P}(\mathcal{X})$ the set of all Borel probability measures on $\mathcal{X}$.

\paragraph{Probability space and random variables.}
A \emph{probability space} is a triple $(\Omega,\mathcal{F},\mathbb{P})$, where $\Omega$ is the sample space,
$\mathcal{F}$ is a $\sigma$-algebra of measurable events on $\Omega$, and $\mathbb{P}$ is a probability measure
on $(\Omega,\mathcal{F})$.
A real-valued \emph{random variable} is a measurable map
\[
X:(\Omega,\mathcal{F})\to(\mathbb{R},\mathcal{B}(\mathbb{R})),
\]
where $\mathcal{B}(\mathbb{R})$ is the Borel $\sigma$-algebra on $\mathbb{R}$.
The \emph{distribution} (law) of $X$ is the pushforward measure $\mathcal{L}(X):=X_\#\mathbb{P}\in\mathscr{P}(\mathbb{R})$.

\paragraph{Pushforward measure ($\#$).}
Let $\mathscr{P}(\mathbb{R})$ denote the set of Borel probability measures on $\mathbb{R}$.
For a measurable function $f:\mathcal{X}\to\mathbb{R}$ and a probability measure $\mu$ on $\mathcal{X}$,
the \emph{pushforward} measure $f_\#\mu\in\mathscr{P}(\mathbb{R})$ is defined by
\begin{equation}
\label{eq:pushforward_app_full}
(f_\#\mu)(A) := \mu\bigl(f^{-1}(A)\bigr), \qquad \forall\,A\in\mathcal{B}(\mathbb{R}).
\end{equation}
Equivalently, if $X\sim\mu$, then $f(X)\sim f_\#\mu$.

\paragraph{Dirac measure.}
For $x\in\mathbb{R}$, the Dirac measure $\delta_x\in\mathscr{P}(\mathbb{R})$ is defined by
$\delta_x(A)=\mathbf{1}\{x\in A\}$ for all Borel sets $A\subset\mathbb{R}$.

\paragraph{Essential supremum.}
Let $X:(\Omega,\mathcal{F})\to(\mathbb{R},\mathcal{B}(\mathbb{R}))$ be a random variable.
Its \emph{essential supremum} (w.r.t.\ $\mathbb{P}$) is
\begin{equation}
\label{eq:esssup_def_app_full}
\operatorname{ess\,sup} X
~:=~ \inf\big\{c\in\mathbb{R}:~\mathbb{P}(X>c)=0\big\}.
\end{equation}
Equivalently, it is the smallest $c$ such that $X\le c$ holds almost surely (i.e., up to a $\mathbb{P}$-null event).
When $X=f(\epsilon)$ with $\epsilon\sim\rho$, we write
\[
\operatorname{ess\,sup}_{\epsilon\sim\rho} f(\epsilon)
~:=~ \inf\big\{c\in\mathbb{R}:~\rho(\{\epsilon:f(\epsilon)>c\})=0\big\}.
\]
For a distribution $\nu\in\mathscr{P}(\mathbb{R})$, we define its essential supremum by
\begin{equation}
\label{eq:esssup_measure_app_full}
\operatorname{ess\,sup}(\nu)
~:=~ \inf\big\{c\in\mathbb{R}:~\nu(\{x\in\mathbb{R}:x>c\})=0\big\}.
\end{equation}
If $Z\sim\nu$, then $\operatorname{ess\,sup}(\nu)=\operatorname{ess\,sup}Z$.

\newpage

\subsection{Measure-Theoretic Definition of $\mathcal{T}_n^\pi$}

\paragraph{Noise space used by the policy and the value.}
Fix a noise dimension $d\in\mathbb{N}$ and define the base noise space
$(\mathbb{R}^d,\mathcal{B}(\mathbb{R}^d),\rho)$, where $\rho=\mathcal{N}(0,I_d)$ is the standard Gaussian measure.
When we write $\epsilon\sim\rho$, we mean that $\epsilon$ is a random vector with distribution $\rho$.
Concretely, taking $(\Omega,\mathcal{F},\mathbb{P})=(\mathbb{R}^d,\mathcal{B}(\mathbb{R}^d),\rho)$ and
$\epsilon(\omega)=\omega$ (the identity map) yields $\epsilon\sim\mathcal{N}(0,I_d)$ by construction.
We will use two independent noise variables:
\[
\epsilon_p\sim\rho \quad\text{(policy noise)},\qquad
\epsilon_v\sim\rho \quad\text{(value noise)},\qquad
\epsilon_p \perp \epsilon_v.
\]

\paragraph{Policy.}
Our stochastic policy is a measurable mapping
\[
\pi:\mathcal{S}\times\mathbb{R}^d\to\mathcal{A},\qquad a=\pi(s,\epsilon_p).
\]
The induced action distribution at state $s$ is
\[
\pi(\cdot \mid s) = (\pi(s,\cdot))_\#\rho.
\]

\paragraph{Noise-conditioned Q-value.}

A noise-conditioned critic is a measurable mapping
\[
Q^\pi:\mathcal{S}\times\mathcal{A}\times\mathbb{R}^d\to\mathbb{R},\qquad z=Q^\pi(s,a,\epsilon_v).
\]
For fixed $(s,a)$, the quantity $Q^\pi(s,a,\epsilon_v)$ is a random variable induced by $\epsilon_v\sim\rho$, so the critic $Q^\pi(s,a,\cdot)$ induces a return distribution
\[
\nu^\pi(s,a) := \big(Q^\pi(s,a,\cdot)\big)_\# \rho \in \mathscr{P}(\mathbb{R}).
\]
Equivalently, for any Borel set $A\subset\mathbb{R}$,
$\nu^\pi(s,a)(A)=\rho(\{\epsilon:Q^\pi(s,a,\epsilon)\in A\})$.
Thus, repeatedly sampling $\epsilon\sim\rho$ and evaluating $Q^\pi(s,a,\epsilon)$ yields i.i.d.\ samples from $\nu^\pi(s,a)$.

\paragraph{Affine Bellman shift.}
For a reward $r\in\mathbb{R}$ and discount $\gamma\in[0,1)$, define the measurable affine map
\[
b_{r,\gamma}:\mathbb{R}\to\mathbb{R},\qquad b_{r,\gamma}(z)=r+\gamma z.
\]

\paragraph{Standard distributional Bellman operator.}
Given a collection of return distributions $\nu=\{\nu(s,a)\in\mathscr{P}(\mathbb{R})\}_{s,a}$, the standard
distributional policy-evaluation operator (Eq.\eqref{eq:distbellman}) is
\begin{equation}
\label{eq:std_dist_op_app_full}
(\mathcal{T}^\pi \nu)(s,a)
:=
\mathbb{E}_{s'\sim P(\cdot\mid s,a),~\epsilon'\sim\rho}
\Big[
(b_{r(s,a),\gamma})_\# \nu\big(s',\pi(s',\epsilon')\big)
\Big].
\end{equation}
It propagates the \emph{entire} next-step return distribution through the Bellman backup.

\paragraph{The proposed operator $\mathcal{T}^\pi_n$.}
FAN replaces the next-step distribution by a Dirac mass at its upper-tail statistic
$\operatorname{ess\,sup}(\nu(s',a'))$:

\begin{tcolorbox}[
  enhanced,
  frame hidden,
  colback=green!10!white,
  borderline={2pt}{0pt}{forestgreen!50!white},
  arc=2mm
]
\begin{equation}
\label{eq:fan_dist_op_app_full}
(\mathcal{T}^\pi_n \nu)(s,a)
:=
\mathbb{E}_{s'\sim P(\cdot\mid s,a),~\epsilon'\sim\rho}
\Big[
(b_{r(s,a),\gamma})_\# \delta_{\operatorname{ess\,sup}\big(\nu(s',\pi(s',\epsilon'))\big)}
\Big].
\end{equation}
\end{tcolorbox}

To organize, sample $s'$ from the environment and $\epsilon'$ from the noise, set $a'=\pi(s',\epsilon')$,
compute the scalar $\operatorname{ess\,sup}(\nu(s',a'))$, place a Dirac mass at that scalar, and then apply the
Bellman shift $z\mapsto r+\gamma z$.

\paragraph{Connection to the sample-level equation (Eq.~\eqref{eq:noisebellman}).}
Let $\epsilon'\sim\rho$ and define $a'=\pi(s',\epsilon')$. Let the transition dynamics be deterministic (i.e., $s'$ is fixed given $(s,a)$).
The random scalar
\begin{equation}
\label{eq:fan_scalar_target_app_full}
Y_n(s,a,\epsilon')
:= r(s,a) + \gamma\, \operatorname{ess\,sup}\big(\nu^\pi(s',a')\big)
= r(s,a) + \gamma\, \operatorname{ess\,sup}_{\epsilon\sim\rho} Q^\pi(s',a',\epsilon)
\end{equation}
has distribution $\nu(Y_n(s,a,\epsilon'))=(\mathcal{T}^\pi_n \nu^\pi)(s,a)$ by Eq.\eqref{eq:fan_dist_op_app_full}.
This is exactly the right-hand side of
Eq.\eqref{eq:noisebellman}.

\newpage

\subsection{Theoretical Motivations of $\mathcal{T}_n^\pi$}
\label{appendix:Tn_two_benefits_full}

We now provide theoretical motivations underlying $\mathcal{T}^\pi_n$.

\paragraph{Motivation 1: Noise-conditioned critics represent distributions without tracking multiple statistics.}
A noise-conditioned critic provides an implicit representation of the return distribution using a
single function $Q^\pi(s,a,\epsilon)$, rather than explicitly maintaining multiple distributional
statistics (e.g., a set of quantiles/expectiles).
For each fixed $(s,a)$, the map $\epsilon \mapsto Q^\pi(s,a,\epsilon)$ defines a random variable with
distribution
\[
\nu^\pi(s,a) = \bigl(Q^\pi(s,a,\cdot)\bigr)_\#\rho \in \mathscr{P}(\mathbb{R}).
\]
Thus, the critic can be viewed as a generative model for the return distribution, with $\epsilon$
serving as latent randomness.

If one plugs this representation into a standard distributional backup, a natural single-sample
bootstrap target is
\[
Y_{\mathrm{std}} := r(s,a) + \gamma\, Q^\pi(s',a',\epsilon_v),
\qquad
a'=\pi(s',\epsilon_p),\ \ \epsilon_p\sim\rho,\ \ \epsilon_v\sim\rho,
\]
which introduces an additional source of stochasticity through the critic-noise draw $\epsilon_v$ at the
next-state evaluation. When only one (or a small number of) $\epsilon_v$ samples are used per transition,
this \emph{bootstrap noise} can dominate the variance of TD updates and slow finite-sample convergence.
The operator $\mathcal{T}^\pi_n$ removes this particular source of variance by collapsing the next-step
distribution using an upper-tail statistic.

\paragraph{Motivation 2: Essential supremum removes critic-induced bootstrap noise (conditional variance reduction).}
\label{appendix:lowvariance}
The essential supremum aggregates the next-step return distribution into a deterministic scalar:
\[
\operatorname{ess\,sup}\bigl(\nu^\pi(s',a')\bigr)
= \operatorname{ess\,sup}_{\epsilon\sim\rho} Q^\pi(s',a',\epsilon).
\]
This scalar is then used in the Bellman target under $\mathcal{T}^\pi_n$.

To isolate the effect of critic-induced randomness, fix a transition $(s,a,r,s')$ and a next action
$a'=\pi(s',\epsilon_p)$ for a given realization of $\epsilon_p$.
Under the standard distributional backup, the target
\[
Y_{\mathrm{std}} = r(s,a) + \gamma\, Q^\pi(s',a',\epsilon_v),\qquad \epsilon_v\sim\rho,
\]
retains randomness through $\epsilon_v$. Conditional on $(s',a')$, its variance is
\[
\mathrm{Var}\!\left(Y_{\mathrm{std}} \mid s',a'\right)
= \gamma^2\, \mathrm{Var}\!\left(Q^\pi(s',a',\epsilon_v)\mid s',a'\right).
\]
In contrast, the $\mathcal{T}^\pi_n$ target
\[
Y_n := r(s,a) + \gamma\, \operatorname{ess\,sup}_{\epsilon\sim\rho} Q^\pi(s',a',\epsilon)
\]
is deterministic conditional on $(s',a')$, hence
\[
\mathrm{Var}\!\left(Y_n \mid s',a'\right)=0.
\]
Therefore, $\mathcal{T}^\pi_n$ \emph{strictly reduces} the conditional variance attributable to bootstrap
noise from critic sampling. Importantly, this statement does \emph{not} claim that the overall target variance
is zero, since randomness from environment transitions $s'\sim P(\cdot\mid s,a)$ and policy noise
$a'=\pi(s',\epsilon_p)$ remains.

Variance control is central in stochastic approximation: finite-sample rates depend on the second moment
of the update noise~\cite{nemirovski2009robust,moulines2011non}, and variance-reduced targets can improve
sample efficiency in TD-style methods~\cite{bhandari2018finite,srikant2019finite}.

\paragraph{Motivation 3: Essential supremum yields a max-like, order-preserving utility for policy improvement.}
In actor--critic methods, the critic is used to rank actions and guide policy improvement.
In classical (risk-neutral) control, greedy policy improvement selects actions according to
\begin{equation}
\label{eq:greedy_classic_rewrite}
\pi_{\text{new}}(s)\in \arg\max_{a\in\mathcal{A}} Q^\pi(s,a),
\end{equation}
which follows directly from standard policy iteration and value-based control
methods~\cite{suttonbarto2018rl}.
More generally, optimal control in Markov decision processes is characterized by the Bellman
optimality operator
\begin{equation}
\label{eq:bellman_opt_rewrite}
(\mathcal{T}^\star Q)(s,a)
= r(s,a)+\gamma\,\mathbb{E}_{s'\sim P(\cdot\mid s,a)}\Big[\sup_{a'} Q(s',a')\Big],
\end{equation}
where the supremum is required for general (possibly infinite or continuous) action spaces.
This operator is monotone and order-preserving with respect to $Q$ under standard assumptions,
a fundamental property underpinning dynamic programming and reinforcement learning
theory~\cite{puterman1994mdp,bertsekas1996ndp}.

With a distributional critic, each action $a$ induces a return distribution
$\nu^\pi(s,a)=\bigl(Q^\pi(s,a,\cdot)\bigr)_\#\rho$.
Consequently, policy improvement requires mapping the return distribution
$\nu^\pi(s,a)$ to a scalar utility,
\[
U^\pi(s,a):=\mathcal{F}\!\bigl(\nu^\pi(s,a)\bigr),\qquad
\pi_{\text{new}}(s)\in \arg\max_{a\in\mathcal{A}} U^\pi(s,a).
\]
We choose the essential supremum functional
\begin{equation}
\label{eq:U_esssup_rewrite}
U^\pi_{\max}(s,a)
:= \operatorname{ess\,sup}\bigl(\nu^\pi(s,a)\bigr)
= \operatorname{ess\,sup}_{\epsilon\sim\rho} Q^\pi(s,a,\epsilon),
\end{equation}
as a principled extension of the classical greedy policy improvement rule.
In standard (risk-neutral) control, greedy improvement relies on maximizing scalar
action-values, and the Bellman optimality operator itself is defined through a supremum
over actions (Eq.~\eqref{eq:bellman_opt_rewrite}).
The essential supremum preserves this maximization structure when action-values are
represented as distributions rather than scalars: it reduces each return distribution
to a single score that is compatible with max-based, order-preserving policy improvement.
In this sense, $\operatorname{ess\,sup}$ serves as the natural distributional analogue
of the classical $\max$ operator, ensuring conceptual continuity between scalar and
distributional critics.

\vskip 1in

\begin{tcolorbox}[
  enhanced,
  frame hidden,
  colback=green!10!white,
  colbacktitle=green!25!white,
  coltitle=black,
  fonttitle=\bfseries,
  titlerule=0pt,
  borderline={2pt}{0pt}{forestgreen!50!white},
  arc=2mm,
  title={Summary of Theoretical Motivations in $\mathcal{T}_n^\pi$ Design}
]

\begin{itemize}
    \item \textbf{Implicit distribution representation.}
    $Q^\pi(s,a,\epsilon)$ represents return distributions without explicitly tracking
    multiple distributional statistics (e.g., multiple quantiles/expectiles).

    \item \textbf{Variance reduction in Bellman targets.}
    $\mathcal{T}_n^\pi$ removes critic-induced bootstrap noise, yielding lower-variance
    Bellman targets in temporal-difference updates.

    \item \textbf{Compatibility with greedy policy improvement.}
    $\operatorname{ess\,sup}$ preserves max-like policy improvement.
\end{itemize}

Together, these properties motivate $\mathcal{T}_n^\pi$ as a variance-reduced,
distribution-aware Bellman operator that remains faithful to the core principles of
greedy policy optimization.

\end{tcolorbox}

\newpage
\section{Theoretical Guarantees}

\subsection{Convergence of the proposed operator $\mathcal{T}^\pi_{n}$ (Theorem~\ref{thm:contraction})}
\label{appendix:contraction}

\begin{definition}[Supremum metric]
\label{def:metric}
Let $\mathcal{Q}$ denote the space of bounded, measurable functions
$Q:\mathcal{S}\times\mathcal{A}\times\mathbb{R}^d\to\mathbb{R}$.
We define the metric $d_\infty$ on $\mathcal{Q}$ by
\begin{equation}
d_\infty(Q_1,Q_2)
\;:=\;
\sup_{s\in\mathcal S,\;a\in\mathcal A,\;\epsilon\in\mathbb R^d}
\big|Q_1(s,a,\epsilon)-Q_2(s,a,\epsilon)\big|.
\notag
\end{equation}
\end{definition}

\begin{tcolorbox}[
  enhanced,
  frame hidden,
  colback=green!10!white,
  borderline={2pt}{0pt}{forestgreen!50!white},
  arc=2mm
]
\textbf{Theorem~\ref{thm:contraction} (Convergence of $\mathcal{T}^\pi_{n}$).}
The proposed operator $\mathcal{T}^\pi_{n}$ is a $\gamma$-contraction
on $(\mathcal{Q}, d_\infty)$ (Definition~\ref{def:metric}), and therefore, iterating $\mathcal{T}^\pi_{n}$ from any $Q\in\mathcal{Q}$
converges to a unique fixed point $Q^\pi$.
\end{tcolorbox}

\begin{proof}
Recall the definition of the proposed operator:
\begin{equation}
(\mathcal{T}^\pi_{n} Q)(s,a,\epsilon')
=
r(s,a)
+
\gamma\,\mathbb{E}_{s'\sim P(\cdot|s,a)}
\left[
\operatorname*{ess\,sup}_{\epsilon}
Q\!\left(s',\,\pi(s',\epsilon'),\,\epsilon\right)
\right].
\notag
\end{equation}
Since rewards are bounded and $Q\in\mathcal Q$ is bounded, $\mathcal{T}^\pi_{n}Q$
is also bounded, and hence $\mathcal{T}^\pi_{n}:\mathcal Q\to\mathcal Q$.

Let $Q_1,Q_2\in\mathcal Q$. Using Definition~\ref{def:metric}, we have
\begin{align}
    d_\infty(\mathcal{T}^\pi_{n} Q_1, \mathcal{T}^\pi_{n} Q_2) 
    &= \sup_{s, a, \epsilon'} \left| (\mathcal{T}^\pi_{n} Q_1)(s,a,\epsilon') - (\mathcal{T}^\pi_{n} Q_2)(s,a,\epsilon') \right| \notag \\
    \text{\textit{(Bounded rewards cancel)}} \quad &= \sup_{s, a, \epsilon'} \left| \gamma \mathbb{E}_{s'} \left[ \underset{\epsilon}{\text{ess sup}}\ Q_1(s', \pi(s',\epsilon'), \epsilon) - \underset{\epsilon}{\text{ess sup}}\ Q_2(s', \pi(s',\epsilon'), \epsilon) \right] \right| \notag \\
    \text{\textit{(Jensen's Inequality)}} \quad &\le \gamma \sup_{s, a, \epsilon'} \mathbb{E}_{s'} \left[ \left| \underset{\epsilon}{\text{ess sup}}\ Q_1(s', \pi(s',\epsilon'), \epsilon) - \underset{\epsilon}{\text{ess sup}}\ Q_2(s', \pi(s',\epsilon'), \epsilon) \right| \right] \notag \\
    \text{\textit{($|\sup f - \sup g| \le \sup |f - g|$)}} \quad & \le \gamma \sup_{s, a, \epsilon'} \mathbb{E}_{s'} \left[ \underset{\epsilon}{\text{ess sup}} \left| Q_1(s', \pi(s',\epsilon'), \epsilon) - Q_2(s', \pi(s',\epsilon'), \epsilon) \right| \right] \notag \\
    \text{\textit{(Bound by max error over entire domain)}} \quad & \le \gamma \sup_{s, a, \epsilon'} \mathbb{E}_{s'} \left[ \sup_{\hat{s}, \hat{a}, \hat{\epsilon}} |Q_1(\hat{s}, \hat{a}, \hat{\epsilon}) - Q_2(\hat{s}, \hat{a}, \hat{\epsilon})| \right] \notag
\end{align}
Since the inner expression of the last equation is uniformly bounded by
$d_\infty(Q_1,Q_2)$ over the entire domain, and the expectation of a constant
is the constant itself, we conclude
\begin{equation}
d_\infty(\mathcal{T}^\pi_{n}Q_1,\mathcal{T}^\pi_{n}Q_2)
\le
\gamma\, d_\infty(Q_1,Q_2).
\notag
\end{equation}

Thus, $\mathcal{T}^\pi_{n}$ is a $\gamma$-contraction on $(\mathcal Q,d_\infty)$.
Since $\mathcal Q$ equipped with the supremum norm is a complete metric space,
Banach’s Fixed Point Theorem~\cite{banach1922operations} guarantees the existence
of a unique fixed point $Q^\pi$, and that iterating $\mathcal{T}^\pi_{n}$
from any initial $Q\in\mathcal Q$ converges to $Q^\pi$.
\end{proof}

\newpage

\subsection{The Upper Expectile converges to the Essential Supremum (Theorem~\ref{thm:expectile_convergence})}
\label{appendix:expectile}

\begin{tcolorbox}[
  enhanced,
  frame hidden,
  colback=gray!10!white,
  borderline west={2pt}{0pt}{softgray!50!white},
  borderline east={2pt}{0pt}{softgray!10!white},
  borderline north={2pt}{0pt}{softgray!10!white},
  borderline south={2pt}{0pt}{softgray!10!white},
  arc=0mm
]
\begin{lemma}[\textbf{Basic Properties of Expectiles}]
\label{lem:expectile_properties}
Let $X$ be a real-valued random variable with $\mathbb{E}[X^2]<\infty$ and let $\kappa\in(0,1)$.
Define the asymmetric least-squares loss
\[
\mathcal L_2^\kappa(u)
:=|\kappa-\mathbf 1(u<0)|\,u^2
=
\kappa\,u_+^2+(1-\kappa)\,u_-^2,
\quad u_+:=\max\{u,0\},\ \ u_-:=\max\{-u,0\},
\]
and the $\kappa$-expectile
\[
e_\kappa(X)\;:=\;\arg\min_{q\in\mathbb R}\ \mathbb E\big[\mathcal L_2^\kappa(X-q)\big].
\]
Then:
\begin{enumerate}
\item[\textbf{(i)}] (\textbf{Mean as a special case}) $e_{1/2}(X)=\mathbb E[X]$.
\item[\textbf{(ii)}] (\textbf{Non-decreasing over $\kappa$}) The map $\kappa\mapsto e_\kappa(X)$ is non-decreasing on $(0,1)$.
\item[\textbf{(iii)}] (\textbf{Range bound}) If $X$ is essentially bounded with
$m\le \operatorname*{ess\,inf}X$ and $\operatorname*{ess\,sup}X\le M$, then $e_\kappa(X)\in[m,M]$.
\end{enumerate}
\end{lemma}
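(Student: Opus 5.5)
The plan is to work throughout with the first-order optimality condition. Set $F_\kappa(q):=\mathbb E[\mathcal L_2^\kappa(X-q)]=\kappa\,\mathbb E[(X-q)_+^2]+(1-\kappa)\,\mathbb E[(X-q)_-^2]$. This is finite because $\mathbb E[X^2]<\infty$. Since $u\mapsto u_\pm^2$ is convex and $C^1$, $F_\kappa$ is convex and differentiable. Dominated convergence, with domination by $2(|X|+|q|+1)$ near any fixed $q$, gives
\[
F_\kappa'(q)=-2\big(\kappa\,\mathbb E[(X-q)_+]-(1-\kappa)\,\mathbb E[(X-q)_-]\big).
\]
Set $g_\kappa(q):=\kappa\,\mathbb E[(X-q)_+]-(1-\kappa)\,\mathbb E[(X-q)_-]$.

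\textbf{Existence and uniqueness.} The map $q\mapsto \mathbb E[(X-q)_+]$ is non-increasing and $q\mapsto \mathbb E[(X-q)_-]$ is non-decreasing. I will show $g_\kappa$ is strictly decreasing. For $q_1<q_2$,
\[
\mathbb E[(X-q_1)_+]-\mathbb E[(X-q_1)_-]=\mathbb E[X]-q_1>\mathbb E[X]-q_2=\mathbb E[(X-q_2)_+]-\mathbb E[(X-q_2)_-].
\]
So at least one of the two monotone pieces changes strictly, and since both weights $\kappa,1-\kappa$ are positive, $g_\kappa(q_1)>g_\kappa(q_2)$. Moreover $g_\kappa(q)\to+\infty$ as $q\to-\infty$ and $g_\kappa(q)\to-\infty$ as $q\to+\infty$. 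By continuity and strict monotonicity, $g_\kappa$ has a unique zero. This zero is the unique minimizer $e_\kappa(X)$ of the convex function $F_\kappa$, so the $\arg\min$ is well defined.

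\textbf{(i)} At $\kappa=1/2$ the condition reads $\tfrac12(\mathbb E[(X-q)_+]-\mathbb E[(X-q)_-])=\tfrac12(\mathbb E[X]-q)=0$, so $q=\mathbb E[X]$.

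\textbf{(ii)} For fixed $q$, $g_\kappa(q)=\kappa\,\mathbb E|X-q|-\mathbb E[(X-q)_-]$ is non-decreasing in $\kappa$. Take $\kappa_1<\kappa_2$ and put $q_1=e_{\kappa_1}(X)$. Then $g_{\kappa_2}(q_1)\ge g_{\kappa_1}(q_1)=0$. Since $g_{\kappa_2}$ is strictly decreasing with zero at $e_{\kappa_2}(X)$, this forces $e_{\kappa_2}(X)\ge q_1$.

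\textbf{(iii)} Suppose $q>M\ge \operatorname{ess\,sup}X$. Then $(X-q)_+=0$ a.s. and $(X-q)_->0$ a.s., so $g_\kappa(q)=-(1-\kappa)\,\mathbb E[(X-q)_-]<0$. Symmetrically, $g_\kappa(q)>0$ for $q<m$. Since $g_\kappa$ is strictly decreasing with zero $e_\kappa(X)$, the zero must lie in $[m,M]$.

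The only delicate step is justifying the derivative formula, i.e.\ interchanging differentiation and expectation. The square-integrability assumption makes the domination bound above suffice.
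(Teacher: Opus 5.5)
Your proof is correct and follows essentially the paper's route: every part is read off the first-order condition $\kappa\,\mathbb E[(X-q)_+]=(1-\kappa)\,\mathbb E[(X-q)_-]$, and your argument for (iii) is the paper's. Your other steps are slightly tighter: the sign change of the strictly decreasing $g_\kappa$ gives existence, which the paper's appeal to strict convexity alone does not, and your monotonicity-in-$\kappa$ argument for (ii) avoids the paper's division by $\mathbb E[(q_i-X)_+]$, which vanishes when $X$ is almost surely constant.
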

\end{tcolorbox}

\begin{proof}
\textbf{Existence and uniqueness.}
Since $q\mapsto \mathcal L_2^\kappa(X-q)$ is convex for each $X$ and strictly convex in $q$ on any event
with positive probability (because the quadratic has strictly positive curvature on both sides),
the objective $q\mapsto \mathbb E[\mathcal L_2^\kappa(X-q)]$ is strictly convex on $\mathbb R$.
Hence, the minimizer $e_\kappa(X)$ exists and is unique.

\smallskip
\noindent\textbf{(i) Mean at $\kappa=\tfrac12$.}
For $\kappa=\tfrac12$,
\[
\mathcal L_2^{1/2}(X-q)=\tfrac12(X-q)^2,
\]
so the unique minimizer is $q=\mathbb E[X]$.

\smallskip
\noindent\textbf{A useful characterization (first-order condition).}
For $\kappa\in(0,1)$, differentiating the objective w.r.t.\ $q$ (valid since $\mathbb E[X^2]<\infty$)
yields the necessary and sufficient optimality condition at $q=e_\kappa(X)$:
\begin{equation}
\label{eq:expectile_foc}
\kappa\,\mathbb E\big[(X-q)_+\big]\;=\;(1-\kappa)\,\mathbb E\big[(q-X)_+\big],
\qquad q=e_\kappa(X).
\end{equation}
We will use Eq.\eqref{eq:expectile_foc} for (ii) and (iii).

\smallskip
\noindent\textbf{(ii) Non-decreasing over $\kappa$.}
Fix $\kappa_1<\kappa_2$ and denote $q_i:=e_{\kappa_i}(X)$.
Suppose for contradiction that $q_2<q_1$.
Note that
\[
A(q):=\mathbb E[(X-q)_+] \quad \text{is non-increasing in } q, \text{and}
\qquad
B(q):=\mathbb E[(q-X)_+] \quad \text{is non-decreasing in } q.
\]
Hence $q_2<q_1$ implies $A(q_2)\ge A(q_1)$ and $B(q_2)\le B(q_1)$.
Using the first-order condition Eq.\eqref{eq:expectile_foc} for each $\kappa_i$ gives
\[
\frac{A(q_i)}{B(q_i)}=\frac{1-\kappa_i}{\kappa_i},\qquad i\in\{1,2\}.
\]
But since $q_2<q_1$, we have
\[
\frac{A(q_2)}{B(q_2)} \;\ge\; \frac{A(q_1)}{B(q_1)}.
\]
On the other hand, $\kappa\mapsto \frac{1-\kappa}{\kappa}$ is strictly decreasing on $(0,1)$, so
\[
\frac{A(q_2)}{B(q_2)}=\frac{1-\kappa_2}{\kappa_2}
\;<\;
\frac{1-\kappa_1}{\kappa_1}
=
\frac{A(q_1)}{B(q_1)},
\]
a contradiction. Therefore $q_2\ge q_1$, proving that $\kappa\mapsto e_\kappa(X)$ is non-decreasing.

\smallskip
\noindent\textbf{(iii) Range bound.}
Assume $\operatorname*{ess\,sup}X\le M$. For any $q>M$, we have $X-q<0$ almost surely, so
$(X-q)_+=0$ and $(q-X)_+=q-X>0$ a.s., implying the left side of Eq.\eqref{eq:expectile_foc} is $0$
while the right side is strictly positive. Hence Eq.\eqref{eq:expectile_foc} cannot hold for $q>M$,
so $e_\kappa(X)\le M$. Similarly, if $\operatorname*{ess\,inf}X\ge m$, then for any $q<m$ we have
$(q-X)_+=0$ and $(X-q)_+=X-q>0$ a.s., so Eq.\eqref{eq:expectile_foc} cannot hold meaning that $e_\kappa(X)\ge m$.
Therefore, $e_\kappa(X)\in[m,M]$.
\end{proof}

\begin{tcolorbox}[
  enhanced,
  frame hidden,
  colback=green!10!white,
  borderline={2pt}{0pt}{forestgreen!50!white},
  arc=2mm
]
\textbf{Theorem~\ref{thm:expectile_convergence} (Upper Expectile Converges to the Essential Supremum)}
Let $s\in\mathcal{S}$, $a\in\mathcal{A}$, $\epsilon\sim\mathcal{N}(0,I_d)$, and $Q\in\mathcal{Q}$. For any $\kappa \in [\tfrac12, 1)$, $Z_{\kappa}:=\arg\min_{q\in\mathbb{R}}~\mathbb{E}_{{\epsilon}}\big[\mathcal{L}_2^{{\kappa}}({Q(s,a,\epsilon)}-q)\big]$ is bounded by:
\begin{equation}
\label{eq:expectile_sandwich}
Z_{1/2} ~\le~ Z_{\kappa} ~\le~ \lim_{\kappa \to 1^-} Z_{\kappa} = \operatorname{ess\,sup}_{\epsilon} Q(s,a,\epsilon).
\end{equation}
\end{tcolorbox}

\begin{proof}
We proceed in two steps: (i) establish the sandwich bounds and existence of the limit,
and (ii) identify the limit with the essential supremum.

\smallskip
\noindent\textbf{Step 1: Sandwich bounds and existence of the limit.}
Let
\[
X := Q(s,a,\epsilon), \qquad \epsilon\sim\mathcal N(0,I_d),
\]
and denote $Z_\kappa := e_\kappa(X)$.
By Lemma~\ref{lem:expectile_properties} \textbf{(ii)}, the map
$\kappa\mapsto Z_\kappa$ is non-decreasing on $(0,1)$.
In particular, for all $\kappa\in[\tfrac12,1)$,
\[
Z_{1/2} \le Z_\kappa \le \sup_{\kappa<1} Z_\kappa.
\]
Moreover, since $X$ is essentially bounded and
$M:=\operatorname*{ess\,sup}_{\epsilon} X<\infty$,
Lemma~\ref{lem:expectile_properties} \textbf{(iii)} implies
$Z_\kappa\le M$ for all $\kappa\in(0,1)$.
Therefore, the monotone limit
\[
Z^* := \lim_{\kappa\to1^-} Z_\kappa = \sup_{\kappa<1} Z_\kappa
\]
exists and satisfies $Z^*\le M$.
This proves the first two inequalities in Eq.\eqref{eq:expectile_sandwich}.

\smallskip
\noindent\textbf{Step 2: Identification of the limit.}
By the first-order optimality condition
(Lemma~\ref{lem:expectile_properties}, Eq.~\eqref{eq:expectile_foc}),
$Z_\kappa$ satisfies
\begin{equation}
\label{eq:foc_Zkappa}
\kappa\,\mathbb E[(X-Z_\kappa)_+]
=
(1-\kappa)\,\mathbb E[(Z_\kappa-X)_+].
\end{equation}
Since $X\in[m,M]$ and $Z_\kappa\in[m,M]$ almost surely,
we have $(Z_\kappa-X)_+\le M-m$, and thus
\[
\mathbb E[(Z_\kappa-X)_+] \le M-m.
\]
Substituting into \eqref{eq:foc_Zkappa} yields
\begin{equation}
\label{eq:pospart_to_zero}
0 \le \mathbb E[(X-Z_\kappa)_+]
=
\frac{1-\kappa}{\kappa}\,\mathbb E[(Z_\kappa-X)_+]
\le
\frac{1-\kappa}{\kappa}\,(M-m)
\xrightarrow[\kappa\to1^-]{} 0.
\end{equation}

Now suppose, for contradiction, that $Z^*<M$.
Then there exists $\delta>0$ such that $Z^*\le M-\delta$.
Since $Z^*$ is the non-decreasing limit of $Z_\kappa$, there exists $\kappa_0$ such that
$Z_\kappa\le M-\delta$ for all $\kappa\ge\kappa_0$.
Hence, for all such $\kappa$,
\[
(X-Z_\kappa)_+ \ge (X-(M-\delta))_+,
\quad\text{and therefore}\quad
\mathbb E[(X-Z_\kappa)_+] \ge \mathbb E[(X-(M-\delta))_+].
\]
By the definition of the essential supremum,
$\mathbb P(X>M-\delta)>0$, which implies
\[
C_\delta := \mathbb E[(X-(M-\delta))_+] > 0.
\]
Thus, for all $\kappa\ge\kappa_0$,
\[
\mathbb E[(X-Z_\kappa)_+] \ge C_\delta > 0,
\]
which contradicts Eq.\eqref{eq:pospart_to_zero}.
Therefore, the assumption $Z^*<M$ is false, and we conclude
\[
\lim_{\kappa\to1^-} Z_\kappa = M
= \operatorname*{ess\,sup}_{\epsilon} Q(s,a,\epsilon).
\]

Combining with Step~1 completes the proof of
Eq.\eqref{eq:expectile_sandwich}.
\end{proof}

\newpage

\subsection{Validity of Flow Anchoring as Behavior Regularization (Theorem~\ref{thm:br_validity})}
\label{appendix:br}
\begin{tcolorbox}[
  enhanced,
  frame hidden,
  colback=gray!10!white,
  borderline west={2pt}{0pt}{softgray!50!white},
  borderline east={2pt}{0pt}{softgray!10!white},
  borderline north={2pt}{0pt}{softgray!10!white},
  borderline south={2pt}{0pt}{softgray!10!white},
  arc=0mm
]
\begin{lemma}[Derivative of the Norm Bound]
\label{lem:norm_derivative_bound}
Let $e: [0,1] \to \mathbb{R}^d$ be an absolutely continuous function and let $g(t) := \|e(t)\|_2$. Then $g$ is absolutely continuous and its derivative satisfies:
\begin{equation}
    g'(t) \le \left\| \frac{d}{dt} e(t) \right\|_2 \quad \text{for almost every } t \in [0,1].
\end{equation}
\end{lemma}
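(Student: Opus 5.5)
The plan is to combine the reverse triangle inequality with the standard fact that a Lipschitz function is absolutely continuous. Both conclusions, absolute continuity of $g$ and the derivative bound, follow from the single pointwise inequality
\[
|g(t)-g(u)| = \bigl|\|e(t)\|_2-\|e(u)\|_2\bigr| \le \|e(t)-e(u)\|_2 ,
\]
which says that $g$ is the composition of $e$ with the $1$-Lipschitz map $x\mapsto\|x\|_2$.

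\textbf{Absolute continuity of $g$.} Fix $\varepsilon>0$ and let $\delta$ be the modulus from the absolute continuity of $e$. For any finite family of disjoint intervals $(t_i,u_i)\subset[0,1]$ with $\sum_i |u_i-t_i|<\delta$, the displayed inequality gives
\[
\sum_i |g(u_i)-g(t_i)| \le \sum_i \|e(u_i)-e(t_i)\|_2 < \varepsilon .
\]
This is exactly the definition of absolute continuity for $g$.

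\textbf{Derivative bound.} Since both $e$ and $g$ are absolutely continuous, each is differentiable almost everywhere. Choose the full-measure set of $t$ at which both derivatives exist. At such a point, divide the displayed inequality (with $u=t+h$) by $|h|$ to get
\[
\frac{|g(t+h)-g(t)|}{|h|} \le \left\| \frac{e(t+h)-e(t)}{h} \right\|_2 .
\]
Let $h\to 0$ and use continuity of the norm. This yields $|g'(t)|\le \|e'(t)\|_2$, and in particular $g'(t)\le\|e'(t)\|_2$.

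An alternative route is to compute $g'=\langle e,e'\rangle/\|e\|_2$ where $e\neq0$ and apply Cauchy--Schwarz. That approach forces a separate treatment of the zero set of $e$, which the difference-quotient argument avoids.

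The only delicate point is making sure the limit is taken at points where $g$ is actually differentiable. Restricting to the intersection of the two almost-everywhere differentiability sets handles this, so I do not expect a real obstacle.
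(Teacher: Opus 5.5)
Your proof is correct, and it follows a different route from the paper.

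The paper works pointwise on the set where $e$ is differentiable and $e(t)\neq 0$. There it differentiates $g(t)^2=\langle e(t),e(t)\rangle$ in two ways to get $\|e(t)\|_2\,g'(t)=\langle e(t),e'(t)\rangle$, applies Cauchy--Schwarz, and divides by $\|e(t)\|_2$. It dismisses the zero set of $e$ with an informal remark about generalized derivatives, and it asserts rather than proves that $g$ is absolutely continuous.

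Your argument gets both claims from the single inequality $\bigl|\|e(t)\|_2-\|e(u)\|_2\bigr|\le\|e(t)-e(u)\|_2$. This has several advantages:
\begin{itemize}
\item it supplies the proof of absolute continuity of $g$ that the paper omits;
\item it gives the stronger bound $|g'(t)|\le\|e'(t)\|_2$;
\item it treats points with $e(t)=0$ exactly like the others;
\item it uses no inner-product structure, so it holds verbatim for any norm on $\mathbb{R}^d$.
\end{itemize}

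The paper's route does buy the explicit formula $g'=\langle e,e'\rangle/\|e\|_2$ off the zero set, which shows the bound is tight when $e'$ is a nonnegative multiple of $e$. Its gap at the zeros can be closed directly. At an interior point where $e(t)=0$ and $g$ is differentiable, $g\ge 0$ attains its minimum, so $g'(t)=0$ and the inequality holds trivially.
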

\end{tcolorbox}

\begin{proof}
Since $e(t)$ is absolutely continuous, it is differentiable almost everywhere. At any point $t$ where $e(t)$ is differentiable and $e(t) \neq 0$, we apply the chain rule to the squared norm $g(t)^2 = \langle e(t), e(t) \rangle$:
\begin{equation}
    \frac{d}{dt} \big( g(t)^2 \big) = \frac{d}{dt} \langle e(t), e(t) \rangle = 2 \left\langle e(t), \frac{d}{dt} e(t) \right\rangle.
\end{equation}
On the other hand, applying the chain rule to the scalar function $g(t)^2$ directly yields:
\begin{equation}
    \frac{d}{dt} \big( g(t)^2 \big) = 2 g(t) g'(t) = 2 \|e(t)\|_2 \, g'(t).
\end{equation}
Equating the two expressions gives:
\begin{equation}
    \|e(t)\|_2 \, g'(t) = \left\langle e(t), \frac{d}{dt} e(t) \right\rangle.
\end{equation}
Using the Cauchy-Schwarz inequality, $\langle a, b \rangle \le \|a\|_2 \|b\|_2$, we obtain:
\begin{equation}
    \|e(t)\|_2 \, g'(t) \le \|e(t)\|_2 \left\| \frac{d}{dt} e(t) \right\|_2.
\end{equation}
Since we assumed $\|e(t)\|_2 > 0$, we can divide both sides by $\|e(t)\|_2$ to get:
\begin{equation}
    g'(t) \le \left\| \frac{d}{dt} e(t) \right\|_2.
\end{equation}
For the case where $e(t) = 0$, the inequality holds trivially if interpreted in the sense of generalized derivatives or by limits, as the minimum of the norm implies a derivative of zero or undefined (but bounded by the directional derivative). Since $e$ is absolutely continuous, this relation holds for almost every $t \in [0,1]$.
\end{proof}

\begin{tcolorbox}[
  enhanced,
  frame hidden,
  colback=gray!10!white,
  borderline west={2pt}{0pt}{softgray!50!white},
  borderline east={2pt}{0pt}{softgray!10!white},
  borderline north={2pt}{0pt}{softgray!10!white},
  borderline south={2pt}{0pt}{softgray!10!white},
  arc=0mm
]
\begin{lemma}[Differential Gr\"onwall inequality]
\label{lem:gronwall}
Let $g:[0,1]\to\mathbb R_{\ge 0}$ be absolutely continuous and suppose
\begin{equation}
\label{eq:gronwall_assumption}
\frac{d}{dt}g(t) \le L\,g(t) + b(t)\quad \text{for almost every }t\in[0,1],
\end{equation}
where $L\ge 0$ is a constant and $b(t)\ge 0$ is integrable. Then for all $t\in[0,1]$,
$g(t)\le e^{Lt}\,g(0) + \int_0^t e^{L(t-u)}\,b(u)\,du$.
In particular, if $g(0)=0$, then
\begin{equation}
\label{eq:gronwall_zero}
g(t)\le \int_0^t e^{L(t-u)}\,b(u)\,du \le e^{Lt}\int_0^t b(u)\,du.
\end{equation}
\end{lemma}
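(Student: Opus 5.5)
We will prove the differential Gr\"onwall inequality (Lemma~\ref{lem:gronwall}) with the standard integrating-factor argument. Define the auxiliary function $h(t) := e^{-Lt} g(t)$ on $[0,1]$. Since $g$ is absolutely continuous and $t\mapsto e^{-Lt}$ is smooth (in particular Lipschitz) on the compact interval $[0,1]$, the product $h$ is absolutely continuous. The product rule then holds almost everywhere:
\[
h'(t) = e^{-Lt}\bigl(g'(t) - L g(t)\bigr) \le e^{-Lt} b(t) \quad \text{for a.e. } t\in[0,1].
\]
The inequality uses the assumption in Eq.~\eqref{eq:gronwall_assumption} together with $e^{-Lt}>0$.

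Next we integrate. Because $h$ is absolutely continuous, the fundamental theorem of calculus for Lebesgue integrals gives $h(t) - h(0) = \int_0^t h'(u)\,du$. The integrand $e^{-Lu}b(u)$ is integrable, since $b$ is integrable and $e^{-Lu}\le 1$. Integrating the a.e.\ bound therefore yields
\[
e^{-Lt}g(t) - g(0) \le \int_0^t e^{-Lu} b(u)\,du .
\]
Multiplying by $e^{Lt}>0$ gives the first claim, $g(t)\le e^{Lt}g(0) + \int_0^t e^{L(t-u)} b(u)\,du$.

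For the special case $g(0)=0$, the first term vanishes. For $0\le u\le t$ and $L\ge 0$ we have $0\le t-u\le t$, so $e^{L(t-u)} \le e^{Lt}$. Using $b\ge 0$, we bound the integrand pointwise by $e^{Lt}b(u)$, which gives Eq.~\eqref{eq:gronwall_zero}.

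The only delicate step is justifying that $h$ is absolutely continuous and that the a.e.\ derivative integrates back to $h$. We cite two standard facts: a product of absolutely continuous functions on a compact interval is absolutely continuous, and every absolutely continuous function is the integral of its a.e.\ derivative. With these in hand, the rest is routine. Nonnegativity of $g$ is not needed for the argument itself; it matters only for how the lemma is applied to $g=\|e\|_2$ via Lemma~\ref{lem:norm_derivative_bound}.
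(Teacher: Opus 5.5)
Your proposal is correct and follows the paper's own argument: the same integrating factor $h(t)=e^{-Lt}g(t)$, the same a.e.\ derivative bound, and integration via the fundamental theorem of calculus for absolutely continuous functions. You also spell out the absolute-continuity justifications and the final bound $e^{L(t-u)}\le e^{Lt}$ (using $b\ge 0$), both of which the paper leaves implicit.
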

\end{tcolorbox}

\begin{proof}
Define $h(t):=e^{-Lt}g(t)$. Since $g$ is absolutely continuous, so is $h$, and for almost every $t$,
\begin{equation}
\frac{d}{dt}h(t)=e^{-Lt}\big(\frac{d}{dt}g(t)-L g(t)\big)\le e^{-Lt} b(t).\notag
\end{equation}
Integrating from $0$ to $t$ yields
$h(t)-h(0)\le \int_0^t e^{-Lu}b(u)\,du$,
and therefore,
$g(t)\le e^{Lt}g(0) + e^{Lt}\int_0^t e^{-Lu}b(u)\,du$.
\end{proof}

\newpage

\begin{definition}[Induced distributions $\mu_\omega,\mu_\theta$ by the one-step policy $\pi_\omega$ and the behavior flow policy $v_\theta$]
\label{def:behavior_flow}
For $s\in\mathcal{S}$, the one-step policy $\pi_\omega$ induces the distribution $\mu_\theta(\cdot|s)$:
\begin{equation}
\mu_\omega(\cdot|s):=(\pi_\omega(s,\cdot))_\# \mathcal N(0,I_d),
\notag
\end{equation}

modeling the action distribution of the one-step policy. Likewise, the behavior flow policy $v_\theta$ defines the ODE:
\begin{equation}
\label{eq:behavior_ode}
\frac{d x_t}{dt}=v_\theta(s,t,x_t),\quad t\in[0,1],
\notag
\end{equation}
where $x_t:=x_\theta(s,t,z)$ is the state of the flow at time $t$, following $v_\theta(s,t,x_t)$ with $t\sim\text{Unif}([0,1])$, starting from $x_0=x_\theta(s,0,z)=z\sim\mathcal N(0,I_d)$. The flow map $\Phi_\theta(s,z):=x_1=x_\theta(s,1,z)$ induces the distribution:
\begin{equation}
\mu_\theta(\cdot|s):=(\Phi_\theta(s,\cdot))_\# \mathcal N(0,I_d),
\notag
\end{equation}
which models the offline dataset behavior distribution.
\end{definition}

\begin{assumption}[Lipschitz behavior vector field]
\label{ass:lipschitz_br}
$L\ge 0$ exists for all $s\in\mathcal{S},t\in[0,1]$, and $x,y\in\mathcal A$, satisfying:
\begin{equation}
\label{eq:lipschitz_br}
\|v_\theta(s,t,x)-v_\theta(s,t,y)\|_2 \le L\|x-y\|_2.
\notag
\end{equation}
\end{assumption}

\begin{tcolorbox}[
  enhanced,
  frame hidden,
  colback=gray!10!white,
  borderline west={2pt}{0pt}{softgray!50!white},
  borderline east={2pt}{0pt}{softgray!10!white},
  borderline north={2pt}{0pt}{softgray!10!white},
  borderline south={2pt}{0pt}{softgray!10!white},
  arc=0mm
]
\begin{lemma}[Endpoint mismatch is controlled by the flow residual]
\label{lem:endpoint_bound_br}
Assume $v_\theta$ satisfies Assumption~\ref{ass:lipschitz_br}.
Let $s\in\mathcal S$, $t\in[0,1]$, $z\in\mathbb R^d$, and $x_\theta(s,t,z)$ solve $\frac{d}{dt} x_\theta(s,t,z)=v_\theta(s,t,x_\theta(s,t,z))$ with $x_\theta(s,0,z)=z$.
Let $y(s,t,z)$ be any absolutely continuous path with $y(s,0,z)=z$.
Then, the endpoint deviation satisfies:
\begin{equation}
\label{eq:lemma_endpoint}
\|y(s,1,z)-x_\theta(s,1,z)\|_2^2
\le
e^{2L}\int_0^1 \|\frac{d}{dt} y(s,t,z)-v_\theta(s,t,y(s,t,z))\|_2^2\,dt.
\end{equation}
\end{lemma}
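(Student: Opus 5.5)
Fix $s$ and $z$, and write $x(t):=x_\theta(s,t,z)$ and $y(t):=y(s,t,z)$. The plan is a standard Gr\"onwall argument on the error $e(t):=y(t)-x(t)$, using the two lemmas just proved. Since $y$ is absolutely continuous and $x$ solves an ODE with a Lipschitz (hence locally bounded) right-hand side, $e$ is absolutely continuous with $e(0)=z-z=0$. Define the residual
\[
r(t):=\tfrac{d}{dt}y(t)-v_\theta(s,t,y(t)).
\]
Then for almost every $t$,
\[
\tfrac{d}{dt}e(t)=r(t)+\big(v_\theta(s,t,y(t))-v_\theta(s,t,x(t))\big).
\]

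\textbf{Step 1.} Set $g(t):=\|e(t)\|_2$. By Lemma~\ref{lem:norm_derivative_bound}, the triangle inequality, and Assumption~\ref{ass:lipschitz_br},
\[
g'(t)\le \|r(t)\|_2+L\,g(t)\quad\text{for a.e. } t .
\]
We may assume $\int_0^1\|r\|_2^2<\infty$, since otherwise the claim is trivial; then $b(t):=\|r(t)\|_2$ is integrable. Because $g(0)=0$, Lemma~\ref{lem:gronwall} in the form \eqref{eq:gronwall_zero} gives
\[
g(1)\le \int_0^1 e^{L(1-u)}\|r(u)\|_2\,du\le e^{L}\int_0^1\|r(u)\|_2\,du .
\]

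\textbf{Step 2.} Square this bound and apply Cauchy--Schwarz (equivalently Jensen) on $[0,1]$, which has unit length:
\[
\Big(\int_0^1\|r\|_2\Big)^2\le\int_0^1\|r\|_2^2 .
\]
This yields $\|y(1)-x(1)\|_2^2\le e^{2L}\int_0^1\|r(t)\|_2^2\,dt$, which is \eqref{eq:lemma_endpoint}.

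\textbf{Main obstacle.} The substantive analytic point is Step 1. It needs the absolute continuity of $g$ and the a.e.\ derivative bound at points where $e(t)=0$, both of which Lemma~\ref{lem:norm_derivative_bound} supplies. It also needs integrability of $b$, which is handled by the reduction to a finite right-hand side.

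If one wants to avoid the zero set of $e$ entirely, I would instead apply the argument to $g_\eta:=\sqrt{\|e\|_2^2+\eta^2}$. This function is smooth in $e$ and satisfies the same differential inequality. Running Gr\"onwall on it gives $g_\eta(1)\le e^{L}\eta+e^{L}\int_0^1\|r\|_2$, and letting $\eta\to0$ recovers the bound.
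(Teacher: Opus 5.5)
Your proposal is correct and follows the paper's proof step for step. Both arguments take the same four steps: the decomposition $\frac{d}{dt}e = r + \big(v_\theta(s,t,y)-v_\theta(s,t,x_\theta)\big)$; Lemma~\ref{lem:norm_derivative_bound} with the Lipschitz assumption to get $g'\le \|r\|_2 + Lg$; Gr\"onwall with $g(0)=0$; and Cauchy--Schwarz on the unit interval. Your $\eta$-regularized $g_\eta=\sqrt{\|e\|_2^2+\eta^2}$ handles the zero set of $e$ more cleanly than the paper's informal remark there, but it is an optional refinement rather than a different route.
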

\end{tcolorbox}

\begin{proof}
Define the error $e(s,t,z):=y(s,t,z)-x_\theta(s,t,z)$, so $e(s,0,z)=0$. Using $\frac{d}{dt} x_\theta(s,t,z)=v_\theta(s,t,x_\theta(s,t,z))$:
\begin{align}
\frac{d}{dt} e(s,t,z)
&= \frac{d}{dt} y(s,t,z)-\frac{d}{dt} x_\theta(s,t,z)
= \frac{d}{dt} y(s,t,z)-v_\theta(s,t,x_\theta(s,t,z)) \nonumber\\
&= \underbrace{\big(\frac{d}{dt} y(s,t,z)-v_\theta(s,t,y(s,t,z))\big)}_{r(s,t,z)}
+ \underbrace{\big(v_\theta(s,t,y(s,t,z))-v_\theta(s,t,x_\theta(s,t,z))\big)}_{\Delta(s,t,z)}.\notag
\end{align}
Taking norms and applying Triangular inequality and Lipschitzness (Assumption~\ref{ass:lipschitz_br}) gives:
\begin{equation}
\label{eq:edot_bound}
\|\frac{d}{dt} e(s,t,z)\|_2 \le \|r(s,t,z)\|_2 + \|\Delta(s,t,z)\|_2
\le \|r(s,t,z)\|_2 + L\|e(s,t,z)\|_2.
\end{equation}

Let $g(s,t,z):=\|e(s,t,z)\|_2$. Since $e$ is absolutely continuous, $g$ is absolutely continuous and satisfies
$\frac{d}{dt}g(s,t,z)\le \|\frac{d}{dt} e(s,t,z)\|_2$ for almost every $t$. Combining this inequality with Eq.\eqref{eq:edot_bound} yields
\begin{equation}
\frac{d}{dt}g(s,t,z)\le \|\frac{d}{dt} e(s,t,z)\|_2\le \|r(s,t,z)\|_2 + L \|e(s,t,z)\|_2=\|r(s,t,z)\|_2 + L g(s,t,z) \quad \text{for almost every }t\in[0,1],\notag
\end{equation}
With $b(s,t,z)=\|r(s,t,z)\|_2$ and $g(s,0,z)=\|e(s,0,z)\|_2=0$, we can apply Lemma~\ref{lem:gronwall} by satisfying Eq.\eqref{eq:gronwall_assumption}.
Therefore, by Eq.\eqref{eq:gronwall_zero},
\begin{equation}
\label{eq:e1_bound}
\|e(s,1,z)\|_2 = g(s,1,z) \le e^{L}\int_0^1 \|r(s,t,z)\|_2\,dt.\notag
\end{equation}

Finally, Cauchy--Schwarz yields
\begin{equation}
\|e(s,1,z)\|_2^2 \le e^{2L} \left(\int_0^1 \|r(s,t,z)\|_2\,dt\right)^2 \le e^{2L}\int_0^1 \|r(s,t,z)\|_2^2\,dt.\notag
\end{equation}
Since $e(s,1,z)=y(s,1,z)-x_\theta(s,1,z)$ and $r(s,t,z)= \frac{d}{dt} y(s,t,z)-v_\theta(t,y(s,t,z),s)$, this proves Eq.\eqref{eq:lemma_endpoint}.
\end{proof}

\begin{tcolorbox}[
  enhanced,
  frame hidden,
  colback=green!10!white,
  borderline={2pt}{0pt}{forestgreen!50!white},
  arc=2mm
]
\textbf{Theorem~\ref{thm:br_validity} (Flow Anchoring is a Valid Behavior Regularization)}
Let $\mu_\omega(\cdot|s)$ and $\mu_\theta(\cdot|s)$ be the probability distributions induced by the policy $\pi_\omega$ and the behavior flow $v_\theta$ respectively (Definition~\ref{def:behavior_flow}). If $v_\theta$ satisfies Lipschitzness (Assumption~\ref{ass:lipschitz_br}), the following holds for all $s\in\mathcal{S}$:
\begin{equation}
\label{eq:w2_bound_dataset_main}
\mathbb E_{s\sim\mathcal D}\Big[W_2^2\!\left(\mu_\omega(\cdot|s),\,\mu_\theta(\cdot|s)\right)\Big]
\;\le\;
e^{2L}\,\mathcal L_B(\omega),
\end{equation}
where $W_2$ is the Wasserstein-2 distance and $L$ is the Lipschitz constant.
\end{tcolorbox}

\begin{proof}
Since $W_2$ is the infimum over all couplings, the following inequality holds with $\Phi_\theta$ following Definition~\ref{def:behavior_flow}:
\begin{equation}
\label{eq:w2_coupling_main}
W_2^2\!\left(\mu_\omega(\cdot|s),\,\mu_\theta(\cdot|s)\right):=\inf_{\gamma\in\Gamma(\mu_\omega,\mu_\theta)} \mathbb E_{(A,B)\sim\gamma}\big[\|A-B\|_2^2\big]
\le
\mathbb{E}_{z}\big[\|\pi_\omega(s,z)-\Phi_\theta(s,z)\|_2^2\big],
\notag
\end{equation}
where $\Gamma(\cdot,\cdot)$ is the set of couplings with the input marginals. Also, following Lemma~\ref{lem:endpoint_bound_br} with $y(s,t,z)=(1-t)z+t\pi_\omega(s,z)$ leads to:
\begin{align}
\label{eq:endpoint_bound_main}
\|y(s,1,z)-x_\theta(s,1,z)\|_2^2=\|\pi_\omega(s,z)-\Phi_\theta(s,z)\|_2^2
\le
e^{2L}\int_0^1 \|(\pi_\omega(s,z)-z)-v_\theta(s,t,(1-t)z+t\pi_\omega(s,z))\|_2^2\,dt.
\end{align}
Therefore,
\begin{align}
\mathbb E_{s\sim\mathcal D}\Big[W_2^2\!\left(\mu_\omega(\cdot|s),\,\mu_\theta(\cdot|s)\right)\Big]
\;\le\;
\mathbb{E}_{\substack{s\sim\mathcal{D},\\z\sim\mathcal{N}(0,I_d)}}\big[\|\pi_\omega(s,z)-\Phi_\theta(s,z)\|_2^2\big] \\
\le\;
e^{2L}\ \mathbb{E}_{\substack{s\sim\mathcal{D},\\z\sim\mathcal{N}(0,I_d),\\t\sim\text{Unif}([0,1])}}[\|(\pi_\omega(s,z)-z)-v_\theta(s,t,(1-t)z+t\pi_\omega(s,z))\|^2_2]
=e^{2L}\mathcal{L}_B(\omega)
\end{align}

\medskip
Given $s\in\mathcal{S}$, the equality holds when $\mu_\omega(\cdot|s)=\mu_\theta(\cdot|s)$ and all flow trajectories of the vector field $v_\theta$ are straight. This is because it is the case when $W_2^2(\mu_\omega(\cdot|s),\mu_\theta(\cdot|s))=0$ and $\pi_\omega(s,z)-z=v_\theta(s,t,(1-t)z+t\pi_\omega(s,z))$ satisfies for all $z\sim\mathcal{N}(0,I_d)$ and $t\sim\text{Unif}([0,1])$.
\end{proof}

\newpage
\section{Experimental Details}\label{appendix:full_experiments}We implement FAN using JAX~\cite{jax2018github}, building upon the code implementations of FQL~\cite{park2025fql} and Value Flows~\cite{dong2025vf}. We adopt these frameworks for two reasons: first, FQL provides the fastest training and inference speeds among flow policy-based methods; and second, Value Flows achieves the highest performance among distributional methods that utilize flow policies.

\subsection{Benchmarks}
\textbf{D4RL.} D4RL~\cite{fu2020d4rl} is a well-established standard for benchmarking offline RL algorithms. Specifically, we measure normalized returns to compare performance on the relatively harder tasks in this benchmark.
Therefore, we include $4$ \texttt{antmaze} tasks involving $8$-DoF locomotion, and $12$ \texttt{adroit} tasks involving dexterous manipulation (i.e., $\ge 24$-DoF).
\begin{enumerate}
\item Antmaze Datasets
\begin{itemize}
    \item \texttt{antmaze-medium-play-v2}
    \item \texttt{antmaze-medium-diverse-v2}
    \item \texttt{antmaze-large-play-v2}
    \item \texttt{antmaze-large-diverse-v2}
    \end{itemize}
\item Adroit Datasets
\begin{itemize}
    \item \texttt{pen-human-v1}
    \item \texttt{pen-cloned-v1}
    \item \texttt{pen-expert-v1}
    \item \texttt{door-human-v1}
    \item \texttt{door-cloned-v1}
    \item \texttt{door-expert-v1}
    \item \texttt{hammer-human-v1}
    \item \texttt{hammer-cloned-v1}
    \item \texttt{hammer-expert-v1}
    \item \texttt{relocate-human-v1}
    \item \texttt{relocate-cloned-v1}
    \item \texttt{relocate-expert-v1}
    \end{itemize}
\end{enumerate}
The Antmaze tasks require controlling a quadrupedal agent to reach a goal in a given maze. The Adroit tasks require learning complex skills such as spinning a pen, opening a door, relocating a ball, and using a hammer to hit a button.

\textbf{OGBench.} OGBench~\cite{park2024ogbench} was originally designed for offline goal-conditioned RL. However, this benchmark also provides single-task variants to benchmark standard reward-maximizing offline RL approaches. Therefore, we use 27 state-based and 4 pixel-based single-tasks in OGBench, particularly focusing on environments where prior offline RL methods struggle to achieve 100\% success rates. To label transition rewards in the dataset, these single-tasks apply semi-sparse reward functions, where the function is defined as the negative of the number of remaining subtasks at a given state. Locomotion tasks involve a single subtask, and the rewards are always $-1$ or $0$. Manipulation tasks normally include multiple subtasks, so the rewards are bounded by $-n_\text{subtask}$ (i.e., number of subtasks) and $0$.
The following state-based and pixel-based datasets are used in our offline RL experiments:
\begin{enumerate}
\item 1M-sized State-based Datasets (5 tasks each)
\begin{itemize}
\item \texttt{antsoccer-arena-navigate-v0}
\item \texttt{scene-play-v0}
\item \texttt{cube-double-play-v0}
\item \texttt{puzzle-3x3-play-v0}
\item \texttt{puzzle-4x4-play-v0}
\end{itemize}
\item 1M-sized Pixel-based Datasets (1 task each)\begin{itemize}
\item \texttt{visual-antmaze-medium-navigate-v0}
\item \texttt{visual-antmaze-teleport-navigate-v0}
\item \texttt{visual-cube-double-play-v0}
\item \texttt{visual-puzzle-4x4-play-v0}
\end{itemize}
\end{enumerate}
We utilize these datasets to evaluate diverse RL capabilities, ranging from standard offline learning to visual control. For standard benchmarks, we employ five 1M-sized state-based tasks: \texttt{antsoccer-arena-navigate} for quadrupedal ball dribbling, \texttt{scene-play} for long-horizon object interaction, \texttt{cube-double-play} for pick-and-place manipulation, and \texttt{puzzle-3x3/4x4-play} for combinatorial generalization on "Lights Out" puzzles. To test representation learning under partial observability, we include 1M-sized pixel-based variants (\texttt{visual-antmaze}, \texttt{visual-cube}, \texttt{visual-puzzle}) that require control solely from $64\times64\times3$ images.

\newpage

\subsection{Baseline Methods}
We compare FAN to six prior approaches. The first three include computationally efficient non-distributional approaches that report near state-of-the-art performance, and the latter three include distributional approaches using flow policies. Note that IQN and CODAC were originally proposed using Gaussian policies, but we modified the algorithms to use flow policies, leading to better performance in our experience.We fix learning rates ($3\mathrm{e}{-4}$) and target update rates ($5\mathrm{e}{-3}$), and use 8 seeds for state-based training and 4 seeds for pixel-based task training. For pixel-based tasks, we additionally use the IMPALA encoder~\cite{espeholt2018impala} for state representations.

\textbf{ReBRAC.} ReBRAC~\cite{tarasov2023rebrac} is an offline actor-critic algorithm building on TD3+BC~\cite{fujimoto2021minimalist} that incorporates architectural enhancements such as layer normalization and critic decoupling. The algorithm relies on two primary hyperparameters: $\alpha_1$, which controls the strength of the actor behavior cloning (BC) regularization, and $\alpha_2$, which governs the critic BC regularization. Consistent with the baselines in FQL and Value Flows, we directly report the results from \citet{park2025fql} and \citet{dong2025vf}. We report the best performance between using flow-based behavior regularization with 10 flow steps (i.e., FBRAC in \citet{park2025fql}) and the standard one in \citet{tarasov2023rebrac}.

\textbf{IDQL.} Implicit Diffusion Q-Learning (IDQL)~\cite{hansen2023idql} decouples value learning from policy extraction by combining IQL~\cite{kostrikov2021iql} with a diffusion-based behavior model. During inference, the agent samples $N$ action candidates and selects the one maximizing the learned Q-value. We also include IFQL~\cite{park2025fql} in this category, a variant that replaces the diffusion component with a flow matching policy. Consistent with other baselines, we report results directly from \citet{park2025fql} and \citet{dong2025vf}, selecting the best performance between IDQL and IFQL for each task. We use 10 steps for diffusion or flow policy sampling.

\textbf{FQL.} Flow Q-Learning (FQL)~\cite{park2025fql} utilizes a one-step flow policy to maximize Q-value estimates learned via standard TD error. FQL incorporates a behavioral regularization term with coefficient $\alpha$ towards a behavior-cloning flow policy (Eq.\ref{eq:prior_br}). We also directly report the results from \citet{park2025fql} and \citet{dong2025vf} that use 10 flow steps.

\textbf{IQN.} Implicit Quantile Networks (IQN)~\cite{dabney2018iqn} is a distributional RL method that approximates the return distribution by predicting quantile values at randomly sampled quantile fractions. Following \citet{dong2025vf}, we apply 10 flow step rejection sampling to the flow policy for inference, using 16 noise and 16 quantile samples. We perform a hyperparameter sweep for the temperature $\kappa$ in the quantile regression loss over the values $\{0.7, 0.8, 0.9, 0.95\}$.

\textbf{CODAC.} Conservative Offline Distributional Actor Critic (CODAC)~\cite{ma2021codac} augments the distributional critic of IQN with conservative constraints. Following \citet{dong2025vf}, we utilize a one-step flow policy regularized through actions sampled with 10 flow steps, which follows a DDPG-style policy extraction. We fix the conservative penalty coefficient to $0.1$ and tune the remaining hyperparameters by sweeping the quantile regression loss temperature $\kappa \in \{0.7, 0.8, 0.9, 0.95\}$ and the BC coefficient $\alpha_1 \in \{100, 300, 1000, 3000, 10000, 30000\}$.

\textbf{Value Flows.} Value Flows~\cite{dong2025vf} is a distributional RL algorithm that leverages flow matching to estimate the full distribution of future returns. By formulating a distributional flow matching objective, it learns a return vector field that satisfies the distributional Bellman equation. For offline policy extraction, it employs 10 flow step rejection sampling with a behavioral cloning flow policy to select actions that maximize expected returns. The key difference with FAN is that Value Flows uses 1-dimensional Gaussian noise to match with the dimensions of rewards. We sweep the regularization coefficient $\lambda\in\{0.3,1,3,10\}$ and the confidence weight temperature $\tau\in\{0.01,0.03,0.1,0.3,1\}$ for results not in \citet{dong2025vf}.

\textbf{FAN.} Following prior work, we standardize the architecture to [512, 512, 512, 512]-sized MLPs for all networks (e.g., one-step policy, value, behavioral flow policy). Also, we use a fixed expectile $\kappa=0.9$ across all tasks, and sweep only $\alpha_1$ and $\alpha_2$. For OGBench tasks, we sweep $\alpha_1\in\{10,30,100,300\}$ and $\alpha_2\in\{0,0.1,0.3,1,3\}$. For D4RL \texttt{antmaze} tasks, we sweep $\alpha_1\in\{1,3,10\}$ and $\alpha_2\in\{0,0.01,0.03,0.1,0.3,1\}$. For D4RL \texttt{adroit} tasks, we sweep $\alpha_1\in\{1000,3000,10000,30000\}$ and $\alpha_2\in\{0,0.1,0.3,1,3, 10\}$. Such selection is intended to maintain $\alpha_1$ similar to $\alpha$ in \citet{park2025fql}, and also similar to the hyperparameter choices in \citet{dong2025vf}.

\textbf{NBRAC, NFQL, FAQL.} For Ablation Study 1, we propose Noise-conditioned Behavior Regularized Actor Critic (NBRAC), a variant of ReBRAC using noise-conditioned critic. We maintain the behavior regularization of ReBRAC and substitute the standard Q-value update to $\mathcal{T}^\pi_n$.
For Ablation Study 1, we also propose Noise-conditioned Flow Q-Learning (NFQL), a variant of FQL using noise-conditioned critic. We maintain the behavior regularization of FQL and substitute the standard Q-value update to $\mathcal{T}^\pi_n$.
For Ablation Study 2, we propose Flow Anchored Q-Learning (FAQL), a variant of FQL using Flow Anchoring. We maintain the standard Q-value update and substitute the behavior regularization to Flow Anchoring.

\newpage

\subsection{Hyperparameters}
\label{appendix:hyperparameters}

\begin{table}[H]
  \caption{\textbf{Hyperparameter Configurations for FAN} shared across experiments.}
  \label{tab:fan_hyperparams}
  \begin{center}
    \begin{tabular}{ll}
      \toprule
      Hyperparameter & Value \\
      \midrule
      Learning rate & 0.0003 \\
      Optimizer & Adam~\cite{kingma2014adam} \\
      Offline Gradient steps & 1000000 (default), 500000 (D4RL, pixel-based OGBench) \\
      Offline-to-Online Gradient steps (Offline) & 1000000 \\
      Offline-to-Online Gradient steps (Online) & 1000000 \\
      Minibatch size & 256 \\
      MLP dimensions & $[512, 512, 512, 512]$ \\
      Nonlinearity & GELU~\cite{hendrycks2016gelu} \\
      Target network smoothing coefficient & 0.005 \\
      Expectile $\kappa$ & 0.9 \\
      Discount factor $\gamma$ & 0.995 (default), 0.99 (D4RL) \\
      Image augmentation probability & 0.5 \\
      Flow time sampling distribution & Unif($[0, 1]$) \\
      Number of Q ensembles & 2\\
      Number of Z ensembles & 2\\
      Target value aggregation & mean (default), min (D4RL, pixel-based OGBench) \\
      Actor BC coefficient $\alpha_1$ & See Tables~\ref{tab:hyperparams_baselines} to \ref{tab:hyperparams_offline_to_online} \\
      Critic BC coefficient $\alpha_2$ & See Tables~\ref{tab:hyperparams_baselines} to \ref{tab:hyperparams_offline_to_online} \\
      \bottomrule
    \end{tabular}
  \end{center}
  \vskip -0.1in
\end{table}

\begin{table}[H]
  \caption{\textbf{Detailed Hyperparameter Configurations for Offline Results.} We mostly take configurations from \citet{park2025fql} and \citet{dong2025vf}. For all baselines other than FAN, the discount factor $\gamma$ is 0.99 in default and 0.995 for \texttt{antsoccer}, \texttt{cube-double}, and \texttt{visual-cube-double} tasks. "-" indicates that the results are taken from prior work or excluded.}
  \label{tab:hyperparams_baselines}
  \begin{center}
    \begin{small}
    \begin{sc}
      \resizebox{\textwidth}{!}{%
        \begin{tabular}{llccccccccccc}
          \toprule
          & & \multicolumn{2}{c}{ReBRAC} & IDQL & FQL & IQN & \multicolumn{2}{c}{CODAC} & \multicolumn{2}{c}{Value Flows} & \multicolumn{2}{c}{FAN} \\
          \cmidrule(lr){3-4} \cmidrule(lr){5-5} \cmidrule(lr){6-6} \cmidrule(lr){7-7} \cmidrule(lr){8-9} \cmidrule(lr){10-11} \cmidrule(lr){12-13}
          Benchmark & Task & $\alpha_1$ & $\alpha_2$ & $N$ & $\alpha$ & $\kappa$ & $\kappa$ & $\alpha$ & $\lambda$ & $\tau$ & $\alpha_1$ & $\alpha_2$\\
          \midrule

          % D4RL Group
          \multirow{4}{*}{\shortstack[l]{D4RL\\(Antmaze)}}
            & antmaze-medium-play-v2 & - & - & - & 10 & 0.8 & 0.95 & 3 & 3 & 1 & 3 & 0.01 \\
            & antmaze-medium-diverse-v2 & - & - & - & 10 & 0.8 & 0.9 & 3 & 10 & 1 & 3 & 0.01 \\
            & antmaze-large-play-v2 & - & - & - & 3 & 0.9 & 0.95 & 3 & 3 & 1 & 3 & 0.03 \\
            & antmaze-large-diverse-v2 & - & - & - & 3 & 0.9 & 0.9 & 3 & 1 & 1 & 3 & 0.03 \\
          \midrule

          % D4RL Group
          \multirow{12}{*}{\shortstack[l]{D4RL\\(Adroit)}}
            & pen-human & - & - & 32 & 10000 & 0.8 & 0.8 & 10000 & 3 & 1 & 1000 & 1 \\
            & pen-cloned & - & - & 32 & 10000 & 0.8 & 0.8 & 10000 & 3 & 1 & 1000 & 0 \\
            & pen-expert & - & - & 32 & 3000 & 0.8 & 0.8 & 10000 & 3 & 0.01 & 1000 & 0 \\
            & door-human & - & - & 32 & 30000 & 0.9 & 0.9 & 10000 & 3 & 0.01 & 10000 & 10 \\
            & door-cloned & - & - & 32 & 30000 & 0.9 & 0.9 & 30000 & 10 & 0.3 & 3000 & 10 \\
            & door-expert & - & - & 32 & 30000 & 0.9 & 0.9 & 10000 & 10 & 0.3 & 3000 & 10 \\
            & hammer-human & - & - & 128 & 30000 & 0.7 & 0.8 & 30000 & 3 & 0.3 & 10000 & 1 \\
            & hammer-cloned & - & - & 32 & 10000 & 0.7 & 0.8 & 10000 & 3 & 0.3 & 10000 & 0.3 \\
            & hammer-expert & - & - & 32 & 30000 & 0.9 & 0.8 & 10000 & 10 & 1 & 10000 & 1 \\
            & relocate-human & - & - & 32 & 10000 & 0.9 & 0.9 & 30000 & 10 & 0.01 & 10000 & 10 \\
            & relocate-cloned & - & - & 32 & 30000 & 0.9 & 0.9 & 30000 & 3 & 0.01 & 10000 & 10 \\
            & relocate-expert & - & - & 32 & 30000 & 0.9 & 0.9 & 10000 & 3 & 0.1 & 30000 & 10 \\
          \midrule            

          % OGBench State Group
          \multirow{5}{*}{\shortstack[l]{OGBench\\(State-based)}}
            & antsoccer-arena-navigate & 0.01 & 0.01 & 32 & 10 & 0.9 & 0.95 & 10 & 1 & 1 & 10 & 0.1 \\
            & scene-play & 0.1 & 0.001 & 32 & 300 & 0.95 & 0.95 & 100 & 1 & 0.3 & 100 & 3 \\
            & cube-double-play & 0.1 & 0 & 32 & 100 & 0.9 & 0.95 & 300 & 1 & 3 & 100 & 0 \\
            & puzzle-3x3-play & 0.3 & 0.001 & 32 & 1000 & 0.8 & 0.95 & 100 & 0.5 & 0.3 & 100 & 3 \\
            & puzzle-4x4-play & 0.3 & 0.01 & 32 & 1000 & 0.95 & 0.95 & 1000 & 3 & 100 & 100 & 3 \\
          \midrule

          % OGBench Visual Group
          \multirow{4}{*}{\shortstack[l]{OGBench\\(Pixel-based)}}
            & visual-antmaze-medium-navigate & 0.01 & 0.003 & 32 & 100 & 0.9 & 0.9 & 10 & 0.3 & 0.03 & 10 & 0.1 \\
            & visual-antmaze-teleport-navigate & 0.01 & 0.003 & 32 & 100 & 0.8 & 0.95 & 3 & 0.3 & 0.03 & 10 & 0.3 \\
            & visual-cube-double-play & 0.1 & 0 & 32 & 100 & 0.9 & 0.95 & 100 & 1 & 0.3 & 100 & 0.1 \\
            & visual-puzzle-4x4-play & 0.3 & 0.01 & 32 & 300 & 0.9 & 0.9 & 100 & 1 & 0.3 & 100 & 0.1 \\
          \bottomrule
        \end{tabular}
      }
    \end{sc}
    \end{small}
  \end{center}
  \vskip -0.1in
\end{table}

\begin{table}[H]
  \caption{\textbf{Hyperparameter Configurations for Ablation Studies 1 (Flow Anchoring) and 2 ($\mathcal{T}_n^\pi$).} We use discount factor of $\gamma=0.995$, and follow the similar hyperparameter choices in Table~\ref{tab:hyperparams_baselines}.}
  \label{tab:hyperparams_ablation1}
  \begin{center}
    \begin{small}
    \begin{sc}
      \resizebox{\textwidth}{!}{%
        \begin{tabular}{llcccccccc}
          \toprule
          &  & \multicolumn{3}{c}{Non-Distributional} & \multicolumn{5}{c}{Distributional} \\
          \cmidrule(lr){3-5} \cmidrule(lr){6-10}
          &  & FQL & \multicolumn{2}{c}{FAQL (Flow Anchoring)} & \multicolumn{2}{c}{NBRAC (ReBRAC-style BC)} & NFQL (FQL-style BC) & \multicolumn{2}{c}{FAN (Flow Anchoring)} \\
          \cmidrule(lr){3-3} \cmidrule(lr){4-5} \cmidrule(lr){6-7} \cmidrule(lr){8-8} \cmidrule(lr){9-10}
          Benchmark & Task & $\alpha$ & $\alpha_1$ & $\alpha_2$ & $\alpha_1$ & $\alpha_2$ & $\alpha$ & $\alpha_1$ & $\alpha_2$ \\
          \midrule      

          % OGBench State Group
          \multirow{4}{*}{\shortstack[l]{OGBench\\(State-based)}}
            & antsoccer-navigate & 10 & 10 & 0.1 & 10 & 0.1 & 30 & 10 & 0.1 \\
            & scene-play & 1000 & 100 & 3 & 100 & 3 & 1000 & 100 & 3 \\
            & cube-double-play & 300 & 100 & 0 & 100 & 0 & 300 & 100 & 0 \\
            & puzzle-3x3-play & 1000 & 100 & 3 & 100 & 3 & 1000 & 100 & 3 \\
            & puzzle-4x4-play & 1000 & 100 & 3 & 100 & 3 & 1000 & 100 & 3 \\
          \bottomrule
        \end{tabular}
      }
    \end{sc}
    \end{small}
  \end{center}
  \vskip -0.1in
\end{table}

\begin{table}[H]
  \caption{\textbf{Hyperparameter Configurations for Ablation Study 3 (Offline-to-Online).} We use discount factor of $\gamma=0.995$ for results not present in prior work. "-" indicates that the results are taken from prior work or excluded, and $\rightarrow$ indicates the hyperparameter change for online training.}
  \label{tab:hyperparams_offline_to_online}
  \begin{center}
    \begin{small}
    \begin{sc}
      \resizebox{\textwidth}{!}{%
        \begin{tabular}{llccccccccccc}
          \toprule
          & & \multicolumn{4}{c}{Non-Distributional} & \multicolumn{6}{c}{Distributional} \\
          \cmidrule(lr){3-6} \cmidrule(lr){7-12} 
          & & \multicolumn{2}{c}{ReBRAC} & IDQL & FQL & IQN & \multicolumn{3}{c}{Value Flows} & \multicolumn{2}{c}{FAN} \\
          \cmidrule(lr){3-4} \cmidrule(lr){5-5} \cmidrule(lr){6-6} \cmidrule(lr){7-7} \cmidrule(lr){8-10} \cmidrule(lr){11-12}
          Benchmark & Task & $\alpha_1$ & $\alpha_2$ & $N$ & $\alpha$ & $\kappa$ & $\lambda$ & $\tau$ & $\alpha$ & $\alpha_1$ & $\alpha_2$\\
          \midrule        

          % OGBench State Group
          \multirow{5}{*}{\shortstack[l]{OGBench\\(State-based)}}
            & antsoccer-medium-navigate & 0.01 & 0.01 & 64 & 30 & 0.9 & 1 & 1 & 0 $\rightarrow$ 30 & 30 $\rightarrow$ 10 & 1 $\rightarrow$ 1 \\
            & scene-play & 0.1 & 0.01 & 32 & 300 & 0.95 & 1 & 0.3 & 0 $\rightarrow$ - & 100 $\rightarrow$ 10 & 3 $\rightarrow$ 0 \\
            & cube-double-play & 0.1 & 0 & 32 & 300 & 0.9 & 1 & 3 & 0 $\rightarrow$ - & 100 $\rightarrow$ 30 & 0 $\rightarrow$ 0 \\
            & puzzle-3x3-play & 0.3 & 0.01 & 32 & 1000 & 0.8 & 0.5 & 0.3 & 0 $\rightarrow$ 1000 & 100 $\rightarrow$ 10 & 3 $\rightarrow$ 0 \\
            & puzzle-4x4-play & 0.3 & 0.01 & 32 & 1000 & 0.95 & 3 & 100 & 0 $\rightarrow$ - & 100 $\rightarrow$ 100 & 3 $\rightarrow$ 0 \\
          \bottomrule
        \end{tabular}
      }
    \end{sc}
    \end{small}
  \end{center}
  \vskip -0.1in
\end{table}

\newpage

\subsection{Full Results}
\label{appendix:full_results}
\begin{table}[H]
  \caption{\textbf{Full Offline Results} on the reward-based OGBench and D4RL tasks stated in Table~\ref{tab:main_result}. The results are collected over 8 seeds for state-based and 4 seeds for pixel-based tasks. The numbers are bolded if they are above or equal to 95\% of the best performance.}
  \label{tab:full_result}
  \begin{center}
    \begin{small}
    \begin{sc}
      \resizebox{\textwidth}{!}{%
        \begin{tabular}{llccccccc} 
          \toprule
          & & \multicolumn{3}{c}{Non-Distributional} & \multicolumn{4}{c}{Distributional} \\
          \cmidrule(lr){3-5} \cmidrule(lr){6-9}
          Benchmark & Task & ReBRAC & IDQL & FQL & IQN & CODAC & Value Flows & FAN \\
          \midrule
          
          % D4RL Group
          \multirow{4}{*}{\shortstack[l]{D4RL\\(Antmaze)}} 
            & antmaze-medium-play-v2 & \textbf{90} & 84 & 78$\pm$7 & 38$\pm$5 & 82$\pm$2 & 16$\pm$3 & 82$\pm$3 \\
            & antmaze-medium-diverse-v2 & \textbf{84} & 85 & 71$\pm$13 & 40$\pm$4 & 10$\pm$2 & 10$\pm$5 & 76$\pm$3 \\
            & antmaze-large-play-v2 & 52 & 64 & \textbf{84}$\pm$7 & 51$\pm$5 & 71$\pm$3 & 12$\pm$2 & 77$\pm$5 \\
            & antmaze-large-diverse-v2 & 64 & 68 & \textbf{83}$\pm$4 & 55$\pm$3 & 22$\pm$5 & 30$\pm$10 & 70$\pm$5 \\
          \midrule
          
          % D4RL Group
          \multirow{12}{*}{\shortstack[l]{D4RL\\(Adroit)}} 
            & pen-human-v1 & \textbf{103} & 71$\pm$12 & 53$\pm$6 & 69$\pm$3 & 67$\pm$0 & 66$\pm$4 & 64$\pm$11 \\
            & pen-cloned-v1 & \textbf{103} & 80$\pm$11 & 74$\pm$11 & 80$\pm$11 & 76$\pm$2 & 73$\pm$5 & 90$\pm$9 \\
            & pen-expert-v1 & \textbf{152} & 139$\pm$5 & 142$\pm$6 & 118$\pm$19 & 136$\pm$2 & 117$\pm$3 & 138$\pm$6 \\
            & door-human-v1 & 0 & 7$\pm$2 & 0$\pm$0 & 0$\pm$0 & 3$\pm$1 & 7$\pm$2 & \textbf{8}$\pm$2 \\
            & door-cloned-v1 & 0 & 2$\pm$1 & 2$\pm$1 & 0$\pm$0 & 0$\pm$0 & 0$\pm$0 & \textbf{5}$\pm$3 \\
            & door-expert-v1 & \textbf{106} & \textbf{104}$\pm$2 & \textbf{104}$\pm$1 & \textbf{105}$\pm$0 & \textbf{104}$\pm$0 & \textbf{104}$\pm$1 & \textbf{104}$\pm$1 \\
            & hammer-human-v1 & 0 & \textbf{3}$\pm$1 & 1$\pm$1 & 2$\pm$1 & \textbf{3}$\pm$1 & 1$\pm$0 & \textbf{3}$\pm$1 \\
            & hammer-cloned-v1 & 5 & 2$\pm$1 & \textbf{11}$\pm$9 & 0$\pm$0 & 6$\pm$0 & 1$\pm$0 & 3$\pm$2 \\
            & hammer-expert-v1 & 134 & 117$\pm$9 & 125$\pm$3 & 121$\pm$7 & 126$\pm$1 & 125$\pm$5 & 115$\pm$5 \\
            & relocate-human-v1 & \textbf{0} & \textbf{0}$\pm$0 & \textbf{0}$\pm$0 & \textbf{0}$\pm$0 & \textbf{0}$\pm$0 & \textbf{0}$\pm$0 & \textbf{0}$\pm$1 \\
            & relocate-cloned-v1 & \textbf{2} & 0$\pm$0 & 0$\pm$0 & 0$\pm$0 & 0$\pm$0 & 0$\pm$0 & 0$\pm$0 \\
            & relocate-expert-v1 & \textbf{108} & \textbf{104}$\pm$3 & \textbf{107}$\pm$1 & \textbf{103}$\pm$0 & \textbf{103}$\pm$2 & 102$\pm$2 & \textbf{106}$\pm$1 \\
          \midrule
          
          \multirow{5}{*}{\shortstack[l]{OGBench\\(State-based)}} 
            & antsoccer-navigate-task1 & 0$\pm$0 & 61$\pm$25 & 77$\pm$4 & 30$\pm$5 & 24$\pm$18 & 56$\pm$8 & \textbf{89}$\pm$4 \\
            & antsoccer-navigate-task2 & 0$\pm$1 & 75$\pm$3 & \textbf{88}$\pm$3 & 14$\pm$7 & 63$\pm$19 & 39$\pm$10 & \textbf{91}$\pm$7 \\
            & antsoccer-navigate-task3 & 0$\pm$0 & 14$\pm$22 & \textbf{61}$\pm$6 & 34$\pm$12 & 25$\pm$8 & 7$\pm$3 & 49$\pm$8 \\
            & antsoccer-navigate-task4 & 0$\pm$0 & 16$\pm$9 & 39$\pm$6 & 27$\pm$9 & 32$\pm$15 & 21$\pm$7 & \textbf{49}$\pm$8 \\
            & antsoccer-navigate-task5 & 0$\pm$0 & 0$\pm$1 & \textbf{36}$\pm$9 & 16$\pm$5 & 19$\pm$4 & 10$\pm$7 & 21$\pm$14 \\
          \midrule
          
          \multirow{5}{*}{\shortstack[l]{OGBench\\(State-based)}} 
              & scene-play-task1 & \textbf{96}$\pm$8 & \textbf{98}$\pm$3 & \textbf{100}$\pm$0 & \textbf{100}$\pm$0 & \textbf{99}$\pm$0 & \textbf{99}$\pm$0 & \textbf{100}$\pm$0 \\
              & scene-play-task2 & 50$\pm$13 & 0$\pm$0 & 76$\pm$9 & 1$\pm$0 & 85$\pm$4 & \textbf{97}$\pm$1 & \textbf{96}$\pm$3 \\
              & scene-play-task3 & 78$\pm$4 & 54$\pm$19 & \textbf{98}$\pm$1 & 94$\pm$2 & 90$\pm$3 & 94$\pm$2 & 93$\pm$4 \\
              & scene-play-task4 & 4$\pm$4 & 0$\pm$0 & 5$\pm$1 & 3$\pm$1 & 0$\pm$0 & 7$\pm$17 & 0$\pm$0\\
              & scene-play-task5 & 0$\pm$0 & 0$\pm$0 & 0$\pm$0 & 0$\pm$0 & 0$\pm$0 & 0$\pm$0 & 0$\pm$0\\
          \midrule
          
          \multirow{5}{*}{\shortstack[l]{OGBench\\(State-based)}} 
            & cube-double-play-task1 & 47$\pm$11 & 35$\pm$9 & 61$\pm$9 & 70$\pm$14 & 80$\pm$11 & \textbf{97}$\pm$1 & 84$\pm$5 \\
            & cube-double-play-task2 & 22$\pm$12 & 9$\pm$5 & 36$\pm$6 & 24$\pm$9 & 63$\pm$4 & \textbf{76}$\pm$7 & 59$\pm$10 \\
            & cube-double-play-task3 & 4$\pm$1 & 8$\pm$5 & 22$\pm$5 & 25$\pm$6 & 66$\pm$9 & \textbf{73}$\pm$4 & 40$\pm$17 \\
            & cube-double-play-task4 & 1$\pm$1 & 1$\pm$1 & 5$\pm$2 & 10$\pm$1 & 13$\pm$2 & \textbf{30}$\pm$5 & 5$\pm$5 \\
            & cube-double-play-task5 & 4$\pm$2 & 17$\pm$6 & 19$\pm$10 & 81$\pm$8 & 82$\pm$4 & \textbf{69}$\pm$5 & 43$\pm$18 \\
          \midrule
          
          \multirow{5}{*}{\shortstack[l]{OGBench\\(State-based)}} 
            & puzzle-3x3-play-task1 & \textbf{97}$\pm$4 & 94$\pm$3 & 90$\pm$4 & 71$\pm$3 & 78$\pm$8 & \textbf{99}$\pm$0 & \textbf{100}$\pm$1\\
            & puzzle-3x3-play-task2 & 1$\pm$1 & 1$\pm$2 & 16$\pm$5 & 2$\pm$2 & 5$\pm$2 & \textbf{98}$\pm$2 & \textbf{100}$\pm$0\\
            & puzzle-3x3-play-task3 & 3$\pm$1 & 0$\pm$0 & 10$\pm$3 & 0$\pm$0 & 4$\pm$3 & \textbf{97}$\pm$1 & \textbf{99}$\pm$2\\
            & puzzle-3x3-play-task4 & 2$\pm$1 & 0$\pm$0 & 16$\pm$5 & 0$\pm$0 & 5$\pm$5 & 84$\pm$24 & \textbf{100}$\pm$1\\
            & puzzle-3x3-play-task5 & 5$\pm$3 & 0$\pm$0 & 16$\pm$3 & 0$\pm$0 & 6$\pm$5 & 58$\pm$39 & \textbf{99}$\pm$2\\
          \midrule
          
          \multirow{5}{*}{\shortstack[l]{OGBench\\(State-based)}} 
            & puzzle-4x4-play-task1 & 32$\pm$9 & 49$\pm$9 & 34$\pm$8 & 41$\pm$2 & 37$\pm$32 & 36$\pm$3 & \textbf{83}$\pm$4\\
            & puzzle-4x4-play-task2 & 16$\pm$4 & 4$\pm$4 & 16$\pm$5 & 12$\pm$4 & 10$\pm$10 & \textbf{27}$\pm$5 & 21$\pm$9\\
            & puzzle-4x4-play-task3 & 20$\pm$10 & 50$\pm$14 & 18$\pm$5 & 45$\pm$7 & 33$\pm$29 & 30$\pm$4 & \textbf{81}$\pm$13\\
            & puzzle-4x4-play-task4 & 10$\pm$3 & 21$\pm$11 & 11$\pm$3 & 23$\pm$2 & 12$\pm$10 & \textbf{28}$\pm$5 & 12$\pm$8\\
            & puzzle-4x4-play-task5 & 7$\pm$3 & 2$\pm$2 & 7$\pm$3 & \textbf{16}$\pm$6 & 10$\pm$8 & 13$\pm$2 & 13$\pm$16\\
          \midrule
          
          % OGBench Visual Group
          \multirow{4}{*}{\shortstack[l]{OGBench\\(Pixel-based)}} 
            & visual-antmaze-medium-task1 & 54$\pm$15 & 81$\pm$3 & 32$\pm$3 & 62$\pm$7 & \textbf{94}$\pm$1 & 77$\pm$4 & \textbf{92}$\pm$4\\
            & visual-antmaze-teleport-task1 & 2$\pm$0 & 7$\pm$4 & 2$\pm$1 & 2$\pm$1 & 3$\pm$3 & \textbf{10}$\pm$4 & 5$\pm$3\\
            & visual-cube-double-play-task1 & 6$\pm$2 & 8$\pm$6 & 23$\pm$4 & 4$\pm$1 & 3$\pm$2 & \textbf{35}$\pm$2 & \textbf{35}$\pm$15 \\
            & visual-puzzle-4x4-play-task1 & 26$\pm$6 & 8$\pm$15 & \textbf{33}$\pm$6 & 7$\pm$4 & 0$\pm$0 & 24$\pm$5 & 30$\pm$16 \\
          \bottomrule
        \end{tabular}
      } 
    \end{sc}
    \end{small}
  \end{center}
\end{table}

% \newpage

% \section{Further Ablation Studies}
% \label{appendix:more_ablation_studies}

% We provide three additional ablation studies for FAN. First, we examine the rationale behind maximizing both $Z_\psi$ and $Q_\phi$. Second, we analyze how performance varies with an increased number of noise samples for $Q_\phi$ training. Finally, we assess how varying $\kappa$ affects training.

\end{document}